\newcommand{\citep}[1]{\cite{#1}}
\renewcommand{\tilde}{\widetilde}
\renewcommand{\hat}{\widehat}
\def\beq{\begin{equation}}
\def\eeq{\end{equation}}
\def\beqa{\begin{eqnarray}}
\def\eeqa{\end{eqnarray}}
\def\beqan{\begin{eqnarray*}}
\def\eeqan{\end{eqnarray*}}
\def\N{{\mathbb{N}}}
\def\R{{\mathbb{R}}}
\def\diag{\mathop{\mathrm{diag}}}
\def\x{\times}
\newtheorem{theorem}{Theorem}
\newtheorem{lemma}{Lemma}
\newtheorem{assumption}{Assumption}
\def\phat{\hat{p}}
\def\qhat{\hat{q}}
\def\arr{\rightarrow}
\def\Exp{\mathbb{E}}
\def\var{\mathrm{var}}
\def\Cov{\mathrm{Cov}}
\def\Tr{\mathrm{Tr}}
\def\alphabar{\overline{\alpha}}
\def\etabar{\overline{\eta}}
\def\gammabar{\overline{\gamma}}
\def\tm1{t\! - \! 1}
\def\tp1{t\! + \! 1}
\def\km1{k\! - \! 1}
\def\kp1{k\! + \! 1}
\def\lp1{\ell\! + \! 1}
\def\lm1{\ell\! - \! 1}
\def\Lm1{L\! - \! 1}
\def\ip1{i\! + \! 1}
\def\im1{i\! - \! 1}
\newcommand{\zero}{\mathbf{0}}
\newcommand{\bbf}{\mathbf{b}}
\newcommand{\cbf}{\mathbf{c}}
\newcommand{\dbf}{\mathbf{d}}
\newcommand{\fbf}{\mathbf{f}}
\newcommand{\gbf}{\mathbf{g}}
\newcommand{\hbf}{\mathbf{h}}
\newcommand{\pbf}{\mathbf{p}}
\newcommand{\pbfhat}{\hat{\mathbf{p}}}
\newcommand{\qbf}{\mathbf{q}}
\newcommand{\qbfhat}{\hat{\mathbf{q}}}
\newcommand{\rbf}{\mathbf{r}}
\newcommand{\sbf}{\mathbf{s}}
\newcommand{\ubf}{\mathbf{u}}
\newcommand{\vbf}{\mathbf{v}}
\newcommand{\wbf}{\mathbf{w}}
\newcommand{\xbf}{\mathbf{x}}
\newcommand{\ybf}{\mathbf{y}}
\newcommand{\zbf}{\mathbf{z}}
\newcommand{\zbfhat}{\hat{\mathbf{z}}}
\newcommand{\Abf}{\mathbf{A}}
\newcommand{\Bbf}{\mathbf{B}}
\newcommand{\Gbf}{\mathbf{G}}
\newcommand{\Ibf}{\mathbf{I}}
\newcommand{\Kbf}{\mathbf{K}}
\newcommand{\Pbf}{\mathbf{P}}
\newcommand{\Qbf}{\mathbf{Q}}
\newcommand{\Rbf}{\mathbf{R}}
\newcommand{\Ubf}{\mathbf{U}}
\newcommand{\Vbf}{\mathbf{V}}
\newcommand{\Wbf}{\mathbf{W}}
\def\betabf{{\boldsymbol \beta}}
\def\lambdabar{\overline{\lambda}}
\def\Lambdabar{\overline{\Lambda}}
\def\Sigmabf{{\boldsymbol \Sigma}}
\def\xibf{{\boldsymbol \xi}}
\def\varphibf{{\boldsymbol \varphi}}
\newcommand{\phibf}{{\bm{\phi}}}
\newcommand{\mubar}{\overline{\mu}}
\newcommand{\tran}{^{\text{\sf T}}}
\def\eqd{\stackrel{d}{=}}
\def\PLeq{\stackrel{PL(2)}{=}}
\def\Norm{{\mathcal N}}
\def\Range{\mathrm{Range}}
\def\alphabar{\overline{\alpha}}
\def\Ecal{{\mathcal E}}
\newcommand{\bkt}[1]{{\langle #1 \rangle}}
\newcommand{\bktAuto}[1]{{\left\langle #1 \right\rangle}} 
\def\Gset{\mathfrak{G}}
\def\Gsetbar{\overline{\mathfrak{G}}}
\title{Inference in Deep Networks in High Dimensions}
\author{Alyson~K.~Fletcher and Sundeep Rangan%
  \thanks{A.~K.~Fletcher (email: akfletcher@ucla.edu) is with
          the Department of Statistics and Electrical Engineering,
          the University of California, Los Angeles, CA, 90095\@.
          Her work was supported in part by the National Science Foundation under
          Grants 1254204 and 1738286, and the Office of Naval Research under Grant
          N00014-15-1-2677.

          S. Rangan (email: srangan@nyu.edu) is with
          the Department of Electrical and Computer Engineering,
          New York University, Brooklyn, NY, 11201\@.
          His work was supported in part by the National Science Foundation
          under Grants 1116589, 1302336, and 1547332, as well as
          the industrial affiliates of NYU WIRELESS.}
}
\begin{document}

\maketitle

\begin{abstract}
Deep generative networks provide a powerful tool for modeling complex data
in a wide range of applications.  In inverse problems that use these networks
as generative priors on data, one must often perform inference of the inputs of the networks
from the outputs.  Inference is also required for sampling
during stochastic training of these generative models.
This paper considers inference in a deep stochastic neural network where the parameters
(e.g., weights, biases and activation functions)
are known and the problem is to estimate the values of the input and
hidden units from the output.
While several approximate algorithms have been proposed for this task,
there are few analytic tools that can provide rigorous guarantees on the reconstruction error.
This work presents a novel and computationally tractable output-to-input
inference method
called Multi-Layer Vector Approximate Message Passing (ML-VAMP).
The proposed algorithm, derived from expectation propagation, extends
earlier AMP methods that are known to achieve the replica predictions
for optimality in simple linear inverse problems.
Our main contribution shows that the mean-squared error of ML-VAMP can be exactly
predicted in a certain large system limit
where the numbers of layers is fixed and weight matrices are random
and orthogonally-invariant with dimensions that grow to infinity.
ML-VAMP is thus a principled method for output-to-input inference in deep networks with
a rigorous and precise performance achievability result in high dimensions.
\end{abstract}

\section{Introduction}

Deep neural networks are increasingly used for describing
probabilistic generative models of complex data such as images,
audio and text \citep{rezende2014stochastic,kingma2013auto,salakhutdinov2015learning}.
In these models,  data $\ybf$ is typically represented as 
the output of a feedforward
neural network with randomness in the input.  Randomness may also appear in the hidden layers.
This work considers the \emph{inference} problem for such a network,
where the parameters are known (i.e., already trained) and
we are to estimate the values of the inputs and hidden units
from output data values $\ybf$.

Inference tasks of this form arise in inverse
problems where a deep network is used as
a generative prior for the data (such as an image)
and additional layers are added to model the measurements (such as blurring, occlusion or noise)
\citep{yeh2016semantic,bora2017compressed}.
Inference can then be used to reconstruct the original image from the measurements
and provides an alternative to direct training for reconstruction~\citep{MousaviPB:15-Allerton}.
Also, in unsupervised learning of the parameters of a generative network,
one must sample from the posterior density of the hidden
variables to perform stochastic gradient descent or EM
\citep{rezende2014stochastic,kingma2013auto,sohl2015deep}.

While inference
is most commonly a feedforward operation (given a new input, we compute the output
of the network to make a classification or other prediction decision),
here we are considering the inference problem in the reverse direction where we need
to infer the input from the output.
Although optimal output-to-input inference is generally intractable due to the nonlinear nature of neural networks,
there are several methods that have worked well in practice.
For example, MAP estimation can be performed by gradient descent on the negative
log likelihood where the gradients can be computed efficiently from backpropagation and has been successful
for problems such as inpainting
\citep{yeh2016semantic,bora2017compressed}.  Approximate inference can also be performed
via a separate learned deep network as is done in variational autoencoders
\citep{rezende2014stochastic,kingma2013auto} and adversarial
networks \citep{dumoulin2016adversarially}.  See also, \citep{bengio2014deep}.
However, similar to the situation in deep learning in general,
there are few analytic tools for understanding how these algorithms perform
or how far the estimates are from optimal.  

In this work, we address this shortcoming by considering inference
based on approximate message passing (AMP)~\citep{DonohoMM:09,DonohoMM:10-ITW1}.
AMP methods are a class of expectation propagation (EP) 
techniques \citep{Minka:01}
that perform inference by attempting to minimize an approximation to the Bethe Free Energy
\citep{Krzakala:14-ISITbethe,rangan2017inference}.
In addition to their computational simplicity, AMP methods
have the benefit that the reconstruction error can be precisely
characterized in certain high-dimensional random settings.  Moreover, under further assumptions, they can provably obtain
the Bayesian optimal performance as predicted by the replica method
\citep{barbier2016mutual,reeves2016replica,tulino2013support} -- see, also 
\citep{RanganFG:12-IT}.
Since their original work in sparse linear inverse problems \cite{DonohoMM:09,DonohoMM:10-ITW1}, 
AMP techniques have been successfully used to obtain rigorous theoretical guarantees in a wide range of settings
including generalized linear models \cite{Rangan:11-ISIT}, clustering \citep{krzakala2013spectral},
finding hidden cliques \cite{deshpande2015finding} and  matrix factorization \cite{RanganF:12-ISIT}.

A recent extension of these
methods, called multi-layer AMP, has been proposed for inference in deep networks
\citep{manoel2017multi}.  That work characterizes the replica prediction for optimality
in multi-layer networks
and argues that the proposed ML-AMP method can achieve this optimal inference in certain scenarios.
Unfortunately, the convergence of ML-AMP in \citep{manoel2017multi} is not rigorously proven.
In addition, ML-AMP assumes Gaussian i.i.d.\ weight matrices $\Wbf_\ell$, and it is well-known
that AMP methods
often fail to converge when this assumption does not hold
\cite{RanSchFle:14-ISIT,Caltagirone:14-ISIT,Vila:ICASSP:15,manoel2015swamp,rangan2015admm}.

In this work, we propose a novel AMP method
called multi-layer vector AMP (ML-VAMP) that
builds on the recent VAMP method of \citep{rangan2016vamp} and its extensions to GLMs
in \citep{he2017generalized,schniter2017vector}.
The VAMP algorithm of \citep{rangan2016vamp} was itself derived
from the expectation consistent approximate inference framework of \citep{opper2004expectation,OppWin:05,fletcher2016expectation}
and applies to the special case of a linear problem.
The ML-VAMP algorithm proposed here extends the VAMP method to networks with multiple layers and nonlinearities.
Prior works in EP techniques for neural networks such as \citep{jylanki2014expectation,bui2016deep}
apply to the learning problem, not the inference problem considered here.

We analyze ML-VAMP in a setting where the
number of layers is fixed and the weight matrices are random and orthogonally
invariant with dimensions that grow to infinity.  This class is much larger
than the Gaussian i.i.d.\ matrices.  Importantly, it includes weight matrices with arbitrary condition
numbers, which is known to be the main failure mechanism in conventional AMP convergence
\citep{RanSchFle:14-ISIT}.  Our main theoretical contribution (Theorem~\ref{thm:semlvamp})
shows that the mean squared error (MSE) of
ML-VAMP algorithm can be precisely predicted by a simple set of scalar state evolution (SE) equations.
The SE equations relate the achievable MSE to the key parameters of the network including
the statistics of the weight matrices and bias vectors, their dimensions, the
noise terms and activation functions.  In this way, we develop a principled algorithm
that enables computationally tractable inference with rigorous analysis of its achievable performance
and convergence.

Our methods for analyzing inference algorithms bear some similarities
with related recent theoretical analyses in  deep learning.  
For example, \citep{patel2016probabilistic} have shown that deep networks can be interpreted as
max-sum inference on a certain deep rendering network.
The work \citep{SaxeMG:14arXiv} studies the dynamics of gradient descent learning
in a deep \emph{linear} network.  Interestingly, this work uses a singular value
decomposition (SVD) of the weight matrices in the analysis, which is
critical in our analysis as well.
Also, \citep{choromanska2015loss} uses a large random weight matrix model in analyzing the
the geometry of the loss function.  This model is similar to the large system limit considered here.   
However, these methods all consider the more challenging problem
of learning deep networks.  In the inference problem considered here, the parameters of the network are already known
and we only need to estimate the input and hidden states.  In this regard, 
we can think of this inference problem as a simpler starting point for understanding deep learning methods.

\section{ML-VAMP Algorithm} \label{sec:mlvamp}

\subsection{Algorithm Overview}

We consider the following $M$-layer stochastic generative neural network model for data:
A random input $\zbf_0$ with some density $p(\zbf_0)$ 
generates a sequence of vectors, $\zbf_\ell$, $\ell=1,\ldots,L$, $L=2M$,
through operations of the form,
\begin{subequations}  \label{eq:nn}
\begin{align}
    \zbf_\ell &= \Wbf_{\ell}\zbf_{\lm1} + \bbf_{\ell} + \xibf_\ell, \quad
        & \xibf_\ell \sim \Norm(\zero,\nu_\ell^{-1}\Ibf), \quad
        & \ell=1,3,\ldots,2M-1
        \label{eq:nnlin} \\
    \zbf_{\ell} &=  \phibf_\ell(\zbf_{\lm1},\xibf_{\ell}), \quad
        & \xibf_\ell \sim p(\xibf_\ell), \quad
        & \ell = 2,4,\ldots,2M.
        \label{eq:nnnonlin}
\end{align}
\end{subequations}
The updates \eqref{eq:nnlin} are the \emph{linear stages} of the network and are defined
by weight matrices $\Wbf_{\ell}$, bias vectors $\bbf_{\ell}$ and Gaussian noise
terms $\xibf_\ell$.
The updates \eqref{eq:nnnonlin} are the \emph{nonlinear stages} and are
defined with activation functions $\phi_\ell(\cdot)$ such as a sigmoid or
rectified linear unit (ReLU).
The vectors $\xibf_\ell$ are noise terms to model randomness in each stage.
The final output $\ybf=\zbf_L$ is the final (observed) data.
We consider the output-to-input inference problem, where we are 
to estimate all the hidden variables in the network $\zbf_\ell$, $\ell=0,\ldots,\Lm1$ from the output
$\ybf=\zbf_L$.
Importantly,
the weight matrices,
bias terms,
and
activation functions $\phi_\ell(\cdot)$ are known (i.e.\ already trained).
Thus, we are not looking
at the \emph{learning problem}.

The proposed ML-VAMP algorithm
for this inference problem
is shown in Algorithm~\ref{algo:ml-vamp}.
It
can be derived as
an extension of the GEC-SR algorithm of \citep{he2017generalized} which is used
for inference in a GLM, which is the special case of the multi-layer problem with $L=2$.
We assume a Bayesian setting where the initial condition $\zbf_0$ and noise terms $\xibf_\ell$ are independent
random vectors, so the sequence $\zbf_\ell$ in \eqref{eq:nn} is Markov.
The joint density of the variables $\zbf_\ell$ can then be written as
\beq \label{eq:pjoint}
    \pbf(\zbf_0,\ldots,\zbf_L) = p(\zbf_0) \prod_{\ell=0}^{\Lm1} p(\zbf_{\lp1}|\zbf_\ell),
\eeq
where the transition probabilities
$p(\zbf_{\lp1}|\zbf_\ell)$
are defined implicitly by the updates in \eqref{eq:nn}.
This density can be represented as a factor graph with $L+1$ factors corresponding to the terms
$p(\zbf_0)$ and $p(\zbf_{\ell+1}|\zbf_\ell)$, $\ell=0,\ldots,L-1$.  Since the
final variable $\zbf_L=\ybf$ is observed, we have $L$ hidden variables $\zbf_0,\ldots,\zbf_{\Lm1}$.
This creates a linear graph.  

Similar to the GEC-SR algorithm, the ML-VAMP algorithm shown in Algorithm~\ref{algo:ml-vamp}
passes messages in the forward and reverse directions
along the graph.  The values $\rbf^+_{k\ell}$ and $\gamma^+_{k\ell}$ represent the mean
and precision (inverse variance) values in the forward direction, and
$\rbf^+_{k\ell}$ and $\gamma^-_{k\ell}$ are the values in the reverse direction.
The formulae for the updates can be derived almost exactly the same as those in the GEC-SR
algorithm  \citep{he2017generalized} and are also similar to those in \citep{rangan2016vamp,fletcher2016expectation}.
We thus simply repeat the formulae without derivation.
To update the messages, at each factor node $\ell=1,\ldots,\Lm1$ in the ``middle" of the
factor graph, we compute a \emph{belief estimate}, which is a probability density
\beq \label{eq:bdef}
    b_\ell(\zbf_\ell,\zbf_{\lm1}|\rbf^+_{\lm1},\rbf^-_{\ell},\gamma^+_{\lm1},\gamma^-_{\ell})
     \propto \exp\left[ -H_\ell(\zbf_\ell,\zbf_{\lm1}) \right],
\eeq
where $H_\ell(\cdot)$ is the energy function
\beq \label{eq:Hdef}
    H_\ell(\zbf_\ell,\zbf_{\lm1}) := -\ln p(\zbf_\ell|\zbf_{\lm1}) +
        \frac{\gamma^-_{\ell}}{2}\|\zbf_\ell-\rbf^-_\ell\|^2
        + \frac{\gamma^+_{\lm1}}{2}\|\zbf_{\lm1}-\rbf^-_{\lm1} \|^2.
\eeq
At each iteration $k$, the belief estimates \eqref{eq:bdef} with the values $\rbf^+_{k,\lm1}$,
$\rbf^-_{k\ell}$, $\gamma^+_{k,\lm1}$ and $\gamma^-_{k\ell}$
represent the estimates
of the posterior density $p(\zbf_{\lm1},\zbf_{\ell}|\xbf)$.
We define the \emph{estimation functions} $\gbf^{\pm}_\ell(\cdot)$ as the functions
that compute the expected values of $\zbf_{\lm1}$ and $\zbf_\ell$ with respect to
these densities,
\begin{subequations} \label{eq:gexp}
\begin{align}
    & \gbf_\ell^+(\rbf^+_{\lm1},\rbf^-_{\ell},\gamma^+_{\lm1},\gamma^-_{\ell})
    = \Exp\left[ \zbf_\ell | \rbf^+_{\lm1},\rbf^-_{\ell},\gamma^+_{\lm1},\gamma^-_{\ell} \right], \\
    & \gbf_\ell^-(\rbf^+_{\lm1},\rbf^-_{\ell},\gamma^+_{\lm1},\gamma^-_{\ell})
    = \Exp\left[ \zbf_{\lm1} | \rbf^+_{\lm1},\rbf^-_{\ell},\gamma^+_{\lm1},\gamma^-_{\ell} \right],
\end{align}
\end{subequations}
where the expectations are with respect to the belief estimates \eqref{eq:bdef}.
For the end points $\ell=0$ and $L$ in the factor graph, we define the belief estimates
\[
    b_0(\zbf_0|\rbf^-_0,\gamma^-_0), \quad b_L(\zbf_{\Lm1}| \rbf^+_{\Lm1},\gamma^+_{\Lm1}),
\]
similar to \eqref{eq:bdef} but with the terms from the left of the $\ell=0$ factor node
or right of the $\ell=L$ factor node removed.

\begin{algorithm}[t]
\caption{ML-VAMP}
\begin{algorithmic}[1]  \label{algo:ml-vamp}
\REQUIRE{Forward estimation functions $g_\ell^+(\cdot)$, $\ell=0,\ldots,\Lm1$ and
reverse estimation functions $\gbf_\ell^-(\cdot)$, $\ell=1,\ldots,L$.}
\STATE{Initialize $\rbf^-_{0\ell}$, $\gamma^-_{0\ell}$}
\FOR{$k=0,1,\dots,N_{\rm it}-1$}

    \STATE{// Forward Pass }
    \FOR{$\ell=0,\ldots,\Lm1$}
        \IF{$\ell=0$}
        \STATE{$\zbfhat^+_{k\ell} =
            \gbf_\ell^+(\rbf^-_{k\ell},\gamma^-_{k\ell})$}
            \label{line:zp0}
        \STATE{$\alpha^+_{k\ell} = \bkt{\partial
            \gbf_\ell^+(\rbf^-_{k\ell},\gamma^-_{k\ell})/
            \partial \rbf^-_{k\ell}}$}
            \label{line:alphap0}
        \ELSE
            \STATE{$\zbfhat^+_{k\ell} =
            \gbf_\ell^+(\rbf^+_{k,\lm1},\rbf^-_{k\ell},\gamma^+_{k,\lm1},\gamma^-_{k\ell})$}
            \label{line:zp}
            \STATE{$\alpha^+_{k\ell} = \bkt{\partial
                \gbf_\ell^+(\rbf^+_{k,\lm1},\rbf^-_{k\ell},\gamma^+_{k\lm1},\gamma^-_{k\ell})/
                \partial \rbf^-_{k\ell}}$}
                \label{line:alphap}

        \ENDIF
        \STATE{$\gamma^+_{k\ell} = \eta^+_{k\ell} - \gamma^-_{k\ell}$,
            $\eta^+_{k\ell} = \gamma^-_{k\ell}/\alpha^+_{k\ell}$}
            \label{line:gamp}
        \STATE{$\rbf^+_{k\ell} = (\eta^+_{k\ell}\zbfhat^+_{k\ell} - \gamma^-_{k\ell}\rbf^-_{k\ell})/
            \gamma^+_{k\ell}$}
            \label{line:rp}
    \ENDFOR

    \STATE{// Reverse Pass }
    \FOR{$\ell=\Lm1,\ldots,0$}
        \IF{$\ell=\Lm1$}
            \STATE{$\zbfhat^-_{k\ell} =
                \gbf_{\lp1}^-(\rbf^+_{k\ell},\gamma^+_{k\ell},)$}
                \label{line:znL}
            \STATE{$\alpha^-_{k\ell} = \bkt{\partial
            \gbf_{\lp1}^-(\rbf^+_{k\ell},\gamma^+_{k\ell})/ \partial \rbf^+_{k\ell}}$}
                \label{line:alphanL}
        \ELSE
            \STATE{$\zbfhat^-_{k\ell} =
                \gbf_{\lp1}^-(\rbf^+_{k\ell},\rbf^-_{\kp1,\lp1},\gamma^+_{k\ell},\gamma^-_{\kp1,\lp1})$}
                \label{line:zn}
            \STATE{$\alpha^-_{k\ell} = \bkt{\partial
            \gbf_{\lp1}^-(\rbf^+_{k\ell},\rbf^-_{\kp1,\lp1},\gamma^+_{k\ell},\gamma^-_{\kp1,\lp1}) /
                \partial \rbf^+_{k\ell}}$}
                \label{line:alphan}
        \ENDIF
        \STATE{$\gamma^-_{\kp1,\ell} = \eta^-_{k\ell} - \gamma^+_{k\ell}$,
            $\eta^-_{k\ell} = \gamma^+_{k\ell}/\alpha^-_{k\ell}$}
            \label{line:gamn}
        \STATE{$\rbf^-_{\kp1,\ell} = (\eta^-_{k\ell}\zbfhat^-_{k\ell}
            - \gamma^+_{k\ell}\rbf^+_{k\ell})/ \gamma^+_{\kp1,\ell}$}
            \label{line:rn}
    \ENDFOR

\ENDFOR
\end{algorithmic}
\end{algorithm}

We also use the notation that for any vector $\ubf \in \R^N$,
$\bkt{\ubf} := (1/N) \sum_{n=1}^N u_n$
which is the empirical average over the components.  For a matrix $\Qbf \in \R^{N \x N}$ we let
$\bkt{\Qbf} = (1/N) \Tr(\Qbf)$ which is the average of the diagonal components.
Similar to the derivations in \citep{rangan2016vamp} and \citep{he2017generalized},
the derivatives $\alpha^{\pm}_{k\ell}$ are given by
\begin{subequations}  \label{eq:gderiv}
\begin{align}
    \frac{1}{\eta^+_{k\ell}} &=  \bkt{ \diag \Cov\left(\zbf_\ell|
        \rbf^+_{k,\lm1},\rbf^-_{k\ell},\gamma^+_{k\lm1},\gamma^-_{k\ell} \right) },
    \\
    \frac{1}{\eta^-_{k\ell}} &=  \bkt{ \diag \Cov\left(\zbf_\ell|
        \rbf^+_{k,\lm1},\rbf^-_{\kp1,\ell},\gamma^+_{k\lm1},\gamma^-_{\kp1,\ell} \right) },
\end{align}
\end{subequations}
and
\beq \label{eq:alphaderiv}
    \alpha_{k\ell}^+ = \frac{\gamma_{k\ell}^+}{\eta_{k\ell}^+}, \quad
    \alpha_{k,\lm1}^+ = \frac{\gamma_{k,\lm1}^+}{\eta_{k,\lm1}^-}
\eeq
Hence, the derivatives $\alpha_{k\ell}^{\pm}$ can be computed from the trace of the covariance
matrices under the belief estimates.
For the initial conditions, we set $\rbf^-_{0\ell} = \zero$ and $\gamma_{0\ell}^- = 0$ for all $\ell$.

\subsection{Estimation Functions for the Neural Network} \label{sec:nnest}

For the stochastic neural network \eqref{eq:nn}, the estimation functions have a particularly
simple form for both the linear and nonlinear stages.

\paragraph*{Estimation functions for the nonlinear stages}
Let $\ell=2,4,\ldots,L$ corresponding to a nonlinear stage \eqref{eq:nnnonlin}.
We will assume that the activation function $\phibf_\ell(\cdot)$ acts componentwise and the noise
$\xibf_\ell$ is i.i.d.\ meaning
\[
    z_{\ell,n} = \left[ \phibf_{\ell}(\zbf_{\lm1}, \xibf_\ell) \right]_n
    = \phi_\ell(z_{\lm1,n},\xi_{\ell n}), \quad p(\xibf_\ell) = \prod_{n=1}^{N_\ell} p(\xi_{\ell,n}),
\]
where $\phi_\ell(\cdot)$ is a scalar-valued function.  This model applies to many activation functions
in neural networks including ReLUs and sigmoids.
Under this assumption, the transition probability
factorizes as $p(\zbf_\ell|\zbf_{\lm1}) = \prod_n p(z_{\ell,n}|z_{\lm1,n})$ and therefore the
estimation functions also act componentwise,
\beq \label{eq:gnl}
    \left[ \gbf_\ell^+(\rbf^+_{\lm1},\rbf^-_{\ell},\gamma^+_{\lm1},\gamma^-_{\ell}) \right]_n
    = g_\ell^+(r^+_{\lm1,n},r^-_{\ell,n},\gamma^+_{\lm1},\gamma^-_{\ell})
    := \Exp\left[ z_{\ell,n} | r^+_{\lm1,n},r^-_{\ell,n},\gamma^+_{\lm1},\gamma^-_{\ell} \right],
\eeq
where the expectation is with respect to the scalar density,
\[
    b_{\ell}(z_{\lm1,n},z_{\ell,n}|r_{\lm1,n}^+,r_{\ell,n}^-,\gamma^+_{\lm1},\gamma^-_\ell)
        \propto \exp\left[ -H_{\ell,n}(z_{\lm1,n},z_{\ell,n}) \right], \quad
\]
where $H_{\ell,n}(\cdot)$ is the scalar energy function,
\beq \label{eq:Hdefsca}
    H_{\ell,n}(z_{\lm1,n},z_{\ell,n}) := -\ln p(z_{\ell,n}|z_{\lm1,n}) +
        \frac{\gamma^-_{\ell}}{2}\|z_{\ell,n}-r^-_{\ell,n}\|^2
        + \frac{\gamma^+_{\lm1}}{2}\|z_{\lm1,n}-r^-_{\lm1,n} \|^2.
\eeq
Hence, the estimation on the nonlinear stages can be evaluated by integration
of $N_\ell$ two dimensional densities.  The estimation function for the reverse direction $\gbf_\ell^-(\cdot)$
has a similar componentwise structure.

\paragraph*{Estimation functions for the linear stages}
Let $\ell=1,3,\ldots,\Lm1$.  Since the linear relation~\ref{eq:nnlin} is Gaussian,
the energy function \eqref{eq:Hdef} is quadratic and the belief estimate
\eqref{eq:bdef} is Gaussian.  Therefore, the expectation in \eqref{eq:gexp} and covariance
\eqref{eq:gderiv} can be computed  via a standard least squares problem.

For our analysis below, it will be easiest to write the solution to the least squares estimation
in terms of an SVD.
For $\ell=1,3,\ldots,\Lm1$, we assume that the weight matrix $\Wbf_\ell$ is given by,
\beq \label{eq:WSVD}
    \Wbf_{\ell} = \Vbf_{\ell}\Sigmabf_\ell\Vbf_{\lm1}, \quad
    \Sigmabf_{\ell} =
    \left[ \begin{array}{cc}
    \diag(\sbf_\ell) & \zero \\
    \zero & \zero  \end{array} \right] \in \R^{N_\ell \x N_{\lm1}},
\eeq
where the matrix $\Wbf_\ell$ has at most rank $R_{\ell}$,
$\sbf_\ell = (s_{\ell 1},\ldots,s_{\ell R_\ell})$ is the vector
of singular values and $\Vbf_\ell$ and $\Vbf_{\lm1}$ are orthogonal.
Also, let $\bar{\bbf}_\ell := \Vbf_\ell\tran\bbf_\ell$ and $\bar{\xibf}_\ell := \Vbf_\ell\tran \xibf_\ell$
so that
\beq \label{eq:bxibar}
    \bbf_\ell = \Vbf_\ell\bar{\bbf}_\ell, \quad
    \xibf_\ell = \Vbf_\ell\bar{\xibf}_\ell,
\eeq
Then, it shown in Appendix~\ref{sec:linestim} that
the linear estimation functions \eqref{eq:gexp} are  given by
\begin{subequations} \label{eq:glin}
\begin{align}
    \gbf^+_\ell(\rbf^+_{\lm1},\rbf^-_{\ell},\gamma_{\lm1}^+,\gamma_{\ell}^-) &=
    \Vbf_\ell\Gbf_\ell^+(\Vbf_{\lm1}\rbf^+_{\lm1},\Vbf_\ell\tran \rbf^-_{\ell},
        \sbf_\ell,\bar{\bbf}_\ell,\gamma_{\lm1}^+,\gamma_{\ell}^-), \\
    \gbf^-_\ell(\rbf^+_{\lm1},\rbf^-_{\ell},
        \gamma_{\lm1}^+,\gamma_{\ell}^-) &=
    \Vbf_{\lm1}\tran\Gbf_\ell^-(\Vbf_{\lm1}\rbf^+_{\lm1},\Vbf_\ell\tran \rbf^-_{\ell},
        \sbf_\ell,\bar{\bbf}_\ell,\gamma_{\lm1}^+,\gamma_{\ell}^-),
\end{align}
\end{subequations}
for some functions $\Gbf_\ell^{\pm}(\cdot)$ that act componentwise.

\section{State Evolution Analysis of ML-VAMP} \label{sec:seevo}

\subsection{Large System Limit Model} \label{sec:lsl}

Our main contribution is to rigorously analyze ML-VAMP in a certain
large system limit (LSL). The LSL analysis is widely-used in studying AMP algorithms
and their variants \citep{BayatiM:11,rangan2016vamp}.  The LSL model for ML-VAMP
is as follows.   We consider
a sequence of problems indexed by $N$.  The number of stages $L$ is fixed and,
the dimensions $N_\ell = N_\ell(N)$ and matrix ranks $R_\ell = R_\ell(N)$ in each stage
are deterministic functions of $N$.
We assume that
$\lim_{N \arr \infty} N_\ell/N$ and $\lim_{N \arr \infty} R_\ell/N$ converge to non-zero constants
so the dimensions grow linearly.
We follow the framework of Bayati-Montanari \citep{BayatiM:11}, and model
various sequences as deterministic but whose distributions converge empirically
-- See Appendix~\ref{sec:empconv} for a review of this framework.
Specifically, we assume that the initial condition $\zbf^0_0 \in \R^{N_0}$ and noise vectors
in the nonlinear stages
$\xibf_\ell$,  $\ell=2,4,\ldots,L$, converge empirically as
\beq \label{eq:varinitnl}
    \lim_{N \arr \infty} \left\{ z^0_{0,n} \right\} \PLeq Z^0_0, \quad
    \lim_{N \arr \infty} \left\{ \xi_{\ell,n} \right\} \PLeq \Xi_\ell, \quad \ell =2,4,\ldots,L
\eeq
to random variables $Z^0$ and $\Xi_\ell$.  For the linear stages $\ell=1,3,\ldots,\Lm1$,
let $\bar{\sbf}_\ell$ be the zero-padded singular value vector,
\beq \label{eq:sbar}
    \bar{s}_{\ell,n}= \begin{cases}
        s_{\ell,n} & \mbox{if } n=1,\ldots,R_\ell, \\
        0          & \mbox{if } n=R_\ell+1,\ldots,N_\ell,
    \end{cases}
\eeq
so that $\bar{\sbf}_\ell \in \N^\ell$.
We assume that zero-padded singular value vector $\bar{\sbf}_\ell$,
the transformed bias $\bar{\bbf}_\ell=\Vbf_\ell\tran\bbf_\ell$ and transformed noise
$\bar{\xibf}_\ell=\Vbf_\ell\tran\xibf_\ell$ converge empirically as
\beq \label{eq:varinitlin}
    \lim_{N \arr \infty} \left\{ \bar{s}_{\ell,n},\bar{b}_{\ell,n},\bar{\xi}_{\ell,n} \right\}
        \PLeq (\bar{S}_\ell, \bar{B}_\ell,\bar{\Xi}_\ell),     \quad \ell =1,3,\ldots,\Lm1,
\eeq
to independent random variables $\bar{S}_\ell$, $\bar{B}_\ell$ and $\bar{\Xi}_\ell$ with
$\bar{\Xi}_\ell \sim \Norm(0,\nu_\ell)$, where
$\nu_\ell$ is the  noise variance.  We assume that $\bar{S}_\ell \geq 0$ and bounded $\bar{S}_\ell \leq S_{max}$ for some
$S_{max}$.

The matrices $\Vbf_\ell$ are Haar distributed (i.e.\ uniform
on the set of $N_\ell \times N_\ell$ orthogonal matrices) where the matrices
$\Vbf_\ell$ are independent
of one another and the signals above.  For any linear stage $\ell$,
the weight matrix $\Wbf_\ell$, bias $\bbf_\ell$ and noise $\xibf_\ell$ are then generated
from \eqref{eq:WSVD} and \eqref{eq:bxibar}.
Finally, the vectors $\zbf^0_\ell$ in the neural network are generated from the recursions,
\begin{subequations}  \label{eq:nntrue}
\begin{align}
    \zbf^0_\ell &= \Wbf_{\ell}\zbf^0_{\lm1} + \bbf_{\ell} + \xibf_\ell,
    \quad \ell=1,3,\ldots,\Lm1
        \label{eq:nnlintrue} \\
    \zbf^0_{\ell} &=  \phibf_\ell(\zbf^0_{\lm1},\xibf_{\ell}), \quad \ell = 2,4,\ldots,L.
        \label{eq:nnnonlintrue}
\end{align}
\end{subequations}
Note that we have used the superscripted values, $\zbf^0_\ell$, to indicate the ``true"
values of $\zbf_\ell$. For each nonlinear stage $\ell=2,4,\ldots,L$,
we assume that the activation function $\phibf_\ell(\cdot)$ acts componentwise meaning
\beq \label{eq:phicomp}
    \left[ \phibf_\ell(\zbf_{\lm1},\xibf_\ell)\right]_n = \phi_\ell(z_{\lm1,n},\xi_{\ell,n}),
\eeq
for some scalar-valued function $\phi_\ell(\cdot)$.

Given the signals generated from the random model describe above,
we run the ML-VAMP algorithm (Algorithm~\ref{algo:ml-vamp}) using the estimation functions \eqref{eq:gexp}
matched to the true conditional densities $p(\zbf_\ell|\zbf_{\lm1})$.  Similar to
the analysis of the VAMP algorithm in \citep{rangan2016vamp}, one can study the ML-VAMP algorithm
under arbitrary Lipschitz-continuous estimation functions $\gbf^{\pm}_\ell(\cdot)$.  However, the state evolution
equations become more complicated.  For space considerations, we present only the SE equations in the
MMSE matched case.

\begin{algorithm}
\caption{ML-VAMP State Evolution}
\begin{algorithmic}[1]  \label{algo:mlvamp_se}

\REQUIRE{Random variables $Z^0_0$, $\Xi_\ell$, $\bar{B}_\ell$, $\bar{S}_\ell$, $\bar{\Xi}_\ell$.}

\STATE{}
\STATE{Initialize $\gammabar^-_{0\ell}=0$} \label{line:gaminit_mlse}
\STATE{$Q^0_0 = Z^0_0, \quad P_0 \sim \Norm(0,\tau^0_0),   \quad \tau^0_0 = \Exp(Q^0_0)^2$} \label{line:q0init_mlse}
\FOR{$\ell=1,2,\ldots,\Lm1$}
    \IF{$\ell$ is odd}
        \STATE{$Q^0_\ell=\bar{S}_\ell P^0_{\lm1} + \bar{B}_\ell + \bar{\Xi}_\ell$}
    \ELSE
        \STATE{$Q^0_\ell=\phi_\ell(P^0_{\lm1},\Xi_\ell)$}
    \ENDIF
    \STATE{$P^0_\ell = \Norm(0,\tau^0_\ell), \quad \tau^0_\ell = \Exp(Q^0_\ell)^2$} \label{line:p0init_mlse}
\ENDFOR
\STATE{}

\FOR{$k=0,1,\dots$}
    \STATE{// Forward Pass }
    \STATE{$\etabar_{k0}^+ = 1/\Ecal_0^+(\gammabar^-_{k0})$}  \label{line:etap0_mlse}
    \STATE{$\gammabar^+_{k0} = \etabar_{k0}^+ - \gammabar^-_{k0}, \quad \alphabar_{k0}^+ = \gammabar^+_{k0}/\etabar_{k0}^+$}
             \label{line:gamp0_mlse}

    \FOR{$\ell=1,\ldots,L-1$}
        \STATE{$\etabar_{k\ell}^+ = 1/\Ecal_0^+(\gammabar^+_{k,\lm1},\gammabar^-_{k\ell},\tau^0_{\lm1})$} \label{line:etap_mlse}
        \STATE{$\gammabar^+_{k\ell} = \etabar_{k\ell}^+ - \gammabar^-_{k\ell}, \quad \alphabar_{k\ell}^+ = \gammabar^+_{k\ell}/\etabar_{k\ell}^+$}
            \label{line:gamp_mlse}
    \ENDFOR
    \STATE{}
    \STATE{// Reverse Pass }
    \STATE{$\etabar_{k,\Lm1}^- = 1/\Ecal_L^-(\gammabar^+_{k,\Lm1})$}     \label{line:etan0_mlse}
    \STATE{$\gammabar^-_{k,\Lm1} = \etabar_{k,\Lm1}^+ - \gammabar^+_{k,\Lm1}, \quad
              \alphabar_{k,\Lm1}^- = \gammabar^-_{k,\Lm1}/\etabar_{k,\Lm1}^+$}
             \label{line:gamn0_mlse}

    \FOR{$\ell=\Lm1,\ldots,0$}
        \STATE{$\etabar_{k,\lm1}^- = 1/\Ecal_{\ell}^-(\gammabar^+_{k,\lm1},\gammabar^-_{k\ell},\tau^0_{\lm1})$}  \label{line:etan_mlse}
        \STATE{$\gammabar^-_{k,\lm1} = \etabar_{k,\lm1}^- - \gammabar^+_{k,\lm1},
        \quad \alphabar_{k,\lm1}^- = \gammabar^+_{k,\lm1}/\etabar_{k,\lm1}^-$}
        \label{line:gamn_mlse}
    \ENDFOR

\ENDFOR

\end{algorithmic}
\end{algorithm}

\subsection{State Evolution Equations} \label{sec:seeqn}
Define the quantities
\begin{align}  \label{eq:pq0}
\begin{split}
    \qbf^0_\ell &:= \zbf^0_\ell, \quad \pbf^0_\ell := \Vbf_\ell \qbf^0_\ell = \Vbf_\ell \zbf^0_\ell \quad \ell=0,2,\ldots,L \\
    \qbf^0_\ell &:= \Vbf_\ell\tran\zbf^0_\ell, \quad \pbf^0_\ell := \zbf^0_\ell = \Vbf_\ell \qbf^0_\ell, \quad \ell=1,3,\ldots,\Lm1
\end{split}
\end{align}
which represent the true vectors $\zbf^0_\ell$ and their transforms.  Similarly, define the ML-VAMP estimates
\begin{subequations} \label{eq:pqhat}
\begin{align}
    \qbfhat^{\pm}_{k\ell} &:= \zbfhat^{\pm}_{k\ell}, \quad \pbfhat^{\pm}_{k\ell} := \Vbf_\ell\zbfhat^{\pm}_{k\ell} \quad \ell=0,2,\ldots,L \\
    \qbfhat^{\pm}_{k\ell} &:= \Vbf_\ell\tran\zbfhat^{\pm}_{k\ell}, \quad \pbfhat^{\pm}_{k\ell} := \zbfhat^{\pm}_{k\ell} \quad \ell=1,3,\ldots,\Lm1.
\end{align}
\end{subequations}
Our goal is to describe the mean squared error of these estimates in the LSL.
To this end, similar to those in VAMP \citep{rangan2016vamp},
we introduce the concept of \emph{error functions}.
Let $\ell=2,4,\ldots,L-2$ be the index of a nonlinear stage
and suppose that we are given parameters $\gamma_{\lm1}^+$, $\gamma_{\ell}^-$ and $\tau^0_{\lm1}$.
Define a set of random variables $(R^+_{\lm1},Z^0_{\lm1},Z^0_\ell,R^-_\ell)$ by the Markov chain,
\begin{align} \label{eq:rznl}
\begin{split}
    R^+_{\lm1} &\sim \Norm(0,\tau^0_{\lm1}-1/\gamma_{\lm1}^+), \quad Z^0_{\lm1} \sim \Norm(R^+_{\lm1},1/\gamma_{\lm1}^-), \\
    Z^0_{\ell} &= \phi_\ell(Z^0_{\lm1},\Xi_\ell), \quad
    R^-_\ell \sim Z^0_\ell + \Norm(0,1/\gamma^-_\ell).
\end{split}
\end{align}
Thus, $Z^0_{\lm1}$ and $Z^0_\ell$ represent inputs and outputs of the activation function for the $\ell$-th stage
and $R^+_{\lm1}$ and $R^-_{\ell}$ are noisy observations of these inputs and outputs.  Define the
error functions
\beq \label{eq:Ecalnl}
    \Ecal_\ell^+(\gamma_{\lm1}^+,\gamma_\ell^-,\tau^0_{\lm1}) := \var(Z^0_{\ell}|R^+_{\lm1},R^-_{\ell}), \quad
    \Ecal_\ell^-(\gamma_{\lm1}^+,\gamma_\ell^-,\tau^0_{\lm1}) := \var(Z^0_{\lm1}|R^+_{\lm1},R^-_{\ell}),
\eeq
which represent the error variances in estimating the inputs and outputs.  For $\ell=0$, we can define
$\Ecal_0^+(\gamma_0^-)$ by dropping the terms associated with $R^+_{\lm1}$ and $Z^0_{\lm1}$.
For $\ell = L$, we define $\Ecal_{\Lm1}^-(\gamma_{\Lm1}^+,\tau^0_{\Lm1})$ by dropping the terms associated with
$R^-_\ell$.  Next, let $\ell=1,3,\ldots,\Lm1$ be the index of a linear stage, and consider a Markov chain,
\begin{align} \label{eq:rzlin}
    \bar{R}^+_{\lm1} &\sim \Norm(0,\tau^0_{\lm1}-1/\gamma_{\lm1}^+), \quad P^0_{\lm1} \sim \Norm(\bar{R}^+_{\lm1},1/\gamma_{\lm1}^-), \\
    Q^0_{\ell} &= \bar{S} P^0_{\lm1} + \bar{B} + \bar{\Xi}_\ell, \quad \bar{R}^-_\ell \sim Q^0_\ell + \Norm(0,1/\gamma^-_\ell),
\end{align}
which represents the inputs and outputs of a scalar linear channel with parameters $\bar{S}$, $\bar{B}$ and $\bar{\Xi}_\ell$
given from variables \eqref{eq:varinitlin}.  Define
\begin{align} \label{eq:Ecallin}
\begin{split}
    \Ecal_\ell^+(\gamma_{\lm1}^+,\gamma_\ell^-,\tau^0_{\lm1}) &:= \var(Q^0_{\ell}|\bar{R}^+_{\lm1},\bar{R}^-_{\ell},\bar{S}_\ell,\bar{B}_\ell), \\
    \Ecal_\ell^-(\gamma_{\lm1}^+,\gamma_\ell^-,\tau^0_{\lm1}) &:= \var(P^0_{\lm1}|\bar{R}^+_{\lm1},\bar{R}^-_{\ell},\bar{S}_\ell,\bar{B}_\ell),
\end{split}
\end{align}
Under these definitions, the SE equations for ML-VAMP are given in Algorithm~\ref{algo:mlvamp_se} which defines
a sequence of random variables and constants.

\begin{theorem} \label{thm:semlvamp}
Consider the outputs of the ML-VAMP algorithm, Algorithm~\ref{algo:ml-vamp}
and the corresponding outputs of the SE equations in Algorithm~\ref{algo:mlvamp_se}.
In addition to the assumptions in Section~\ref{sec:lsl}, assume:
\begin{enumerate}[(i)]
\item The constants  $\alphabar^{\pm}_{k\ell} \in (0,1)$ for all $k$ and $\ell$.
\item The activation functions $\phi_\ell(z_{\lm1},\xi_\ell)$ in \eqref{eq:phicomp} are pseudo-Lipschitz continuous of order two.
\item The component estimation functions $g^{\pm}_\ell(r^+_{\lm1},r^-_\ell,\gamma^+_{\lm1},\gamma^-_\ell)$ in \eqref{eq:gnl}
are uniformly Lipschitz continuous in $(r^+_{\lm1},r^-_\ell)$ at
$(\gamma^+_{\lm1},\gamma^-_\ell) = (\gammabar^+_{\lm1},\gammabar^-_\ell)$.
\end{enumerate}
Then, for any fixed iteration $k$ and index $\ell$,
\beq
    \lim_{N \arr \infty} (\gamma^{\pm}_{k\ell},\alpha^{\pm}_{k\ell},\eta^{\pm}_{k\ell}) =
    (\gammabar^{\pm}_{k\ell},\alphabar^{\pm}_{k\ell},\etabar^{\pm}_{k\ell}),
\eeq
almost surely, where the quantities on the right hand side are from the SE equations, Algorithm~\ref{algo:mlvamp_se}.
In addition the components of the transformed true vectors $\pbf^0_\ell$ and $\qbf^0_\ell$ and their estimates
$\pbfhat^{\pm}_{k\ell}$ and $\qbfhat^{\pm}_{k\ell}$ converge empirically as,
\beq
    \lim_{N \arr \infty} \left\{ (p^0_{\ell,n}, q^0_{\ell,n}, \phat^{\pm}_{k\ell,n}, \qhat^{\pm}_{k\ell,n})\right\}
        \PLeq (P^0_\ell,Q^0_\ell,\hat{P}^{\pm}_{k\ell},\hat{Q}^{\pm}_{k\ell}),
\eeq
where the random variable limits have moments,
\beq
    \Exp(P^0_\ell) = \Exp(Q^0_\ell) =\tau^0_\ell, \quad
    \Exp(\hat{P}^{\pm}_{k\ell}-P^0_\ell)^2 = \Exp(\hat{Q}^{\pm}_{k\ell}-Q^0_\ell)^2
    =\frac{1}{\etabar^{\pm}_{k\ell}}.
\eeq
\end{theorem}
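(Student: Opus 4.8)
The plan is to reduce ML-VAMP to an instance of a vector-valued recursion whose components decouple along the SVD bases $\Vbf_\ell$, and then invoke the general state-evolution machinery of Bayati--Montanari \citep{BayatiM:11} as refined for VAMP in \citep{rangan2016vamp}. First I would change coordinates via \eqref{eq:pq0}--\eqref{eq:pqhat}, writing every message $\rbf^{\pm}_{k\ell}$, $\zbfhat^{\pm}_{k\ell}$ in the rotated bases; by \eqref{eq:glin} the linear estimation functions become componentwise functions of the transformed quantities $(\Vbf_{\lm1}\rbf^+_{\lm1}, \Vbf_\ell\tran\rbf^-_\ell, \sbf_\ell, \bar\bbf_\ell)$, and by \eqref{eq:gnl} the nonlinear ones are already componentwise. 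In these coordinates the algorithm is a sequence of (i) separable nonlinear maps and (ii) multiplications by Haar matrices $\Vbf_\ell$ (hidden inside the change of basis between consecutive stages). This is precisely the structure for which a Haar-matrix state evolution holds: conditioned on the past, each $\Vbf_\ell$ acting on a vector orthogonal to previously revealed directions behaves like an isotropic Gaussian, so the transformed messages are asymptotically jointly Gaussian with the true signals $\pbf^0_\ell, \qbf^0_\ell$, with a covariance structure tracked by scalar parameters.

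**Key steps.** The argument would proceed by induction on the half-iteration index (forward pass step $\ell$, then reverse pass step $\ell$, interleaved over $k$). At each step I would: (1) express the new message as an orthogonal matrix $\Vbf_\ell$ (or $\Vbf_\ell\tran$) applied to a pseudo-Lipschitz function of already-established empirically-converging sequences; (2) apply the conditioning lemma for Haar matrices --- the analogue of \citep[Lemma~?]{rangan2016vamp} for orthogonally-invariant ensembles --- to show the output is asymptotically a Gaussian perturbation of the true transformed vector, with second moment given by the corresponding $\etabar^{\pm}_{k\ell}$ from Algorithm~\ref{algo:mlvamp_se}; (3) push the empirical convergence through the (uniformly Lipschitz, by hypothesis~(iii), resp. pseudo-Lipschitz by~(ii)) estimation function using the standard fact that pseudo-Lipschitz functions preserve $PL(2)$ convergence; (4) identify the scalar $\alpha^{\pm}_{k\ell}$, which is an empirical average of a derivative, with the state-evolution constant $\alphabar^{\pm}_{k\ell}$ via \eqref{eq:gderiv}--\eqref{eq:alphaderiv} and the representation of the conditional covariance as the variance of the scalar channel \eqref{eq:Ecalnl}--\eqref{eq:Ecallin}; and (5) propagate the $\gamma$ and $\rbf$ updates (lines~\ref{line:gamp}--\ref{line:rp} and~\ref{line:gamn}--\ref{line:rn}), which are deterministic algebraic combinations, to close the induction. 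The moment identities $\Exp(\hat P^{\pm}_{k\ell}-P^0_\ell)^2 = 1/\etabar^{\pm}_{k\ell}$ then fall out of step~(2), and the statement $\lim(\gamma^{\pm},\alpha^{\pm},\eta^{\pm}) = (\gammabar^{\pm},\alphabar^{\pm},\etabar^{\pm})$ from steps~(2) and~(4).

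**Main obstacle.** The hard part will be the conditioning argument for the Haar ensemble in the multi-layer setting. In single-layer VAMP one reveals a bounded number of linear constraints on a single $\Vbf$ per iteration and the conditional law stays tractable; here the same $\Vbf_\ell$ is reused in \emph{both} the forward and reverse passes (it appears in $\Wbf_\ell = \Vbf_\ell\Sigmabf_\ell\Vbf_{\lm1}$ and in $\Wbf_{\lp1}$), so one must carefully bookkeep the growing linear subspace on which each $\Vbf_\ell$ has been conditioned and show that the residual action remains asymptotically isotropic --- this is where assumption~(i), $\alphabar^{\pm}_{k\ell} \in (0,1)$, is needed to guarantee the precision updates keep $\gamma^{\pm}_{k\ell}$ positive and the Gaussian channels in \eqref{eq:rznl}--\eqref{eq:rzlin} non-degenerate, so the induction does not break down. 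A secondary technical point is handling the rank-deficient, zero-padded singular-value structure \eqref{eq:sbar}: the nullspace directions of $\Sigmabf_\ell$ must be tracked separately, since messages supported there carry no information and one must verify the bookkeeping of dimensions $N_\ell/N, R_\ell/N$ is consistent. Once the conditioning lemma is established, steps~(3)--(5) are routine applications of the $PL(2)$ calculus and finite-dimensional algebra already standard in the AMP literature.
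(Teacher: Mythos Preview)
Your proposal is correct and follows essentially the same route as the paper: change to SVD coordinates, reduce to componentwise updates interleaved with Haar rotations, and run an induction on half-iterations with a Haar conditioning lemma at each step. The paper packages this by first abstracting to a general multi-layer recursion (their Gen-ML framework, Algorithm~\ref{algo:gen} and Theorem~\ref{thm:genConv}) and then showing ML-VAMP instantiates it; the one technical mechanism you allude to but do not name explicitly is that the \emph{error} updates $\rbf^{\pm}_{k\ell}-\zbf^0_\ell$, after the $\alpha^{\pm}_{k\ell}$-subtraction in lines~\ref{line:rp} and~\ref{line:rn}, are asymptotically divergence-free, which via Stein's Lemma is precisely what forces the forward/reverse cross-correlations to vanish in the conditioning step (Lemma~\ref{lem:pconvdet}) and closes the induction.
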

\begin{proof} See Appendix~\ref{sec:semlpf}.
\end{proof}

Theorem~\ref{thm:semlvamp} shows that the components of the true signals $\pbf^0_\ell$ and $\qbf^0_\ell$
and the corresponding ML-VAMP estimates $\pbfhat^{\pm}_{k\ell}$ and $\qbfhat^{\pm}_{k\ell}$ converge empirically
to random variables $(P^0_\ell,Q^0_\ell,\hat{P}^{\pm}_{k\ell},\hat{Q}^{\pm}_{k\ell})$.  Appendix~\ref{sec:semlpf}
provides a complete description of the joint distribution of these variables and thus provides
an exact characterization of the asymptotic behavior of the true signal and their estimates.
In particular, the second moments of the true signals $P^0_\ell$ and $Q^0_\ell$ are given by the constants
$\tau^0_\ell$ computed in the first part of the SE equations in Algorithm~\ref{algo:mlvamp_se}.  This set of
operations is essentially an alternating sequence of scalar nonlinear and linear systems.
The error variances $\etabar^{\pm}_{k\ell}$
are then computed in the forward and backward passes of Algorithm~\ref{algo:mlvamp_se} by considering
a sequence of scalar estimation problems.  In summary, we have shown that
ML-VAMP is a computationally tractable algorithm for inference in MLPs that admits a simple and exact
characterization in the LSL.

\section{Numerical Experiments}
\label{sec:sim}
\paragraph*{Synthetic random network}  To illustrate the SE analysis, we first consider a
randomly generated neural network that follows the theoretical model of the paper.
Details are in Appendix~\ref{sec:simdetails}.
Briefly, the network input is an $N_0=20$ dimensional Gaussian unit noise vector $\zbf_0$.
The network then has three hidden layers with 100, 500 and 784 units (the same dimensions
will be used for the MNIST data set below).  The observed output is a random linear measurement
$\ybf = \Abf\zbf_5+\wbf$, where $\zbf_5$ is the 784-dimensional vector from the final hidden layer,
the matrix $\Abf$ is $M \x 784$, and $\wbf$ is Gaussian noise, set at 30 dB\@.  The number of measurements
$M$ is varied from 100 to 600\@.  To follow the theory, the weight matrices
are random Gaussian i.i.d.\ and the observation matrix $\Abf$ is a random orthogonally invariant matrix
with a fixed condition number $\kappa = 10$.  This model cannot be treated by the ML-AMP algorithm in \citep{manoel2017multi}.

\begin{figure}
\centering
\includegraphics[width=0.45\columnwidth]{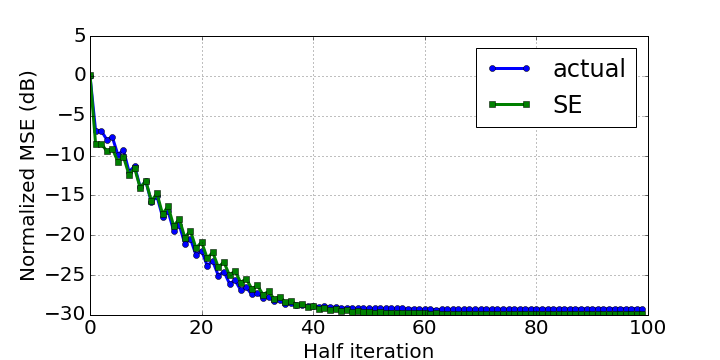}
\hfill
\includegraphics[width=0.45\columnwidth]{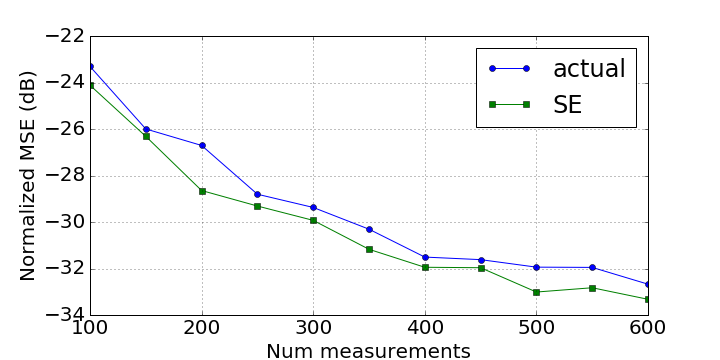}
\caption{Simulation with a randomly generated neural network.  Left panel:  Normalized mean squared error (NMSE)
for ML-VAMP and the predicted value from the state evolution as a function of the iteration with $M=300$ measurements.
Right panel:  Final NMSE for ML-VAMP and the SE prediction as a function of the measurements}
\label{fig:randmlp_sim}
\end{figure}

The left panel of Fig.~\ref{fig:randmlp_sim} shows the normalized mean squared error (NMSE) for the estimation of the inputs
to the networks $\zbf_0$ as a function of the iteration number for a fixed number of measurements $M=300$.
Also plotted is the state evolution (SE) prediction.
We see that the SE predicts the ML-VAMP behavior remarkably well, within approximately 1~dB\@.  The right panel shows
the NMSE after 50 iterations (100 half-iterations) for various values of $M$.  We again see an excellent agreement between
the actual values and the SE predictions.

\begin{figure}
\centering
\includegraphics[width=0.6\columnwidth]{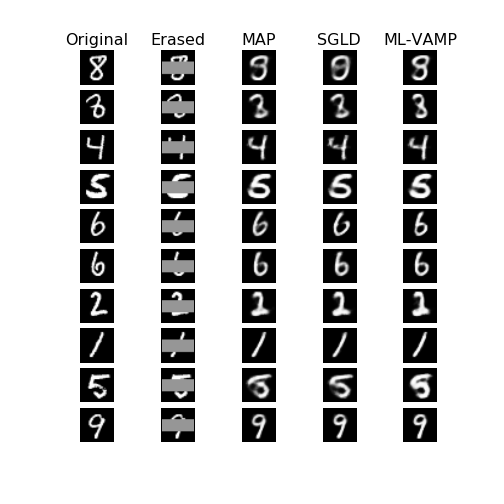}
\caption{Inpainting of handwritten digits using MAP estimation,
stochastic gradient Langevin dynamics (SGLD) and ML-VAMP.}
\label{fig:mnist}
\end{figure}

\paragraph*{MNIST inpainting}  To demonstrate the feasibility of ML-VAMP on a real dataset,
we used the algorithm for inpainting on the MNIST dataset, as
considered in
\citep{yeh2016semantic,bora2017compressed,tramel2016inferring}.
The MNIST dataset consists of 28 $\times$ 28 = 784 pixel images of hand-written digits
as shown in the first column of Fig.~\ref{fig:mnist}.
Following \citep{kingma2013auto},
a generative model for these digits
was trained using a variational autoencoder (VAE), so that each image $\xbf$
is  modeled as the output  of an $L$-stage neural network.
In this experiment, we used a single layer network with 20 input units,
400 hidden units and 784 output units corresponding to the dimension of the images
-- details of the network, training procedure and other simulation details are
given in Appendix~\ref{sec:simdetails}.  For each image $\xbf$,
we then created an occluded image, $\ybf$, by removing the rows 10--20 of
the image as  shown in the second column of Fig.~\ref{fig:mnist}.
The inpainting problem is to recover the original image $\xbf$
from the occluded image $\ybf$.

To perform ML-VAMP for reconstruction,
we first observe that the occluded image $\ybf$
is the output of the same neural network that generates $\xbf$, but with the
occuluded pixels removed in the final layer.
Using the ML-VAMP algorithm, we can then estimate the values of $\zbf_0$, the input to the
neural network.  Once $\zbf_0$ is estimated, we can estimate the original image
$\xbf$ by running the input $\zbf_0$ through the original network.
The resulting reconstructed images are shown in the final column of Fig.~\ref{fig:mnist},
which displays the reconstructed images after only 20 iterations of ML-VAMP.

For comparison, we have also shown the MAP estimates, which are the images
$\xbf$ that maximizes the posterior density $p(\xbf|\ybf)$.  As described in
Appendix~\ref{sec:simdetails}, the MAP estimates can be computed
using numerical optimization of the posterior density as performed in \citep{yeh2016semantic,bora2017compressed}.  Fig.~\ref{fig:mnist} also shows
the posterior mean $\Exp(\xbf|\ybf)$ as estimated via Stochastic Gradient Langevin Dynamics
(SGLD) \citep{welling2011bayesian} -- also see Appendix~\ref{sec:simdetails}.
We see that visually, the ML-VAMP, MAP and SGLD estimates are extremely similar.

In addition, the ML-VAMP algorithm is significantly faster:
ML-VAMP was performed for only 20 iterations,
while MAP used 500 iterations and SGLD used 10000.  Thus, the experiment
suggests that, in addition to its theoretical guarantees, ML-VAMP
may also be a computationally simpler approach for reconstruction.
Of course, much further experimentation on more complex data sets would be needed
to evaluate its practical applicability.

\section{Conclusions}  

While deep networks have been remarkably successful in a range of challenging machine
learning problems, their success is still not fully understood at a theoretical level.
We have presented a principled and
computationally tractable method for inference
in deep networks whose performance can be rigorously characterized in certain high-dimensional
random settings.  The proposed method is based on AMP techniques
which have proven to be information theoretically optimal in closely related problems.
It is possible that similar optimality results may be true for ML-VAMP as well.
For practical applications, the proposed method needs much further study, but its fast convergence
suggests that it may be useful in problems outside the theoretical model as well.
Going forward, the natural extension of these results would be to consider learning as well.
AMP and VAMP techniques have been combined with EM learning in 
\citep{krzakala2012statistical,vila2013expectation,KamRanFU:12-IT,fletcher2016emvamp} and it is possible that these
methods may be able to be used in the context of learning deep generative models as well.

\appendix

\section{Empirical Convergence of Vectors} \label{sec:empconv}

We review some definitions from the Bayati-Montanari paper~\citep{BayatiM:11}
and the original VAMP paper~\citep{rangan2016vamp} since we will use the same
analysis framework in this paper.
Let $\xbf = (\xbf_1,\ldots,\xbf_N)$ be a block vector with components $\xbf_n \in \R^r$
for some $r$.  Thus, the vector $\xbf$
is a vector with dimension $rN$.
Given any function $g:\R^r \arr \R^s$, we define the
\emph{componentwise extension} of $g(\cdot)$ as the function,
\beq \label{eq:gcomp}
    \gbf(\xbf) := (g(\xbf_1),\ldots,g(\xbf_N)) \in \R^{Ns}.
\eeq
That is,
$\gbf(\cdot)$
applies the function
$g(\cdot)$
on each $r$-dimensional component.
Similarly,
we say $\gbf(\xbf)$ \emph{acts componentwise} on $\xbf$ whenever it is of the form \eqref{eq:gcomp}
for some function $g(\cdot)$.

Next consider a sequence of block vectors of growing dimension,
\[
    \xbf(N) = (\xbf_1(N),\ldots,\xbf_N(N)),
\qquad
N=1,\,2,\,\ldots,
\]
where each component $\xbf_n(N) \in \R^r$.
In this case, we will say that
$\xbf(N)$ is a \emph{block vector sequence that scales with $N$
under blocks $\xbf_n(N) \in \R^r$.}
When $r=1$, so that the blocks are scalar, we will simply say that
$\xbf(N)$ is a \emph{vector sequence that scales with $N$}.
Such vector sequences can be deterministic or random.
In most cases, we will omit the notational dependence on $N$ and simply write $\xbf$.

Now, given $p \geq 1$,
a function $f:\R^r \arr \R^s$ is called \emph{pseudo-Lipschitz continuous of order $p$},
if there exists a constant $C > 0$ such that for all $\xbf_1,\xbf_2 \in\R^r$,
\[
    \| f(\xbf_1)- f(\xbf_2) \| \leq C\|\xbf_1-\xbf_2\|\left[ 1 + \|\xbf_1\|^{p-1}
    + \|\xbf_2\|^{p-1} \right].
\]
Observe that in the case $p=1$, pseudo-Lipschitz continuity reduces to
the standard Lipschitz continuity.
Given $p \geq 1$, we will say that the block vector sequence $\xbf=\xbf(N)$
\emph{converges empirically with $p$-th order moments} if there exists a random variable
$X \in \R^r$ such that
\begin{enumerate}[(i)]
\item $\Exp\|X\|_p^p < \infty$; and
\item for any $f : \R^r \arr \R$ that is pseudo-Lipschitz continuous of order $p$,
\beq \label{eq:PLp-empirical}
    \lim_{N \arr \infty} \frac{1}{N} \sum_{n=1}^N f(\xbf_n(N)) = \Exp\left[ f(X) \right].
\eeq
\end{enumerate}
In \eqref{eq:PLp-empirical}, we have
the empirical mean of the components $f(\xbf_n(N))$
of the componentwise extension $\fbf(\xbf(N))$
converging to the expectation $\Exp[ f(X) ]$.
In this case, with some abuse of notation, we will write
\beq \label{eq:plLim}
    \lim_{N \arr \infty} \left\{ \xbf_n \right\} \stackrel{PL(p)}{=} X,
\eeq
where, as usual, we have omitted the dependence on $N$ in $\xbf_n(N)$.
Importantly, empirical convergence can de defined on deterministic vector sequences,
with no need for a probability space.  If $\xbf=\xbf(N)$ is a random vector sequence,
we will often require that the limit \eqref{eq:plLim} holds almost surely.

We conclude with one final definition.
Let $\phibf(\rbf,\gamma)$ be a function on $\rbf \in \R^s$ and $\gamma \in \R$.
We say that $\phibf(\rbf,\gamma)$ is \emph{uniformly Lipschitz continuous} in $\rbf$
at $\gamma=\gammabar$ if there exists constants
$L_1$ and $L_2 \geq 0$ and an open neighborhood $U$ of $\gammabar$, such that
\beq \label{eq:unifLip1}
    \|\phibf(\rbf_1,\gamma)-\phibf(\rbf_2,\gamma)\| \leq L_1\|\rbf_1-\rbf_2\|,
\eeq
for all $\rbf_1,\rbf_2 \in \R^s$ and $\gamma \in U$; and
\beq \label{eq:unifLip2}
    \|\phibf(\rbf,\gamma_1)-\phibf(\rbf,\gamma_2)\| \leq L_2\left(1+\|\rbf\|\right)|\gamma_1-\gamma_2|,
\eeq
for all $\rbf \in \R^s$ and $\gamma_1,\gamma_2 \in U$.

\section{Linear Estimation Functions}
\label{sec:linestim}

Consider a linear factor node between two variables $\zbf_\ell$ and $\zbf_{\lm1}$
related by the linear relation \eqref{eq:nnlin}, which
corresponds to a Gaussian log conditional density,
\beq \label{eq:logplin}
    -\ln p(\zbf_\ell|\zbf_{\lm1}) =
        \frac{\nu_\ell}{2}\|\zbf_\ell-\Wbf_\ell \zbf_{\lm1} + \bbf_\ell \|^2 + \mbox{const}.
\eeq
Applying \eqref{eq:logplin},
the energy function \eqref{eq:Hdef} is the quadratic
\beq \label{eq:Hlin}
    H(\zbf_{\lm1},\zbf_{\ell}) := \frac{\nu_\ell}{2}
        \|\zbf_\ell-\Wbf_\ell \zbf_{\lm1} + \bbf_\ell\|^2 +
        \frac{\gamma^-_{\ell}}{2}\|\zbf_\ell-\rbf^-_\ell\|^2
        + \frac{\gamma^+_{\lm1}}{2}\|\zbf_{\lm1}-\rbf^-_{\lm1} \|^2,
\eeq
and the belief estimate $b(\cdot)$ in \eqref{eq:bdef} is the Gaussian density
\beq \label{eq:blinz}
    b(\zbf_{\lm1},\zbf_{\ell}|\rbf_\ell,\rbf_{\lm1},\gamma_{\ell}^-,\gamma_{\lm1}^+) \propto
        \exp\left[ -H(\zbf_{\lm1},\zbf_\ell) \right].
\eeq
As shown in \eqref{eq:gexp},
the estimation functions $\gbf^{\pm}_\ell(\cdot)$
are given by the expectation of $\zbf_{\lm1}$ and $\zbf_{\ell}$ with respect to the density
$b(\cdot)$ in \eqref{eq:blinz}.
Since $b(\cdot)$ is Gaussian, these expectations can be computed via a least-squares problem.

The solution to this least-squares problem is simplest to consider using the SVD in \eqref{eq:WSVD}.
Specifically, given the SVD, define the transformed variables
\beq \label{eq:udeflin}
    \ubf_{\ell} := \Vbf_{\ell}\tran \zbf_{\ell}, \quad
    \ubf_{\lm1} := \Vbf_{\lm1}\zbf_{\lm1}, \quad
    \bar{\ubf}_{\ell} := \Vbf_{\ell}\tran \rbf_{\ell}^-, \quad
    \bar{\ubf}_{\lm1} := \Vbf_{\lm1}\rbf_{\lm1}^+, \quad
    \bar{\bbf}_\ell   := \Vbf_{\ell}\tran \bbf_\ell.
\eeq
To compute the expectations on $(\zbf_{\lm1},\zbf_\ell)$, we will first
compute the expectations on $(\ubf_{\lm1},\ubf_\ell)$ and then use the transformations \eqref{eq:udeflin}
to find the expectations on $(\zbf_{\lm1},\zbf_\ell)$.
Using the transformations in \eqref{eq:udeflin}, it can be verified that
$\ubf=(\ubf_{\lm1},\ubf_{\ell})$ has a Gaussian probability density,
\beq \label{eq:bulin}
    b_u(\ubf) \propto \exp\left[ -H_u(\ubf) \right],
\eeq
where $H_u(\cdot)$ is the energy function
\beq \label{eq:Hlinu}
    H_u(\ubf_{\lm1},\ubf_\ell) = \frac{\nu_\ell}{2}\|\ubf_{\ell}-\Sigmabf_\ell\ubf_{\lm1} + \bar{\bbf}_\ell\|^2
   +   \frac{\gamma^-_{\ell}}{2}\|\ubf_\ell-\bar{\ubf}_\ell\|^2
        + \frac{\gamma^+_{\lm1}}{2}\|\ubf_{\lm1}-\bar{\ubf}_{\lm1} \|^2 .
\eeq
This density has mean and covariance
\beq \label{eq:uestlin}
    \Exp(\ubf|b_u) = \Qbf^{-1}\cbf, \quad \Cov(\ubf|b_u) = \Qbf^{-1},
\eeq
where
\beq
    \Qbf :=
        \left[ \begin{array}{cc}
        \gamma_{\lm1}^+\Ibf + \nu_\ell \Sigmabf_\ell\tran\Sigmabf_\ell & -\nu_\ell\Sigmabf_\ell\tran \\
        -\nu_\ell \Sigmabf_\ell & (\gamma_\ell^- +\nu_\ell)\Ibf \end{array} \right], \quad
    \cbf := \left[ \begin{array}{c}
        \gamma_{\lm1}^+ \bar{\ubf}_{\lm1} - \nu_\ell \Sigmabf_\ell\tran \bar{\bbf}_\ell \\
        \gamma_{\ell}^- \bar{\ubf}_{\ell} + \nu_\ell \bar{\bbf}_\ell
        \end{array}
        \right].
\eeq
To evaluate the expectation and covariance in \eqref{eq:uestlin},
define the function $G_\ell(\cdot)$ with two outputs
\beq \label{eq:gdeflin}
    G_\ell(\bar{u}_{\lm1},\bar{u}_\ell,s_\ell,\bar{b}_\ell,\gamma_{\lm1}^+,\gamma_{\ell}^-) :=
    \left[ \begin{array}{c}
        G_\ell^-(\bar{u}_{\lm1},\bar{u}_\ell,s_\ell,\bar{b}_\ell,\gamma_{\lm1}^+,\gamma_{\ell}^-) \\
        G_\ell^+(\bar{u}_{\lm1},\bar{u}_\ell,s_\ell,\bar{b}_\ell,\gamma_{\lm1}^+,\gamma_{\ell}^-)
        \end{array}
        \right] =  \Pbf^{-1}\dbf,
\eeq
where $\Pbf \in \R^{2 \x 2}$ and $\dbf \in \R^{2 \x 1}$ given by
\beq
   \Pbf :=
        \left[ \begin{array}{cc}
        \gamma_{\lm1}^+ + \nu_\ell s_\ell^2 & -\nu_\ell s_\ell \\
        -\nu_\ell s_\ell & \gamma_\ell^- +\nu_\ell \end{array} \right], \quad
    \dbf := \left[ \begin{array}{c}
        \gamma_{\lm1}^+ \bar{u}_{\lm1} - \nu_\ell s_\ell \bar{b}_\ell \\
        \gamma_{\ell}^- \bar{u}_{\ell} + \nu_\ell \bar{b}_\ell
        \end{array}
        \right].
\eeq
Since $\Sigmabf_\ell$ has the block diagonal structure in \eqref{eq:WSVD},
the expectations in \eqref{eq:uestlin} are given by
\begin{subequations} \label{eq:Glinvec}
\begin{align}
    \Exp(\ubf_{\lm1}|b_u) = \Gbf_\ell^-(\bar{\ubf}_{\lm1},\bar{\ubf}_\ell,\sbf_\ell,\bar{\bbf}_\ell,
        \gamma_{\lm1}^+,\gamma_{\ell}^-), \\
    \Exp(\ubf_{\ell}|b_u) = \Gbf_\ell^+(\bar{\ubf}_{\lm1},\bar{\ubf}_\ell,\sbf_\ell,\bar{\bbf}_\ell,
        \gamma_{\lm1}^+,\gamma_{\ell}^-),
\end{align}
\end{subequations}
where the functions $\Gbf_\ell^{\pm}(\cdot)$ are the componentwise extensions
(see Appendix~\ref{sec:empconv}) of $G_\ell^{\pm}(\cdot)$, meaning
\begin{align} \label{eq:Glincomp}
    \left[ \Gbf_\ell^{\pm}(\bar{\ubf}_{\lm1},\bar{\ubf}_{\ell},
        \sbf_\ell,\bar{\bbf}_\ell,\gamma_{\lm1}^+,\gamma_{\ell}^-) \right]_n
        = G_\ell^{\pm}(\bar{u}_{\lm1,n},\bar{u}_{\ell n},s_{\ell n}
        \bar{b}_{\ell n},\gamma_{\lm1}^+,\gamma_{\ell}^-),
\end{align}
for each component $n$.

One slight technicality in the componentwise definition \eqref{eq:Glincomp} is that
$\bar{\ubf}_{\lm1}$, $\bar{\ubf}_{\ell}$ and $\sbf_\ell$ may have different dimensions:
\[
    \bar{\ubf}_{\lm1} \in \R^{N_{\lm1}}, \quad
    \bar{\ubf}_{\ell} \in \R^{N_{\ell}}, \quad
    \sbf_\ell \in \R^{R_\ell}.
\]
We define the outputs of $\Gbf_\ell^+(\cdot)$ and $\Gbf_\ell^-(\cdot)$
as having output dimensions $N_{\lm1}$ and $N_\ell$, respectively.
For $n \leq R_\ell$,
we can use the formula \eqref{eq:Glincomp} since $N_{\lm1}, N_\ell \geq R_\ell$ so all
three terms $\bar{u}_{\lm1,n},\bar{u}_{\ell n},s_{\ell n}$ are defined for $n \leq R_\ell$.
For $n > R_\ell$, we use the convention that $s_{\ell n}=0$.
In this case, it can be verified from \eqref{eq:gdeflin} that when $s_{\ell n}=0$,
\begin{align*}
     G_\ell^-(\bar{u}_{\lm1,n},\bar{u}_{\ell n},s_{\ell n},
        \bar{b}_{\ell n},\gamma_{\lm1}^+,\gamma_{\ell}^-)
     &= \bar{u}_{\lm1}, \\
     G_\ell^+(\bar{u}_{\lm1,n},\bar{u}_{\ell n},s_{\ell n},
        \bar{b}_{\ell n},\gamma_{\lm1}^+,\gamma_{\ell}^-)
     &= \frac{\gamma_{\ell}^- \bar{u}_{\ell} + \nu_\ell \bar{b}_{\ell n}}{\gamma_{\ell}^-  + \nu_\ell}.
\end{align*}
Thus, $G_\ell^-(\cdot)$ does not depend on $\bar{u}_{\ell n}$ and $G_\ell^+(\cdot)$
does not depend on $\bar{u}_{\lm1,n}$.

Using the definitions \eqref{eq:udeflin}, we can compute the desired estimation functions
\begin{align*}
    \MoveEqLeft \Exp(\zbf_\ell|b) = \Vbf_\ell \Exp(\ubf_\ell|b_u)
    = \Vbf_\ell\Gbf_\ell^+(\bar{\ubf}_{\lm1},\bar{\ubf}_\ell,\sbf,\vbf,
        \gamma_{\lm1}^+,\gamma_{\ell}^-) \nonumber \\
    &= \Vbf_\ell\Gbf_\ell^+(\Vbf_{\lm1}\rbf^+_{\lm1},\Vbf_\ell\tran \rbf^-_{\ell},
        \sbf_\ell,\bar{\bbf}_\ell,\gamma_{\lm1}^+,\gamma_{\ell}^-).
\end{align*}
Therefore, from \eqref{eq:gexp}, the linear estimation function is given by
\beq \label{eq:glinGp}
    \gbf^+_\ell(\rbf^+_{\lm1},\rbf^-_{\ell},\gamma_{\lm1}^+,\gamma_{\ell}^-) =
    \Vbf_\ell\Gbf_\ell^+(\Vbf_{\lm1}\rbf^+_{\lm1},\Vbf_\ell\tran \rbf^-_{\ell},
        \sbf_\ell,\bar{\bbf}_\ell,\gamma_{\lm1}^+,\gamma_{\ell}^-),
\eeq
where we have suppressed the dependence on $\sbf_\ell$ and
$\bar{\bbf}_\ell$ on the left hand side.
Similarly, one can show
\beq \label{eq:glinGn}
    \gbf^-_\ell(\rbf^+_{\lm1},\rbf^-_{\ell},
        \gamma_{\lm1}^+,\gamma_{\ell}^-) =
    \Vbf_{\lm1}\tran\Gbf_\ell^-(\Vbf_{\lm1}\rbf^+_{\lm1},\Vbf_\ell\tran \rbf^-_{\ell},
        \sbf_\ell,\bar{\bbf}_\ell,\gamma_{\lm1}^+,\gamma_{\ell}^-).
\eeq

For the derivative $\alpha_{k \ell}$ in line \ref{line:alphap},
observe that
\begin{align}
    \alpha_{k \ell}^+ &\stackrel{(a)}{=}
        \bkt{ \partial \gbf^+_\ell(\rbf^+_{k,\lm1},\rbf^-_{k \ell},
        \sbf_\ell,\gamma_{k\lm1}^+,\gamma_{k\ell}^-) / \partial \rbf^-_{k \ell} } \nonumber \\
        &\stackrel{(b)}{=} \frac{1}{N_\ell} \Tr\left[ \Vbf_\ell
        \frac{\partial \Gbf_\ell^+(\bar{\ubf}_{k,\lm1},\bar{\ubf}_{\ell},
        \sbf_\ell,\bar{\bbf}_\ell,\gamma_{k,\lm1}^+,\gamma_{k\ell}^-)}
        {\partial \bar{\ubf}_{k\ell}} \Vbf_{\ell}\tran \right] \nonumber \\
        &\stackrel{(c)}{=} \frac{1}{N_\ell} \Tr\left[
        \frac{\partial \Gbf_\ell^+(\bar{\ubf}_{k,\lm1},\bar{\ubf}_{k\ell},
        \sbf_\ell,\bar{\bbf}_\ell,\gamma_{k,\lm1}^+,\gamma_{k\ell}^-)}
        {\partial \bar{\ubf}_{k\ell}}  \right] \nonumber \\
        &\stackrel{(d)}{=} \bktAuto{
        \frac{\partial \Gbf_\ell^+(\bar{\ubf}_{k,\lm1},\bar{\ubf}_{k\ell},
        \sbf_\ell,\bar{\bbf}_\ell,\gamma_{k,\lm1}^+,\gamma_{k\ell}^-)}
        {\partial \bar{\ubf}_{k\ell}}  }, \label{eq:alphapG}
\end{align}
where (a) follows from line~\ref{line:alphap} of Algorithm~\ref{algo:ml-vamp};
in (b), we have used \eqref{eq:glinGp} and set $\bar{\ubf}_{k,\lm1} = \Vbf_{\lm1}\rbf^+_{k,\lm1}$
and $\bar{\ubf}_{k\ell} = \Vbf_{\ell}\tran\rbf^+_{k\ell}$;
(c) follows from invariance of the trace of a product to cyclic permutation,
since $\Vbf_\ell\tran\Vbf_\ell = \Ibf$; and (d) follows from the definition of
the $\bkt{\cdot}$ operator.
Similarly, we can show
\beq \label{eq:alphanG}
        \alpha_{k \ell}^- = \bktAuto{
        \frac{\partial \Gbf_\ell^-(\bar{\ubf}_{k,\lm1},\bar{\ubf}_{k\ell},
        \sbf_\ell,\bar{\bbf}_\ell,\gamma_{k,\lm1}^+,\gamma_{k\ell}^-)}
        {\partial \bar{\ubf}_{k,\lm1}  } }.
\eeq

\section{General Multi-Layer Recursions}

To analyze Algorithm~\ref{algo:ml-vamp}, we consider a more general class
of recursions as shown in Algorithm~\ref{algo:gen}.
The Gen-ML Algorithm generates vectors $\qbf^{\pm}_{k\ell}$ and $\pbf^{\pm}_{k\ell}$
via a sequence of forward and backward passes through a multi-layer system.
As we will show below, we will associate $\qbf^{\pm}_{k\ell}$ and $\pbf^{\pm}_{k\ell}$
with certain error terms in the ML-VAMP algorithm.
The functions that update $\fbf_{k\ell}^{\pm}(\cdot)$ that produce the vectors
$\qbf^{\pm}_{k\ell}$ and $\pbf^{\pm}_{k\ell}$ will be called the \emph{vector update functions}.

To account for the effect of the parameters $\gamma^{\pm}_{k\ell}$ and $\alpha^{\pm}_{k\ell}$
in ML-VAMP, the Gen-ML algorithm describes the parameter update through a sequence of
\emph{parameter lists} $\Lambda^{\pm}_{k\ell}$.
The parameter lists are ordered lists of parameters that accumulate as the
algorithm progresses.  They are initialized with $\Lambda^-_{01}$ in
line~\ref{line:laminit_gen}.  Then, as the algorithm progresses, new parameters $\lambda^{\pm}_{k\ell}$
are computed and then added to the lists in lines~\ref{line:lamp0_gen}, \ref{line:lamp_gen}, \ref{line:lamL_gen}
and \ref{line:lamn_gen}.  The vector update functions $\fbf_{k\ell}^{\pm}(\cdot)$ may depend on any
sets of parameters accumulated in the parameter list.

In lines~\ref{line:mup0_gen}, \ref{line:mup_gen}, \ref{line:muL_gen} and \ref{line:mun_gen},
the new parameters $\lambda_{k\ell}^{\pm}$ are computed by:
(1) computing average values $\mu_{k\ell}^{\pm}$ of componentwise functions $\varphibf^{\pm}_{k\ell}(\cdot)$;
and (2) taking functions $T^{\pm}_{k\ell}(\cdot)$ of the average values $\mu_{k\ell}^{\pm}$.
Since the average values $\mu_{k\ell}^{\pm}$ represent statistics on the components of
$\varphibf^{\pm}_{k\ell}(\cdot)$, we will call $\varphibf^{\pm}_{k\ell}(\cdot)$ the \emph{parameter statistic
functions}.  We will call the $T^{\pm}_{k\ell}(\cdot)$ the \emph{parameter update functions}.
We will show below that the updates for the parameters $\gamma^{\pm}_{k\ell}$ and $\alpha^{\pm}_{k\ell}$
can be written in this form.

\begin{algorithm}[t]
\caption{General Multi-Layer Recursion (Gen-ML)}
\begin{algorithmic}[1]  \label{algo:gen}
\REQUIRE{Vector update functions $\fbf^\pm_{k\ell}(\cdot)$,
parameter statistic functions $\varphibf^\pm_{k\ell}(\cdot)$,
parameter update functions $T^{\pm}_{k\ell}(\cdot)$,
orthogonal matrices $\Vbf_\ell$,
disturbance vectors $\wbf^\pm_\ell$. }

\STATE{// Initialization }
\STATE{Initialize parameter list $\Lambda_{01}^-$, and vectors $\pbf_0^0$ and $\qbf_{0\ell}^-$, $\ell=0,\ldots,\Lm1$}
    \label{line:laminit_gen}
\STATE{$\qbf^0_0 = \fbf^0_0(\wbf_0), \quad \pbf^0_0 = \Vbf_0\qbf^0_0$} \label{line:q00init_gen}
\FOR{$\ell=1,\ldots,\Lm1$}
    \STATE{$\qbf^0_\ell = \fbf^0_\ell(\pbf^0_{\lm1},\wbf_\ell, \Lambda_{01}^-)$ }
    \label{line:q0init_gen}
    \STATE{$\pbf^0_\ell = \Vbf_\ell\qbf^0_\ell$ }  \label{line:p0init_gen}
\ENDFOR
\STATE{}
\FOR{$k=0,1,\dots$}
    \STATE{// Forward Pass }
    \STATE{$\lambda^+_{k0} = T_{k0}^+(\mu^+_{k0},\Lambda_{0k}^-), \quad
        \mu^+_{k0} = \bkt{\varphibf_{k0}^+(\qbf_{k0}^-,\wbf_0,\Lambda_{0k}^-)}$}    \label{line:mup0_gen}
    \STATE{$\Lambda_{k0}^+ = (\Lambda_{k1}^-,\lambda^+_{k0})$} \label{line:lamp0_gen}
    \STATE{$\qbf_{k0}^+ = \fbf^+_{k0}(\qbf_{k0}^-,\wbf_0,\Lambda^+_{k0})$}  \label{line:q0_gen}
    \STATE{$\pbf_{k0}^+ = \Vbf_0\qbf_{k0}^+$} \label{line:p0_gen}
    \FOR{$\ell=1,\ldots,L-1$}
        \STATE{$\lambda^+_{k\ell} = T_{k\ell}^+(\mu^+_{k\ell},\Lambda_{k,\lm1}^+), \quad
            \mu^+_{k\ell} = \bkt{\varphibf_{k\ell}^+(\pbf^0_{\lm1},\pbf^+_{k,\lm1},\qbf_{k\ell}^-,\wbf_\ell,\Lambda_{k,\lm1}^+)}$}    \label{line:mup_gen}
        \STATE{$\Lambda_{k\ell}^+ = (\Lambda_{k,\lm1}^+,\lambda^+_{k\ell})$}
            \label{line:lamp_gen}
        \STATE{$\qbf_{k\ell}^+ = \fbf^+_{k\ell}(\pbf^0_{\lm1},\pbf^+_{k,\lm1},\qbf_{k\ell}^-,\wbf_\ell,\Lambda^+_{k\ell})$}
            \label{line:qp_gen}
        \STATE{$\pbf_{k\ell}^+ = \Vbf_{\ell}\qbf_{k\ell}^+$}   \label{line:pp_gen}
    \ENDFOR
    \STATE{}

    \STATE{// Reverse Pass }
    \STATE{$\lambda^-_{\kp1,L} = T_{kL}^-(\mu^-_{kL},\Lambda_{k,\Lm1}^+), \quad
        \mu^-_{kL} = \bkt{\varphibf_{kL}^-(\pbf_{k,\Lm1}^+,\wbf_L,\Lambda_{k,\Lm1}^+)}$}    \label{line:muL_gen}
    \STATE{$\Lambda_{\kp1,L}^- = (\Lambda_{k,\Lm1}^+,\lambda^+_{\kp1,L})$} \label{line:lamL_gen}
    \STATE{$\pbf_{\kp1,\Lm1}^- = \fbf^-_{kL}(\pbf^0_{\Lm1},\pbf_{k,\Lm1}^+,\wbf_L,\Lambda^-_{\kp1,L})$}  \label{line:pL_gen}
    \STATE{$\qbf_{\kp1,\Lm1}^- = \Vbf_{\Lm1}\tran\pbf_{\kp1,\Lm1}$} \label{line:qL_gen}
    \FOR{$\ell=\Lm1,\ldots,1$}
        \STATE{$\lambda^-_{\kp1,\ell} = T_{k\ell}^-(\mu^-_{k\ell},\Lambda_{\kp1,\lp1}^-), \quad
            \mu^-_{k\ell} =
            \bkt{\varphibf_{k\ell}^-(\pbf_{\lm1}^0,\pbf_{k,\lm1}^+,\qbf_{\kp1,\ell}^-,\wbf_\ell,\Lambda_{\kp1,\lp1}^+)}$}    \label{line:mun_gen}
        \STATE{$\Lambda_{\kp1,\ell}^- = (\Lambda_{\kp1,\lp1}^-,\lambda^-_{\kp1,\ell})$} \label{line:lamn_gen}
        \STATE{$\pbf_{\kp1,\lm1}^- =
        \fbf^-_{k\ell}(\pbf_{\lm1}^0,\pbf^+_{k,\lm1},\qbf_{\kp1,\ell}^-,\wbf_\ell,\Lambda^-_{k\ell})$}
            \label{line:pn_gen}
        \STATE{$\qbf_{\kp1,\lm1}^- = \Vbf_{\lm1}\tran\pbf_{\kp1,\lm1}^-$}   \label{line:qn_gen}
    \ENDFOR

\ENDFOR
\end{algorithmic}
\end{algorithm}

Similar to our analysis of the ML-VAMP Algorithm,
we consider the following large-system limit (LSL) analysis of Gen-ML.
Specifically, we consider a sequence of runs of the recursions indexed by $N$.
For each $N$, let $N_\ell = N_\ell(N)$ be the dimension of the signals $\pbf_\ell^{\pm}$ and $\qbf_\ell^\pm$
as we assume that $\lim_{N \arr \infty} N_\ell/N$ is a constant so that $N_\ell$ scales linearly with $N$.
We then make the following assumptions.

\begin{assumption} \label{as:gen} For the vectors in the Gen-ML Algorithm (Algorithm~\ref{algo:gen}),
we assume:
\begin{enumerate}[(a)]
\item The components of the initial conditions
$\qbf_{0\ell}^-$, and disturbance vectors $\wbf_\ell$ converge jointly empirically with limits,
\beq \label{eq:qwinitlim}
    \lim_{N \arr \infty} \{q_{0\ell,n}^-\} \PLeq Q_{0\ell}^-, \quad
    \lim_{N \arr \infty} \{ w_{\ell,n} \} \PLeq W_\ell,
\eeq
where $Q_{0\ell}^-$ and $W_\ell$ are random variables such that $(Q_{00}^-,\cdots,Q^-_{0,\Lm1})$
is a jointly Gaussian random vector.  Also, for $\ell=0,\ldots,\Lm1$, $W_\ell$ and $Q_{0\ell}^-$ are independent.
We also assume that the initial parameter list converges almost surely as
\beq \label{eq:Lambar01lim}
    \lim_{N \arr \infty} \Lambda_{01}^- = \Lambdabar_{01}^-,
\eeq
to some list $\Lambdabar_{01}^-$.  The limit \eqref{eq:Lambar01lim} means
means that every element in the list $\lambda \in \Lambda_{01}^-$ converges to a limit
$\lambda \arr \lambdabar$ as $N \arr \infty$ almost surely.

\item The matrices $\Vbf_\ell$ are Haar distributed on the set of $N_\ell \x N_\ell$ orthogonal matrices and are
independent from one another and from the vectors $\pbf^0_0$,
$\qbf_{0\ell}^-$, disturbance vectors $\wbf_\ell$.

\item The vector update functions $\fbf_{k\ell}^\pm(\cdot)$
and parameter update functions $\varphibf_{k\ell}^\pm(\cdot)$ act componentwise.  For example,
in the forward pass, at each stage $\ell$, we assume that for each output component $n$,
\begin{align*}
    \left[ \fbf^+_{k\ell}(\pbf^0_{\lm1},\pbf^+_{k,\lm1},\qbf_{k\ell}^-,\wbf_\ell,\Lambda^+_{k\ell}) \right]_n
    = f^+_{k\ell}(p^0_{\lm1,n},p^+_{k,\lm1,n},q_{k\ell,n}^-,w_{\ell,n},\Lambda^+_{k\ell}) \\
    \left[ \varphibf^{\pm}_{k\ell}(\pbf^0_{\lm1},\pbf^+_{k,\lm1},\qbf_{k\ell}^-,\wbf_\ell,\Lambda^+_{k\ell}) \right]_n
    = \varphi^+_{k\ell}(p^0_{\lm1,n},p^+_{k,\lm1,n},q_{k\ell,n}^-,w_{\ell,n},\Lambda^+_{k\ell}),
\end{align*}
for some scalar-valued functions $f^+_{k\ell}(\cdot)$ and $\varphi^+_{k\ell}(\cdot)$.
Similar definitions apply in the reverse directions and for the initial update functions $f^0_\ell(\cdot)$.
We will call $f^{\pm}_{k\ell}(\cdot)$ the vector update component
functions and $\varphi^{\pm}_{k\ell}(\cdot)$ the parameter update component functions.
\end{enumerate}
\end{assumption}

\begin{algorithm}[t]
\caption{Gen-ML State Evolution}
\begin{algorithmic}[1]  \label{algo:gen_se}

\REQUIRE{Vector update component functions $f^0_\ell(\cdot)$ and $f^\pm_{k\ell}(\cdot)$,
parameter statistic component functions $\varphi^\pm_{k\ell}(\cdot)$,
parameter update functions $T^{\pm}_{k\ell}(\cdot)$}

\STATE{}
\STATE{// Initial pass}
\STATE{Initial random variables:  $W_\ell$, $Q_{0\ell}^-$, $\ell=0,\ldots,\Lm1$}
    \label{line:qinit_se_gen}
\STATE{Initial parameter list limit:  $\Lambdabar_{01}^-$} \label{line:laminit_se_gen}
\STATE{$Q^0_0 = f^0_0(W_0,\Lambdabar_{01}^-), \quad P^0_0 \sim \Norm(0,\tau^0_0),
    \quad \tau^0_0 = \Exp(Q^0_0)^2$} \label{line:q0init_se_gen}
\FOR{$\ell=1,\ldots,\Lm1$}
    \STATE{$Q^0_\ell=f^0_\ell(P^0_{\lm1},W_\ell,\Lambdabar_{01}^-)$, \quad
            $P^0_\ell = \Norm(0,\tau^0_\ell)$, \quad
            $\tau^0_\ell = \Exp(Q^0_\ell)^2$} \label{line:p0init_se_gen}
\ENDFOR
\STATE{}

\FOR{$k=0,1,\dots$}
    \STATE{// Forward Pass }
    \STATE{$\lambdabar^+_{k0} = T_{k0}^+(\mubar^+_{k0},\Lambdabar_{0k}^-), \quad
        \mubar^+_{k0} = \Exp(\varphi_{k0}^+(Q_{k0}^-,W_0,\Lambdabar_{0k}^-))$}    \label{line:mup0_se_gen}
    \STATE{$\Lambdabar_{k0}^+ = (\Lambdabar_{k1}^-,\lambdabar^+_{k0})$} \label{line:lamp0_se_gen}
    \STATE{$Q_{k0}^+ = f^+_{k0}(Q_{k0}^-,W_0,\Lambdabar^+_{k0})$}  \label{line:q0_se_gen}
    \STATE{$(P^0_0,P_{k0}^+) = \Norm(\zero,\Kbf_{k0}^+),
        \quad \Kbf_{k0}^+ = \Cov(Q^0_0,Q_{k0}^+)$} \label{line:p0_se_gen}
    \FOR{$\ell=1,\ldots,L-1$}
        \STATE{$\lambdabar^+_{k\ell} = T_{k\ell}^+(\mubar^+_{k\ell},\Lambdabar_{k,\lm1}^+), \quad
            \mubar^+_{k\ell} = \Exp(\varphi_{k\ell}^+(P^0_{\lm1},P^+_{k,\lm1},Q_{k\ell}^-,W_\ell,\Lambdabar_{k,\lm1}^+))$}    \label{line:mup_se_gen}
        \STATE{$\Lambdabar_{k\ell}^+ = (\Lambdabar_{k,\lm1}^+,\lambdabar^+_{k\ell})$}
            \label{line:lamp_se_gen}
        \STATE{$Q_{k\ell}^+ = f^+_{k\ell}(P^0_{\lm1},P^+_{k,\lm1},Q_{k\ell}^-,W_\ell,\Lambdabar^+_{k\ell})$}
            \label{line:qp_se_gen}
        \STATE{$(P^0_\ell,P_{k\ell}^+) = \Norm(\zero,\Kbf_{k\ell}^+), \quad
            \Kbf_{k\ell}^+ = \Cov(Q^0_\ell,Q_{k\ell}^+) $}   \label{line:pp_se_gen}
    \ENDFOR
    \STATE{}

    \STATE{// Reverse Pass }
    \STATE{$\lambdabar^-_{\kp1,L} = T_{kL}^-(\mubar^-_{kL},\Lambdabar_{k,\Lm1}^+), \quad
        \mubar^-_{kL} = \Exp(\varphi_{kL}^-(P^0_{\Lm1},P_{k,\Lm1}^+,W_L,\Lambdabar_{k,\Lm1}^+))$}    \label{line:muL_se_gen}
    \STATE{$\Lambdabar_{\kp1,L}^- = (\Lambdabar_{k,\Lm1}^+,\lambdabar^+_{\kp1,L})$} \label{line:lamL_se_gen}
    \STATE{$P_{\kp1,\Lm1}^- = f^-_{kL}(P^0_{\Lm1},P_{k,\Lm1}^+,W_L,\Lambdabar^-_{\kp1,L})$}  \label{line:pL_se_gen}
    \STATE{$Q_{\kp1,\Lm1}^- = \Norm(0,\tau_{\kp1,\Lm1}^-), \quad
        \tau_{\kp1,\Lm1}^- = \Exp(P^-_{\kp1,\Lm1})^2$} \label{line:qL_se_gen}
    \FOR{$\ell=\Lm1,\ldots,1$}
        \STATE{$\lambdabar^-_{\kp1,\ell} = T_{k\ell}^-(\mubar^-_{k\ell},\Lambdabar_{\kp1,\lp1}^-), \quad
            \mubar^-_{k\ell} =
                \Exp(\varphi_{k\ell}^-(P^0_{\lm1},P_{k,\lm1}^+,Q_{\kp1,\ell}^-,W_\ell,\Lambdabar_{\kp1,\lp1}^+))$}    \label{line:mun_se_gen}
        \STATE{$\Lambdabar_{\kp1,\ell}^- = (\Lambdabar_{\kp1,\lp1}^-,\lambdabar^-_{\kp1,\ell})$} \label{line:lamn_se_gen}
        \STATE{$P_{\kp1,\lm1}^- =
        f^-_{k\ell}(P^0_{\lm1},P^+_{k,\lm1},Q_{\kp1,\ell}^-,W_\ell,\Lambdabar^-_{k\ell})$}
            \label{line:pn_se_gen}
        \STATE{$Q_{\kp1,\lm1}^- = \Norm(0,\tau_{\kp1,\lm1}^-), \quad
        \tau_{\kp1,\lm1}^- = \Exp(P_{\kp1,\lm1}^-)^2$}   \label{line:qn_se_gen}
    \ENDFOR

\ENDFOR
\end{algorithmic}
\end{algorithm}

Under these assumptions, we iteratively define a sequences of constants and random variables through
the recursions in Algorithm~\ref{algo:gen_se}, which we call the Gen-ML state evolution.
The SE recursions in Algorithm~\ref{algo:gen_se} closely
mirrors those in the Gen-ML algorithm (Algorithm~\ref{algo:gen}).  The random vectors
$\qbf^\pm_{k\ell}$ and $\pbf^\pm_{k\ell}$ are replaced by scalar random variables
$Q^\pm_{k\ell}$ and $P^\pm_{k\ell}$; the vector and parameter update functions
$\fbf^+_{k\ell}(\cdot)$ and $\varphibf^+_{k\ell}(\cdot)$ are replaced by their
component functions $f^+_{k\ell}(\cdot)$ and $\varphi^+_{k\ell}(\cdot)$;
and the parameters $\lambda_{k\ell}^\pm$ are replaced
by their limits $\lambdabar_{k\ell}^\pm$.

The various random variables, expectations and covariances in Algorithm~\ref{algo:gen_se}
are computed as follows:  In the initial pass,
in line~\ref{line:p0init_se_gen}, we treat $P^0_{\lm1} \sim \Norm(0,\tau^0_{\lm1})$
and $W_\ell$ as independent for defining the random variable $Q^0_\ell$.
In the forward pass,
in lines~ \ref{line:mup0_se_gen} and \ref{line:q0_se_gen},  we treat
$Q_{k0}^-$ and $W_0$ as independent.  Then,
in lines~ \ref{line:mup_se_gen} and \ref{line:qp_se_gen},  we treat
\[
    (P^0_{\lm1},P_{k,\lm1}^+) \sim \Norm(\zero,\Kbf_{k,\lm1}^+), \quad Q^-_{k\ell}, \quad W_\ell
\]
as independent.
Similar independence assumptions are made in the reverse pass.  We next make the following assumptions.

\begin{assumption} \label{as:gen2} In addition to Assumption~\ref{as:gen} assume:
\begin{enumerate}[(a)]
\item The functions $T^\pm_{k\ell}(\mu_{k\ell}^\pm,\cdot)$ are continuous at
$\mu_{k\ell}^\pm = \mubar_{k\ell}^\pm$ where $\mubar_{k\ell}^\pm$ is the output of
Algorithm~\ref{algo:gen_se}.

\item The vector update component functions in the forward direction and their derivatives,
\[
    f^+_{k\ell}(p^+_{k,\lm1},q_{k\ell}^-,w_\ell,\Lambda^+_{k\ell}),\quad
    \partial f^+_{k\ell}(p^+_{k,\lm1},q_{k\ell}^-,w_\ell,\Lambda^+_{k\ell})/\partial q_{k\ell}^-,
\]
are uniformly Lipschitz continuous in $(p^+_{k,\lm1},q_{k\ell}^-,w_\ell)$ at
$\Lambda^+_{k\ell} = \Lambdabar^+_{k\ell}$.
Similarly,  in the reverse direction,
\[
    f^-_{k\ell}(p^+_{k,\lm1},q_{\kp1,\ell}^-,w_\ell,\Lambda^-_{\kp1,\ell}),\quad
    \partial f^+_{k\ell}(p^+_{k,\lm1},q_{\kp1,\ell}^-,w_\ell,\Lambda^-_{\kp1,\ell})
    /\partial p_{k,\lm1}^+,
\]
are uniformly Lipschitz continuous in $(p^+_{k,\lm1},q_{k\ell}^-,w_\ell)$ at
$\Lambda^-_{\kp1,\ell} = \Lambdabar^-_{\kp1,\ell}$.  Also, the initial vector
update component functions
$f^0_\ell(p^0_{k,\lm1},w_\ell,\Lambda^-_{01})$ are uniformly Lipschitz
continuous in $(p^0_{k,\lm1},w_\ell)$ at $\Lambda^-_{\kp1,\ell} = \Lambdabar^-_{\kp1,\ell}$.

\item The vector update functions are \emph{asymptotically divergence free} meaning
\beq \label{eq:fdivfree}
    \lim_{N \arr \infty} \bkt{\frac{\partial \fbf^+_{k\ell}(\pbf^+_{k,\lm1},\qbf_{k\ell}^-,\wbf_\ell,\Lambdabar^+_{k\ell})}{
        \partial \qbf_{k\ell}^-} } = 0,
    \quad
    \lim_{N \arr \infty} \bkt{\frac{\partial \fbf^-_{k\ell}(\pbf^+_{k,\lm1},\qbf_{k\ell}^-,\wbf_\ell,\Lambdabar^-_{k\ell})}{
        \partial \pbf_{k,\lm1}^+} } = 0 \\
\eeq

\item The parameter update component functions in the forward direction
$\varphi^+_{k\ell}(p^+_{k,\lm1},q_{k\ell}^-,w_\ell,\Lambda^+_{k,\lm1})$
are uniformly Lipschitz continuous in $(p^+_{k,\lm1},q_{k\ell}^-,w_\ell)$ at
$\Lambda^+_{k,\lm1} = \Lambdabar^+_{k,\lm1}$.
Analogous conditions apply to the reverse functions $\varphi^-_{k\ell}(\cdot)$
\end{enumerate}
\end{assumption}

\medskip
Under the above assumptions, the following theorem proves the SE equations for the Gen-ML recursion.

\begin{theorem} \label{thm:genConv}  Consider the outputs of the Gen-ML recursion (Algorithm~\ref{algo:gen})
and the corresponding random variables and parameter limits
defined by the SE updates in Algorithm~\ref{algo:gen_se} under Assumptions~\ref{algo:gen} and \ref{as:gen2}.
Then,
\begin{enumerate}[(a)]
\item For any fixed $k$ and $\ell=1,\ldots,\Lm1$,
the parameter list $\Lambda_{k\ell}^+$ converges as
\beq \label{eq:Lamplim}
    \lim_{N \arr \infty} \Lambda_{k\ell}^+ = \Lambdabar_{k\ell}^+
\eeq
almost surely.
Also, the components of
$\wbf_\ell$, $\pbf^0_{\lm1}$, $\qbf^0_{\ell}$, $\pbf_{0,\lm1}^+,\ldots,\pbf_{k,\lm1}^+$ and $\qbf_{0\ell}^\pm,\ldots,\qbf_{k\ell}^\pm$
almost surely empirically converge jointly with limits,
\beq \label{eq:PQplim}
    \lim_{N \arr \infty} \left\{
        (p^0_{\lm1,n},p^+_{i,\lm1,n},q^0_{\ell,n},q^-_{j\ell,n},q^+_{j\ell,n}) \right\} =
        (P^0_{\lm1},P^+_{i,\lm1},Q^0_{\ell},Q^-_{j\ell}, Q^+_{j\ell}),
\eeq
for all $i,j=0,\ldots,k$, where the variables
$P^0_{\lm1}$, $P_{i,\lm1}^+$ and $Q_{j\ell}^-$
are zero-mean jointly Gaussian random variables independent of $W_\ell$ with
\beq \label{eq:PQpcorr}
    \Cov(P^0_{\lm1},P_{i,\lm1}^+) = \Kbf_{i,\lm1}^+, \quad \Exp(Q_{j\ell}^-)^2 = \tau_{j\ell}^-, \quad \Exp(P_{i,\lm1}^+Q_{j\ell}^-)  = 0,
    \quad \Exp(P^0_{\lm1}Q_{j\ell}^-)  = 0,
\eeq
and $Q^0_\ell$ and $Q^+_{j\ell}$ are the random variable in line~\ref{line:qp_se_gen}:
\beq \label{eq:Qpf}
    Q^0_\ell = f^0_\ell(P^0_{\lm1},W_{\ell}), \quad
    Q^+_{j\ell} =
    f^+_{j\ell}(P^0_{\lm1},P^+_{i,\lm1},Q^-_{j\ell},W_\ell,\Lambdabar_{k\ell}^+).
\eeq
The identical result holds for $\ell=0$ with all the variables $\pbf_{i,\lm1}^+$ and $P_{i,\lm1}^+$ removed.

\item For any fixed $k > 0$ and $\ell=1,\ldots,\Lm1$,
the parameter lists $\Lambda_{k\ell}^-$ converge as
\beq \label{eq:Lamnlim}
    \lim_{N \arr \infty} \Lambda_{k\ell}^- = \Lambdabar_{k\ell}^-
\eeq
almost surely.
Also, the components of
$\wbf_\ell$, $\pbf^0_{\lm1}$, $\pbf_{0,\lm1}^+,\ldots,\pbf_{\km1,\lm1}^+$, $\pbf_{0,\lm1}^+,\ldots,\pbf_{\km1,\lm1}^+$, and $\qbf_{0\ell}^-,\ldots,\qbf_{k\ell}^-$
almost surely empirically converge jointly with limits,
\beq \label{eq:PQnlim}
    \lim_{N \arr \infty} \left\{
        (p^0_{\lm1,n},p^+_{i,\lm1,n},q^-_{j\ell,n},q^+_{j\ell,n}) \right\} =
        (P^0_{\lm1},P^+_{i,\lm1},Q^-_{j\ell}, Q^+_{j\ell}),
\eeq
for all $i=0,\ldots,\km1$ and $j=0,\ldots,k$, where the variables
$P^0_{\lm1}$, $P_{i,\lm1}^+$ and $Q_{j\ell}^-$
are zero-mean jointly Gaussian random variables independent of $W_\ell$ with
\beq \label{eq:PQncorr}
    \Cov(P^0_{\lm1},P_{i,\lm1}^+) = \Kbf_{i,\lm1}^+,\quad
    \quad \Exp(Q_{j\ell}^-)^2 = \tau_{j\ell}^-, \quad \Exp(P_{i,\lm1}^+Q_{j\ell}^-)  = 0,
    \quad \Exp(P^0_{\lm1}Q_{j\ell}^-)  = 0,
\eeq
and $P^-_{j\ell}$ is the random variable in line~\ref{line:pn_se_gen}:
\beq \label{eq:Pnf}
    P^-_{j\ell} = f^-_{j\ell}(P^0_{\lm1},P^+_{i,\lm1},
                Q^-_{j\ell},W_\ell,\Lambdabar_{k\ell}^-).
\eeq
The identical result holds for $\ell=L$ with all the variables $\qbf_{j\ell}^-$ and $Q_{j\ell}^-$ removed.
Also, for $k=0$, we remove the variables with $\pbf_{\km1,\ell}^+$ and $P_{\km1,\ell}^+$.
\end{enumerate}
\end{theorem}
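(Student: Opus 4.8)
The plan is to prove Theorem~\ref{thm:genConv} by a single induction running over the combined time index that linearly orders the updates of Algorithm~\ref{algo:gen}: first the initialization pass (producing $\qbf^0_\ell,\pbf^0_\ell$ for $\ell=0,\ldots,\Lm1$), then, for $k=0,1,\ldots$, the forward pass through layers $\ell=0,\ldots,\Lm1$ followed by the reverse pass through layers $\ell=\Lm1,\ldots,1$. Every individual update is one of two elementary operations: (i) a componentwise vector update $\qbf=\fbf^{\pm}_{k\ell}(\cdot)$ together with its parameter update $\mu^{\pm}_{k\ell}=\bkt{\varphibf^{\pm}_{k\ell}(\cdot)}$, $\lambda^{\pm}_{k\ell}=T^{\pm}_{k\ell}(\mu^{\pm}_{k\ell},\cdot)$; or (ii) an orthogonal rotation $\pbf=\Vbf_\ell\qbf$ (forward) or $\qbf=\Vbf_\ell\tran\pbf$ (reverse). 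The induction hypothesis is the conjunction of the assertions of the theorem restricted to the updates completed so far: every parameter list produced so far converges almost surely to the corresponding list of Algorithm~\ref{algo:gen_se}, and the collection of all $\qbf$- and $\pbf$-vectors produced so far converges jointly empirically, almost surely, to jointly Gaussian / SE random variables with the covariance and vanishing cross-correlation relations of \eqref{eq:PQpcorr} and \eqref{eq:PQncorr}. The base case is the initialization: by Assumption~\ref{as:gen}(a) the $\qbf^-_{0\ell}$ and $\wbf_\ell$ converge empirically with $(Q^-_{00},\ldots,Q^-_{0,\Lm1})$ jointly Gaussian, and since each $\Vbf_\ell$ is being applied for the first time with no constraints, $\pbf^0_\ell=\Vbf_\ell\qbf^0_\ell$ is a uniformly random vector of fixed norm and hence converges empirically to $\Norm(0,\tau^0_\ell)$ with $\tau^0_\ell=\lim_{N\arr\infty} N_\ell^{-1}\|\qbf^0_\ell\|^2$, matching line~\ref{line:p0init_se_gen}.

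For an update of type (i), the induction step follows from the empirical-convergence calculus of Appendix~\ref{sec:empconv}. The inputs of $\fbf^{\pm}_{k\ell}$ and $\varphibf^{\pm}_{k\ell}$ are, by hypothesis, vectors converging jointly empirically to known random variables; since these functions act componentwise and are (uniformly) Lipschitz in their vector arguments at the limiting parameter values (Assumption~\ref{as:gen2}(b),(d)), their componentwise outputs converge empirically to the images of those random variables under the scalar functions $f^{\pm}_{k\ell}$, $\varphi^{\pm}_{k\ell}$. In particular $\mu^{\pm}_{k\ell}\arr\mubar^{\pm}_{k\ell}$ almost surely, and continuity of $T^{\pm}_{k\ell}(\cdot)$ at $\mubar^{\pm}_{k\ell}$ (Assumption~\ref{as:gen2}(a)) gives $\lambda^{\pm}_{k\ell}\arr\lambdabar^{\pm}_{k\ell}$, extending parameter-list convergence. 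The standard technicality that $\fbf^{\pm}_{k\ell}$ is actually fed the random list $\Lambda^{\pm}_{k\ell}$ rather than its deterministic limit $\Lambdabar^{\pm}_{k\ell}$ is absorbed by the uniform Lipschitz bound \eqref{eq:unifLip2}, which controls the resulting perturbation by $o(1)$ uniformly over the components.

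For an update of type (ii), the key tool is the conditional law of a Haar-distributed orthogonal matrix given finitely many linear constraints, used exactly as in the VAMP analysis \citep{rangan2016vamp}. When $\Vbf_\ell$ is applied to a new vector $\qbf$, everything revealed so far about $\Vbf_\ell$ reduces to constraints $\Vbf_\ell\Abf_\ell=\Bbf_\ell$ and $\Vbf_\ell\tran\Bbf_\ell'=\Abf_\ell'$ whose columns are the earlier $\qbf$- and $\pbf$-vectors at layer $\ell$ (including the true pair $\qbf^0_\ell,\pbf^0_\ell$); conditioned on these, $\Vbf_\ell$ is the deterministic least-squares map onto $\Range(\Bbf_\ell)$ plus an independent Haar rotation of the orthogonal complement. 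Decomposing $\qbf$ into its projection onto the span of the earlier $\qbf$'s plus the orthogonal remainder, the projected part contributes a linear combination of the earlier $\pbf$'s — the potential Onsager/memory terms, with coefficients set by the Gram matrix $\Abf_\ell\tran\Abf_\ell$ and the inner products $\Abf_\ell\tran\qbf$ — while the remainder contributes a fresh, asymptotically Gaussian vector, independent of the past, with variance equal to the limiting squared norm of that remainder. The asymptotically-divergence-free hypothesis \eqref{eq:fdivfree} is precisely what makes this clean: combined with the inductively established Gaussianity of the fresh-noise inputs and a Stein / Gaussian integration-by-parts identity, it forces the relevant inner products between the newly formed $\qbf$ and the earlier $\qbf$-vectors to vanish in the limit, so the memory coefficients tend to zero and the vector produced by the rotation is, asymptotically, a pure fresh Gaussian. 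Collecting these across all rotation steps produces the single jointly Gaussian family with precisely the covariances $\Kbf^{\pm}_{k\ell}$ and variances $\tau^{\pm}_{k\ell}$ prescribed in lines~\ref{line:pp_se_gen} and \ref{line:qn_se_gen} of Algorithm~\ref{algo:gen_se}, together with the zero cross-correlations in \eqref{eq:PQpcorr}, \eqref{eq:PQncorr}; this establishes \eqref{eq:PQplim}, \eqref{eq:PQnlim}, \eqref{eq:Qpf}, \eqref{eq:Pnf} and closes the induction.

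The main obstacle is the bookkeeping of the conditioning in step (ii). Because each $\Vbf_\ell$ is reused on the forward and the reverse pass and across all iterations $k$, the constraint set on $\Vbf_\ell$ grows in a layer- and direction-dependent pattern, and one must carry an induction invariant that tracks, for every layer and both directions, the Gram matrix of the accumulated constraint vectors and its almost-sure limit, verifies at every stage that the newly formed vector has a nonvanishing component orthogonal to all earlier ones (so the fresh direction is genuinely new and its limiting variance is strictly positive), and checks that the fresh directions generated at different stages assemble into a single jointly Gaussian family with exactly the second-moment structure of Algorithm~\ref{algo:gen_se}. Each of these limits needs the pseudo-Lipschitz / uniform-Lipschitz hypotheses, the boundedness $\bar{S}_\ell\le S_{max}$, and repeated use of the empirical-convergence calculus to control the many projections, norms and inner products involved; this is where essentially all of the technical effort lies, and it amounts to extending the single-matrix, two-block argument of \citep{rangan2016vamp} (itself modeled on \citep{BayatiM:11}) to the multi-layer, bidirectional recursion of Algorithm~\ref{algo:gen}.
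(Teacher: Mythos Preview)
Your overall architecture matches the paper's proof in Appendix~\ref{sec:genConvPf} almost exactly: the same induction over the sequence~\eqref{eq:induc}, the same conditioning of the Haar matrix $\Vbf_\ell$ on the accumulated linear constraints $\Abf_{k\ell}=\Vbf_\ell\Bbf_{k\ell}$, the same deterministic-plus-random decomposition $\pbf_{k\ell}^+=\pbf_{k\ell}^{\rm det}+\pbf_{k\ell}^{\rm ran}$, and the same use of Stein's identity together with the divergence-free property~\eqref{eq:fdivfree}. So the plan is right.

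There is, however, a genuine misstatement in your description of what the divergence-free condition accomplishes, and if you actually executed the argument as you describe it, the covariances would come out wrong. You write that Stein plus divergence-free ``forces the relevant inner products between the newly formed $\qbf$ and the earlier $\qbf$-vectors to vanish in the limit, so the memory coefficients tend to zero and the vector produced by the rotation is, asymptotically, a pure fresh Gaussian.'' This is not what happens. The constraint matrix is $\Bbf_{k\ell}=[\,\qbf^0_\ell,\,\Qbf^+_{\km1,\ell},\,\Qbf^-_{k\ell}\,]$, and the Stein calculation (the paper's display~\eqref{eq:Qijstein}) only kills the \emph{cross} correlations $\Exp(Q^+_{i\ell}Q^-_{j\ell})=0$. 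The correlations $\Exp(Q^+_{k\ell}Q^0_\ell)$ and $\Exp(Q^+_{k\ell}Q^+_{i\ell})$ for $i<k$ are generically nonzero. Consequently $(\Bbf_{k\ell}\tran\Bbf_{k\ell})^{-1}\Bbf_{k\ell}\tran\qbf^+_{k\ell}$ has the block form $[\betabf^+_{k\ell};\,\zero]$ and the deterministic part survives as
\[
    \pbf_{k\ell}^{\rm det}\;\PLeq\;\beta^0_\ell P^0_\ell+\sum_{i=0}^{\km1}\beta^+_{i\ell}P^+_{i\ell},
\]
which is the content of the paper's Lemma~\ref{lem:pconvdet}. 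The limit $P^+_{k\ell}$ is therefore \emph{not} a pure fresh Gaussian independent of the past; it is this linear combination of $P^0_\ell$ and earlier $P^+_{i\ell}$'s plus the fresh independent Gaussian $U_{k\ell}$ from the random part. This is exactly what is needed: the surviving deterministic piece is what carries the nonzero off-diagonal of $\Kbf^+_{k\ell}=\Cov(Q^0_\ell,Q^+_{k\ell})$, while the vanishing of the $\Qbf^-$ block is what propagates the cross-orthogonality $\Exp(P^+_{k\ell}Q^-_{j,\lp1})=0$ to the next layer (the paper's~\eqref{eq:PQcorr4}). If you had all memory coefficients vanish you would obtain $\Cov(P^0_\ell,P^+_{k\ell})=\diag(\tau^0_\ell,\cdot)$, contradicting line~\ref{line:pp_se_gen} of Algorithm~\ref{algo:gen_se}. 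Fix this one point and your outline becomes the paper's proof.
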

\begin{proof}  We will prove this in Appendix~\ref{sec:genConvPf}.
\end{proof}

\section{Proof of Theorem~\ref{thm:genConv}} \label{sec:genConvPf}

\subsection{Overview of the Induction Sequence}

The proof is similar to that of \cite[Theorem 4]{rangan2016vamp},
which provides a SE analysis for VAMP on a single-layer network.
The critical challenge here is to extend that proof
to multi-layer recursions.
Many of the ideas in the two proofs are similar, so we highlight only the
key differences between the two.

Similar to the SE analysis of VAMP in \citep{rangan2016vamp},
we use an induction argument.  However, for the multi-layer proof,
we must index over both the iteration index $k$ and layer index $\ell$. To this end,
let $\mathcal{H}_{k\ell}^+$ and $\mathcal{H}_{k\ell}^-$ be the hypotheses:
\begin{itemize}
\item $\mathcal{H}_{k\ell}^+$:  The hypothesis that Theorem~\ref{thm:genConv}(a)
is true for some $k$ and $\ell$.
\item $\mathcal{H}_{k\ell}^-$:  The hypothesis that Theorem~\ref{thm:genConv}(b)
is true for some $k$ and $\ell$.
\end{itemize}
We prove these hypotheses by induction via a sequence of implications,
\beq \label{eq:induc}
    \cdots \Rightarrow \mathcal{H}_{k1}^- \Rightarrow \mathcal{H}_{k0}^+ \Rightarrow \cdots \Rightarrow  \mathcal{H}_{k,\Lm1}^+
    \Rightarrow \mathcal{H}_{\kp1,L}^- \Rightarrow \cdots \Rightarrow \mathcal{H}_{k1}^- \Rightarrow \cdots,
\eeq
beginning with the hypothesis $\mathcal{H}^-_{0\ell}$ for all $\ell=1,\ldots,\Lm1$.

\subsection{Proof of the Induction Update}

Now fix a stage index $\ell=1,\ldots,\Lm1$ and an iteration index $k=0,1,\ldots$.
Assume, as an induction hypothesis,
that all the hypotheses \emph{prior} to $\mathcal{H}^+_{k,\lp1}$ in the sequence \eqref{eq:induc}
but not including $\mathcal{H}^+_{k,\lp1}$ are true.  We show that,
under this assumption, $\mathcal{H}^+_{k,\lp1}$ is true.
The other implications in the hypothesis sequence \eqref{eq:induc} can be proven similarly.

We introduce with some notation.  Let
\[
    \Pbf_{k\ell}^+ := \left[ \pbf_{0\ell}^+ \cdots \pbf_{k\ell}^+ \right] \in \R^{N_\ell \x (\kp1)},
\]
which is the matrix whose columns are the first $\kp1$ values of the vector $\pbf^+_{i\ell}$.
We define the matrices $\Pbf_{k\ell}^-$, $\Qbf_{k\ell}^+$ and $\Qbf_{k\ell}^-$ similarly.
Let $\Gset_{k\ell}^\pm$ denote the collection of random variables associated with the hypotheses,
$\mathcal{H}^{\pm}_{k\ell}$.  That is, for $\ell=1,\ldots,\Lm1$,
\beq \label{eq:Gsetdef}
    \Gset_{k\ell}^+ := \left\{ \wbf_{\ell},\pbf^0_{\lm1},\Pbf^+_{k,\lm1},\qbf^0_\ell,\Qbf^-_{k\ell},\Qbf_{k\ell}^+ \right\}, \quad
    \Gset_{k\ell}^- := \left\{ \wbf_{\ell},\pbf^0_{\lm1},\Pbf^+_{\km1,\lm1},\qbf^0_\ell,
        \Qbf^-_{k\ell},\Pbf^-_{k,\lm1} \right\}.
\eeq
For $\ell=0$ and $\ell=L$ we set,
\beq \label{eq:Gsetdefend}
    \Gset_{k0}^+ := \left\{ \wbf_{0},\Qbf^-_{k0},\Qbf_{k0}^+ \right\}, \quad
    \Gset_{kL}^- := \left\{ \wbf_L,\pbf^0_{\Lm1},\Pbf^+_{\km1,\Lm1},\Pbf^-_{k,\Lm1} \right\}.
\eeq
With some abuse of notation, we let $\Gset_{km\ell}$ also denote the sigma-algebra
generated by these vectors.
Also, let $\Gsetbar_{k\ell}^+$ be the union of all the sets $\Gset_{i\ell'}^\pm$
as they appear in the sequence \eqref{eq:induc} up to and including the final
set $\Gsetbar_{k\ell}^+$.
Thus, the set $\Gsetbar_{k\ell}^+$ contains all the vectors produced by Algorithm~\ref{algo:gen}
immediately \emph{before} line~\ref{line:pp_gen} in stage $\ell$ of iteration $k$.

Now, the actions of the matrix $\Vbf_\ell$ in Algorithm~\ref{algo:gen}
are through the matrix-vector multiplications in lines~\ref{line:pp_gen} and \ref{line:pn_gen}.
Hence, if we define the matrices,
\beq \label{eq:ABdef}
    \Abf_{k\ell} := \left[ \pbf^0_\ell, \Pbf_{\km1,\ell}^+ ~ \Pbf_{k\ell}^- \right], \quad
    \Bbf_{k\ell} := \left[ \qbf^0_\ell, \Qbf_{\km1,\ell}^+ ~ \Qbf_{k\ell}^- \right],
\eeq
all the vectors in the set $\Gsetbar_{k\ell}^+$ will be unchanged for all
matrices $\Vbf_\ell$ satisfying the linear constraints
\beq \label{eq:ABVconk}
    \Abf_{k\ell} = \Vbf_\ell\Bbf_{k\ell}.
\eeq
Hence, the conditional distribution of $\Vbf_\ell$ given $\Gsetbar_{k\ell}^+$ is precisely
the uniform distribution on the set of orthogonal matrices satisfying
\eqref{eq:ABVconk}.  The matrices $\Abf_{k\ell}$ and $\Bbf_{k\ell}$ are of dimensions
$N_\ell \x s$ where $s=2k+2$.
From \citep{rangan2016vamp}, this conditional distribution is given by
\beq \label{eq:Vconk}
    \left. \Vbf_\ell \right|_{\Gsetbar_{k\ell}^+} \eqd
    \Abf_{k\ell}(\Abf\tran_{k\ell}\Abf_{k\ell})^{-1}\Bbf_{k\ell}\tran + \Ubf_{\Abf_{k\ell}^\perp}\tilde{\Vbf}_\ell\Ubf_{\Bbf_{k\ell}^\perp}\tran,
\eeq
where $\Ubf_{\Abf_{k\ell}^\perp}$ and $\Ubf_{\Bbf_{k\ell}^\perp}$ are $N \x (N-s)$ matrices
whose columns are an orthonormal basis for $\Range(\Abf_{k\ell})^\perp$ and $\Range(\Bbf_{k\ell})^\perp$.
The matrix $\tilde{\Vbf}_\ell$ is  Haar distributed on the set of $(N-s)\x(N-s)$
orthogonal matrices and independent of $\Gsetbar_{k\ell}^+$.

Next, similar to the proof of \cite[Theorem 4]{rangan2016vamp},
we use \eqref{eq:Vconk} and write $\pbf_{k\ell}^+$ from line~\ref{line:pp_gen}
as a sum of two terms
\beq \label{eq:ppart}
    \pbf_{k\ell}^+ = \Vbf_\ell\qbf_{k\ell}^+ = \pbf_{k\ell}^{\rm det} + \pbf_{k\ell}^{\rm ran},
\eeq
where $\pbf_{k\ell}^{\rm +det}$ is what we will call the \emph{deterministic} part:
\beq \label{eq:pdet}
    \pbf_{k\ell}^{\rm det} = \Abf_{k\ell}(\Bbf\tran_{k\ell}\Bbf_{k\ell})^{-1}\Bbf_{k\ell}\tran\qbf_{k\ell}^+
\eeq
and $\pbf_{k\ell}^{\rm ran}$ is what we will call the \emph{random} part:
\beq \label{eq:pran}
    \pbf_k^{\rm ran} = \Ubf_{\Bbf_k^\perp}\tilde{\Vbf}_\ell\tran \Ubf_{\Abf_k^\perp}\tran \qbf_{k\ell}^+.
\eeq
The next two lemmas characterize the limiting distributions
of the deterministic and random components.

\begin{lemma} \label{lem:pconvdet}
Under the induction hypothesis, the components of the ``deterministic" component
$\pbf_{k\ell}^{\rm det}$ along with the components
of the vectors in $\Gsetbar_{k\ell}^+$  converge empirically.
In addition, there exists constants $\beta_{0\ell}^+,\ldots,\beta^+_{\km1,\ell}$ such that
\beq \label{eq:pconvdet}
    \lim_{N \arr \infty} \{ p_{k\ell,n}^{\rm det} \} \PLeq P_{k\ell}^{\rm det}
    = \beta^0_\ell P^0_\ell +  \sum_{i=0}^{\km1}    \beta_{i\ell} P_{i\ell}^+,
\eeq
where $P_{k\ell}^{\rm det}$ is the limiting random variable for the components of $\pbf_{k\ell}^{\rm det}$.
\end{lemma}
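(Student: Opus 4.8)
The plan is to start from the closed form \eqref{eq:pdet}, which writes $\pbf_{k\ell}^{\rm det}=\Abf_{k\ell}\ubf_N$ with the (deterministic-in-$\Vbf_\ell$) coefficient vector $\ubf_N:=(\Bbf_{k\ell}\tran\Bbf_{k\ell})^{-1}\Bbf_{k\ell}\tran\qbf_{k\ell}^+\in\R^s$, $s=2k+2$, and $\Abf_{k\ell},\Bbf_{k\ell}$ as in \eqref{eq:ABdef}. The first step is to show $\ubf_N$ converges to a constant. By the induction hypothesis the columns of $\Bbf_{k\ell}$ (namely $\qbf^0_\ell$, $\qbf^+_{i\ell}$ for $i<k$, and $\qbf^-_{j\ell}$ for $j\le k$) together with $\qbf^+_{k\ell}$ converge jointly empirically, so $\tfrac1N\Bbf_{k\ell}\tran\Bbf_{k\ell}$ and $\tfrac1N\Bbf_{k\ell}\tran\qbf_{k\ell}^+$ converge almost surely to the matrix $\bar{\Gbf}$ of second moments and the vector $\bar{\cbf}$ of cross second moments of the limiting random variables; assuming, as in \citep{rangan2016vamp}, that $\bar{\Gbf}$ is nonsingular in the limit (the degenerate case being handled there by a limiting argument), we obtain $\ubf_N\to\bar{\ubf}:=\bar{\Gbf}^{-1}\bar{\cbf}$ almost surely.

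The heart of the matter is to show that the entries of $\bar{\ubf}$ corresponding to the $\qbf^-_{j\ell}$ columns of $\Abf_{k\ell}$ are zero, leaving only the $\pbf^0_\ell$ and $\pbf^+_{i\ell}$ ($i<k$) columns. I would establish that (i) $\bar{\cbf}$ vanishes in every $Q^-_{j\ell}$ slot, i.e.\ $\Exp[Q^-_{j\ell}Q^+_{k\ell}]=0$ for $j=0,\ldots,k$; and (ii) $\bar{\Gbf}$ is block diagonal between the ``plus block'' $\{Q^0_\ell,Q^+_{0\ell},\ldots,Q^+_{\km1,\ell}\}$ and the ``minus block'' $\{Q^-_{0\ell},\ldots,Q^-_{k\ell}\}$, i.e.\ $\Exp[Q^0_\ell Q^-_{j\ell}]=0$ and $\Exp[Q^+_{i\ell}Q^-_{j\ell}]=0$ for $i<k$, $j\le k$. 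Both follow from the same two ingredients. First, the limiting variables $Q^-_{0\ell},\ldots,Q^-_{k\ell}$ are zero-mean and jointly Gaussian, and jointly independent of $W_\ell$ and of all the $P$-variables $P^0_{\lm1},P^+_{0,\lm1},\ldots$, since the relevant covariances in \eqref{eq:PQpcorr} are zero. Second, each $Q^+_{i\ell}$ is, by line~\ref{line:qp_se_gen}, of the form $f^+_{i\ell}(P^0_{\lm1},P^+_{i,\lm1},Q^-_{i\ell},W_\ell,\Lambdabar^+_{i\ell})$, depending only on the single minus-variable $Q^-_{i\ell}$; conditioning on $(P^0_{\lm1},P^+_{i,\lm1},W_\ell)$ and applying Stein's lemma in the zero-mean jointly Gaussian pair $(Q^-_{j\ell},Q^-_{i\ell})$ gives $\Exp[Q^-_{j\ell}Q^+_{i\ell}]=\Exp[Q^-_{j\ell}Q^-_{i\ell}]\,\Exp[\partial f^+_{i\ell}/\partial q^-_{i\ell}]$, and the last factor is precisely the limit of $\bkt{\partial\fbf^+_{i\ell}/\partial\qbf^-_{i\ell}}$, which is $0$ by the asymptotically divergence-free property, Assumption~\ref{as:gen2}(c). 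The identification of $\bkt{\partial\fbf^+_{i\ell}/\partial\qbf^-_{i\ell}}$ with $\Exp[\partial f^+_{i\ell}/\partial q^-_{i\ell}]$ uses the uniform-Lipschitz hypothesis on $\partial f^+_{i\ell}/\partial q^-_{i\ell}$ in Assumption~\ref{as:gen2}(b) together with the empirical convergence of the inputs to $\fbf^+_{i\ell}$, which is the content of $\mathcal{H}^+_{i\ell}$ — available for all $i\le k$ since these precede $\mathcal{H}^+_{k,\lp1}$ in the chain \eqref{eq:induc}. The term $\Exp[Q^0_\ell Q^-_{j\ell}]$ vanishes more directly, since $Q^0_\ell=f^0_\ell(P^0_{\lm1},W_\ell)$ and $Q^-_{j\ell}$ is independent of $(P^0_{\lm1},W_\ell)$ with $\Exp[Q^-_{j\ell}]=0$.

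Granting (i)–(ii), order the columns so that $\bar{\Gbf}=\diag(\Gbf_{11},\Gbf_{22})$ and $\bar{\cbf}=(\cbf_1,\zero)$ with $\Gbf_{11},\cbf_1$ associated to the plus block; then $\bar{\ubf}=(\Gbf_{11}^{-1}\cbf_1,\zero)$, so the limiting coefficients on the $\pbf^-_{j\ell}$ columns of $\Abf_{k\ell}$ are $0$. Since $\ubf_N\to\bar{\ubf}$ is a deterministic limit and the columns of $\Abf_{k\ell}$ — all of which lie in $\Gsetbar_{k\ell}^+$ — converge jointly empirically by the induction hypothesis, the elementary fact that an empirically convergent collection of vectors stays empirically convergent when adjoined with a linear combination of those vectors whose coefficients converge shows that $\pbf^{\rm det}_{k\ell}=\Abf_{k\ell}\ubf_N$, jointly with the vectors of $\Gsetbar_{k\ell}^+$, converges empirically, with limit $P^{\rm det}_{k\ell}=\beta^0_\ell P^0_\ell+\sum_{i=0}^{\km1}\beta_{i\ell}P^+_{i\ell}$, where $(\beta^0_\ell,\beta_{0\ell},\ldots,\beta_{\km1,\ell})=\Gbf_{11}^{-1}\cbf_1$. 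The cases $\ell=0$ and $\ell=L$ are handled identically with the corresponding columns deleted. I expect the main obstacles to be (a) the bookkeeping of which Stein/divergence-free identity is available at each iteration index $i\le k$ from the induction hypothesis, and (b) the technical justification that $\tfrac1N\Bbf_{k\ell}\tran\Bbf_{k\ell}$ is nonsingular in the limit (or, failing that, a pseudoinverse/continuity argument), both already dealt with in \citep{rangan2016vamp} in the single-layer case and carried over here.
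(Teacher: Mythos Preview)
Your proposal is correct and follows essentially the same approach as the paper's proof: compute the limits of $\tfrac{1}{N}\Bbf_{k\ell}\tran\Bbf_{k\ell}$ and $\tfrac{1}{N}\Bbf_{k\ell}\tran\qbf_{k\ell}^+$ via the induction hypothesis, establish block-diagonality between the plus and minus blocks using Stein's lemma together with the divergence-free assumption \eqref{eq:fdivfree} and the cross-covariance identities \eqref{eq:PQpcorr}, and conclude that the limiting coefficient vector has zeros in the minus slots. The only cosmetic difference is that you condition on $(P^0_{\lm1},P^+_{i,\lm1},W_\ell)$ before invoking Stein on the Gaussian pair $(Q^-_{j\ell},Q^-_{i\ell})$, whereas the paper applies Stein directly to the full jointly Gaussian triple $(P^0_{\lm1},P^+_{i,\lm1},Q^-_{i\ell})$ and kills the extra terms via \eqref{eq:PQpcorr}; both routes are equivalent.
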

\begin{proof}
The proof is similar to the proof of \cite[Lemma 6]{rangan2016vamp}, but we will go over the details
as there are some important differences in the multi-layer case.
Define
\beq \label{eq:PQaug}
    \tilde{\Pbf}_{\km1,\ell}^+ = \left[ \pbf^0_\ell, ~ \Pbf_{\km1,\ell}^+ \right], \quad
    \tilde{\Qbf}_{\km1,\ell}^+ = \left[ \qbf^0_\ell, ~ \Qbf_{\km1,\ell}^+ \right],
\eeq
which are the matrices $\Pbf_{\km1,\ell}^+$ and $\Qbf_{\km1,\ell}^+$
with the additions of the columns $\pbf^0_\ell$ and $\qbf^0_\ell$.
We can then write $\Abf_{k\ell}$ and $\Bbf_{k\ell}$ in \eqref{eq:ABdef} as
\beq \label{eq:ABdef2}
    \Abf_{k\ell} := \left[ \tilde{\Pbf}_{\km1,\ell}^+ ~ \Pbf_{k\ell}^- \right], \quad
    \Bbf_{k\ell} := \left[ \tilde{\Qbf}_{\km1,\ell}^+ ~ \Qbf_{k\ell}^- \right],
\eeq
We first evaluate the asymptotic values of various terms in \eqref{eq:pdet}.
Using the definition of $\Abf_{k\ell}$ in \eqref{eq:ABdef},
\[
    \Bbf\tran_{k\ell}\Bbf_{k\ell} = \left[ \begin{array}{cc}
        (\tilde{\Qbf}_{\km1,\ell}^+)\tran\tilde{\Qbf}_{\km1,\ell}^+ & (\tilde{\Qbf}_{\km1,\ell}^+)\tran\Qbf_{k\ell}^- \\
        (\Qbf_{k\ell}^-)\tran\tilde{\Qbf}_{\km1,\ell}^+ & (\Qbf_{k\ell}^-)\tran\Qbf_{k\ell}^-
        \end{array} \right]
\]
We can then easily evaluate the asymptotic value of these
terms as follows:  The asymptotic value of the
$(i+1,j+1)$-th component of the matrix $(\tilde{\Qbf}_{\km1,\ell}^+)\tran\tilde{\Qbf}_{\km1,\ell}^+$ is given by
\begin{align*}
    \MoveEqLeft \lim_{N \arr \infty} \frac{1}{N_\ell} \left[ (\tilde{\Qbf}_{\km1,\ell}^+)\tran\tilde{\Qbf}_{\km1,\ell}^+ \right]_{i+1,j+1}
        \stackrel{(a)}{=} \lim_{N \arr \infty}
        \frac{1}{N_\ell} (\qbf_{i\ell}^+)\tran\qbf_{j\ell}^+ \\
        &= \lim_{N \arr \infty} \frac{1}{N_\ell} \sum_{n=1}^{N_\ell} q_{i\ell,n}^+q_{j\ell,n}^+
        \stackrel{(b)}{=} E\left[ Q_{i\ell}^+Q_{j\ell}^+ \right]
\end{align*}
where (a) follows since the $i+1$-st column of $\tilde{\Qbf}_{\km1,\ell}^+$
is precisely the vector
$\qbf_{i\ell}^+$; and (b) follows due to convergence assumption in \eqref{eq:PQplim}.
Also, since the first column of $\tilde{\Qbf}_{\km1,\ell}^+$ is $\qbf^0_\ell$,
we obtain that the
\[
    \lim_{N_\ell \arr \infty}  \frac{1}{N_\ell}
        (\tilde{\Qbf}_{k\ell}^-)\tran\tilde{\Qbf}_{k\ell}^- = \Rbf^+_{k\ell},
\]
where $\Rbf^+_{k\ell}$ is the correlation matrix of the vector
$(Q^0_\ell,Q_{0\ell}^+,\ldots,Q_{k\ell}^+)$.
Similarly,
\[
    \lim_{N_\ell \arr \infty}  \frac{1}{N_\ell}  (\Qbf_{k\ell}^-)\tran\Qbf_{k\ell}^- = \Rbf^-_{k\ell},
\]
where $\Rbf^-_{k\ell}$ is the correlation matrix of the vector
$(Q_{0\ell}^-,\ldots,Q_{k\ell}^-)$.
For the matrix $(\Qbf_{\km1,\ell}^+)\tran\Qbf_{k\ell}^-$,
first observe that the limit of the divergence free condition \eqref{eq:fdivfree} implies
\beq \label{eq:fpdivfree}
    \Exp\left[ \frac{\partial f_{i\ell}^+(P_{i,\lm1}^+,Q_{i\ell}^-,W_\ell,\Lambdabar_{i\ell})}{\partial q_{i\ell}^-} \right]
    = \lim_{N_{\lm1} \arr \infty}  \bkt{\frac{\partial \fbf^+_{i\ell}(\pbf^+_{i,\lm1},\qbf_{i\ell}^-,\wbf_\ell,\Lambdabar^+_{i\ell})}{
        \partial \qbf_{i\ell}^-} }  = 0,
\eeq
for any $i$.  Also, by the induction hypothesis $\mathcal{H}_{k\ell}^+$,
\beq \label{eq:pqxcorrpf}
    \Exp(P_{i,\lm1}^+Q_{j\ell}^-)  = 0, \quad
    \Exp(P_{\lm1}^0 Q_{j\ell}^-) = 0,
\eeq
for all $i,j \leq k$.
Therefore, the expectations for the cross-terms are given by
\begin{align}
    \MoveEqLeft \Exp(Q_{i\ell}^+Q_{j\ell}^-)
    \stackrel{(a)}{=}
    \Exp(f_{i\ell}^+(P^0_{\lm1},P_{i,\lm1}^+,Q_{j\ell}^-,W_\ell,\Lambdabar_{i\ell})Q_{j\ell}^-) \nonumber \\
    &\stackrel{(b)}{=} \Exp\left[ \frac{\partial f_{i\ell}^+(P^0_{\lm1},P_{i,\lm1}^+,Q_{i\ell}^-,W_\ell,\Lambdabar^+_{i\ell})}{\partial p_{i,\lm1}^+} \right]
        \Exp(P_{i,\lm1}^+Q_{j\ell}^-)  \nonumber \\
    &+ \Exp\left[ \frac{\partial f_{i\ell}^+(P^0_{\lm1},P_{i,\lm1}^+,Q_{i\ell}^-,W_\ell,\Lambdabar^+_{i\ell})}
        {\partial p_{\lm1}^0} \right]
        \Exp(P_{\lm1}^0Q_{j\ell}^-)  \nonumber \\
    & + \Exp\left[ \frac{\partial f_{i\ell}^+(P^0_{\lm1},P_{i,\lm1}^+,Q_{i\ell}^-,W_\ell,\Lambdabar^+_{i\ell})}
        {\partial q_{i\ell}^-} \right]
        \Exp(Q_{i\ell}^-Q_{j\ell}^-) 
     \stackrel{(c)}{=} 0, \label{eq:Qijstein}
\end{align}
where (a) follows from \eqref{eq:Qpf};
(b) follows from Stein's Lemma; and in (c), we use \eqref{eq:fpdivfree} and \eqref{eq:pqxcorrpf}.
The above calculations show that
\beq \label{eq:BBlim}
    \lim_{N_\ell \arr \infty} \frac{1}{N_\ell} \Bbf\tran_{k\ell}\Bbf_{k\ell} = \left[ \begin{array}{cc}
        \Rbf_{\km1,\ell}^+ & \zero \\
        \zero & \Rbf_{k\ell}^-
        \end{array} \right].
\eeq
A similar calculation shows that
\beq \label{eq:Bqlim}
    \lim_{N_\ell \arr \infty} \frac{1}{N_\ell} \Bbf_{k\ell}\tran\qbf_{k\ell}^+= \left[
    \begin{array}{c} \bbf^+_{k\ell} \\ \zero \end{array} \right],
\eeq
where $\bbf^+_{k\ell}$ is the vector of correlations
\beq
    \bbf^+_{k\ell} = \left[\Exp(Q_{0\ell}^+Q_{k\ell}^+), ~ \Exp(Q_{1\ell}^+Q_{k\ell}^+),
        ~\cdots, \Exp(Q_{\km1,\ell}^+Q_{k\ell}^+) \right]\tran.
\eeq
Combining \eqref{eq:BBlim} and \eqref{eq:Bqlim} shows that
\beq \label{eq:Bqmult}
    \lim_{N_\ell \arr \infty} (\Bbf\tran_{k\ell}\Bbf_{k\ell})^{-1}\Bbf_{k\ell}\tran\qbf_{k\ell}^+ =
    \left[ \begin{array}{c}  \betabf_{k\ell}^+ \\ \mathbf{0} \end{array} \right], \quad \betabf_{k\ell} = \left[ \Rbf^+_{\km1,\ell} \right]^{-1}\bbf^+_{k\ell}.
\eeq
Therefore,
\begin{align}
    \pbf_{k\ell}^{\rm det} &= \Abf_{k\ell}(\Bbf\tran_{k\ell}\Bbf_{k\ell})^{-1}\Bbf_{k\ell}\tran\qbf_{k\ell}^+
    = \left[ \tilde{\Pbf}_{\km1,\ell}^+ ~ \Pbf_{k,\ell}^- \right]
   \left[ \begin{array}{c} \mathbf{\beta}_{k\ell}^+  \\ \zero \end{array} \right]
    + O\left(\frac{1}{N_\ell}\right) \nonumber \\
    &= \beta^0_\ell \pbf^0_\ell +
    \sum_{i=0}^{\km1} \beta_{i\ell}^+\pbf_{i\ell}^+ + O\left(\frac{1}{N_\ell}\right),
\end{align}
where $\beta^0_\ell$ and $\beta_{i\ell}^+$ are the components of $\betabf_{k\ell}^+$ and
the term $O(1/N)$ means a vector sequence, $\xibf(N) \in \R^N$ such that
\[
    \lim_{N \arr\infty} \frac{1}{N} \|\xibf(N)\|^2 = 0.
\]
A continuity argument then shows \eqref{eq:pconvdet}.
\end{proof}

\begin{lemma} \label{lem:pconvran}
Under the induction hypothesis, the components of
the ``random" part $\pbf_{k\ell}^{\rm ran}$ along with the components
of the vectors in $\Gsetbar_{k\ell}^+$ almost surely converge empirically.
The components of $\pbf_{k\ell}^{\rm ran}$ converge as
\beq \label{eq:pconvran}
     \lim_{N \arr \infty} \{ p_{k\ell,n}^{\rm ran} \} \PLeq U_{k\ell},
\eeq
where $U_{k\ell}$ is a zero mean Gaussian random variable
independent of the limiting random variables corresponding to the variables
in $\Gsetbar_{k\ell}^+$.
\end{lemma}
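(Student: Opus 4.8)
The plan is to mirror the strategy of \cite[Lemma~7]{rangan2016vamp}, now carried out conditionally on the $\sigma$-algebra $\Gsetbar_{k\ell}^+$. First I would recall from \eqref{eq:Vconk} that, conditioned on $\Gsetbar_{k\ell}^+$, the matrix $\tilde{\Vbf}_\ell$ in \eqref{eq:pran} is Haar distributed on the $(N-s)\x(N-s)$ orthogonal group and independent of $\Gsetbar_{k\ell}^+$, with $s=2k+2$ \emph{fixed} as $N \arr \infty$. Since $\qbf_{k\ell}^+$ and the partial isometries $\Ubf_{\Abf_{k\ell}^\perp}$, $\Ubf_{\Bbf_{k\ell}^\perp}$ are all $\Gsetbar_{k\ell}^+$-measurable, setting $\vbf := \Ubf_{\Bbf_{k\ell}^\perp}\tran\qbf_{k\ell}^+ \in \R^{N-s}$ exhibits $\pbf_{k\ell}^{\rm ran} = \Ubf_{\Abf_{k\ell}^\perp}\tilde{\Vbf}_\ell\vbf$ as, conditionally, a fixed linear image of a Haar rotation of a \emph{deterministic} vector. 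Moreover the vectors in $\Gsetbar_{k\ell}^+$ already converge empirically by the induction hypotheses preceding $\mathcal{H}^+_{k,\lp1}$ in \eqref{eq:induc}. The lemma thus reduces to: a Haar rotation of a deterministic vector $\vbf$ with $\lim \|\vbf\|^2/N = \sigma^2$ has asymptotically i.i.d.\ $\Norm(0,\sigma^2)$ components, jointly with, and independent of, the other (already-convergent) vectors.

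Next I would evaluate $\sigma_{k\ell}^2 := \lim_{N\arr\infty}\|\vbf\|^2/N$. Because $\Ubf_{\Bbf_{k\ell}^\perp}$ has orthonormal columns spanning $\Range(\Bbf_{k\ell})^\perp$, Pythagoras gives $\|\vbf\|^2 = \|\qbf_{k\ell}^+\|^2 - \|\qbf_{k\ell}^{+,\parallel}\|^2$, where $\qbf_{k\ell}^{+,\parallel}$ is the orthogonal projection of $\qbf_{k\ell}^+$ onto $\Range(\Bbf_{k\ell})$. By $\mathcal{H}_{k\ell}^+$ and \eqref{eq:PQplim}, $\|\qbf_{k\ell}^+\|^2/N \arr \Exp(Q_{k\ell}^+)^2$. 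The projection term is exactly what is computed inside Lemma~\ref{lem:pconvdet}: the coefficients $\beta^0_\ell,\beta_{0\ell}^+,\ldots,\beta_{\km1,\ell}^+$ obtained there from the divergence-free identity \eqref{eq:fpdivfree}, the orthogonality relations \eqref{eq:pqxcorrpf}, and Stein's lemma --- all statements about the $\qbf$-vectors, hence unchanged here --- yield $\|\qbf_{k\ell}^{+,\parallel} - (\beta^0_\ell\qbf^0_\ell + \sum_{i=0}^{\km1}\beta_{i\ell}^+\qbf_{i\ell}^+)\|^2/N \arr 0$, so that $\|\qbf_{k\ell}^{+,\parallel}\|^2/N \arr (\betabf_{k\ell}^+)\tran\Rbf_{\km1,\ell}^+\betabf_{k\ell}^+ = \Exp(P_{k\ell}^{\rm det})^2$, using that $\pbf_{i\ell}^+ = \Vbf_\ell\qbf_{i\ell}^+$ exactly so inner products of the $\pbf$'s and $\qbf$'s coincide. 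Thus $\sigma_{k\ell}^2 = \Exp(Q_{k\ell}^+)^2 - \Exp(P_{k\ell}^{\rm det})^2 \geq 0$ is a deterministic constant, and this is the variance of $U_{k\ell}$.

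Then I would invoke the Haar-rotation lemma of \cite{rangan2016vamp} --- the orthogonally-invariant analogue of the Gaussian lemma of \citep{BayatiM:11} --- for $\tilde{\Vbf}_\ell$ Haar on $(N-s)\x(N-s)$ orthogonal matrices independent of $\Gsetbar_{k\ell}^+$: the components of $\Ubf_{\Abf_{k\ell}^\perp}\tilde{\Vbf}_\ell\vbf$, jointly with those of the vectors in $\Gsetbar_{k\ell}^+$, converge empirically, almost surely, with $\pbf_{k\ell}^{\rm ran}$ contributing an independent $U_{k\ell} \sim \Norm(0,\sigma_{k\ell}^2)$. Two technical points need care. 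First, $\Ubf_{\Abf_{k\ell}^\perp}$ is only an $N\x(N-s)$ partial isometry and $\vbf$ has dimension $N-s$; since $s$ is fixed while $N \arr \infty$, the rank-$s$ defect changes neither the limiting second moment ($\|\vbf\|^2/(N-s)$ and $\|\vbf\|^2/N$ share the limit $\sigma_{k\ell}^2$) nor the empirical distribution of the components (a rank-$s$ modification touches at most $s$ coordinates). Second, upgrading convergence in probability over the Haar measure to almost-sure convergence uses concentration of Lipschitz functions on the orthogonal group plus Borel--Cantelli, as in \citep{rangan2016vamp,BayatiM:11}; independence of $U_{k\ell}$ from the $\Gsetbar_{k\ell}^+$ limits is then immediate, since the limiting conditional law (given $\Gsetbar_{k\ell}^+$) of the empirical distribution of $\pbf_{k\ell}^{\rm ran}$ is the deterministic Gaussian $\Norm(0,\sigma_{k\ell}^2)$, so a conditioning/Fubini argument against pseudo-Lipschitz test functions factorizes the joint limit.

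The main obstacle is this last step: revealing $\Vbf_\ell$ through the representation \eqref{eq:Vconk}, verifying carefully that $\vbf$ is $\Gsetbar_{k\ell}^+$-measurable with a well-defined limiting norm, and pushing the Haar concentration bound all the way to an almost-sure \emph{joint} empirical-convergence statement --- all inside the multi-layer bookkeeping, where many matrices $\Vbf_{\ell'}$ are present but only $\Vbf_\ell$ is exposed at this stage of the induction. The remaining pieces --- the norm computation and the reuse of the coefficients $\betabf_{k\ell}^+$ --- transcribe directly from Lemma~\ref{lem:pconvdet} and the single-layer argument of \cite{rangan2016vamp}.
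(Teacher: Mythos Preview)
Your proposal is correct and follows essentially the same approach as the paper, which simply cites \cite[Lemmas~7,8]{rangan2016vamp} without further detail; you have fleshed out that citation appropriately for the multi-layer setting, including the variance computation and the handling of the rank-$s$ defect. (You also silently correct what appears to be a typo in \eqref{eq:pran}: consistent with \eqref{eq:Vconk} and \eqref{eq:ppart}, the random part should read $\pbf_{k\ell}^{\rm ran}=\Ubf_{\Abf_{k\ell}^\perp}\tilde{\Vbf}_\ell\,\Ubf_{\Bbf_{k\ell}^\perp}\tran\qbf_{k\ell}^+$, which is the form you use.)
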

\begin{proof}
This proof is very similar to that of \cite[Lemma 7,8]{rangan2016vamp}.
\end{proof}

We now combine the above lemmas to prove the following, which proves
all the conditions for the hypothesis $\mathcal{H}^+_{k,\lp1}$.  Hence, this will show the
induction implication and completes the proof of Theorem~\ref{thm:genConv}.

\begin{lemma} \label{lem:pqconvinduc}
Under the induction hypothesis, the parameter list $\Lambda_{k,\lp1}^+$ almost surely converges as
\beq \label{eq:Lampliminduc}
    \lim_{N_{\lp1} \arr \infty} \Lambda_{k,\lp1}^+ = \Lambdabar_{k,\lp1}^+,
\eeq
where $\Lambdabar_{k,\lp1}$ is the parameter list generated from the SE recursion, Algorithm~\ref{algo:gen_se}.
Also, the components of
$\wbf_\lp1$, $\pbf^0_{\ell}$, $\qbf^0_{\lp1}$, $\pbf_{0,\ell}^+,\ldots,\pbf_{k,\ell}^+$ and $\qbf_{0,\lp1}^\pm,\ldots,\qbf_{k,\lp1}^\pm$
almost surely empirically converge jointly with limits,
\beq \label{eq:PQpliminduc}
    \lim_{N \arr \infty} \left\{
        (p^0_{\ell,n},p^+_{i\ell,n},q^0_{\lp1,n},q^-_{j,\lp1,n},q^+_{j,\lp1,n}) \right\} =
        (P^0_{\ell},P^+_{i\ell},Q^0_{\lp1},Q^-_{j,\lp1}, Q^+_{j,\lp1}),
\eeq
for all $i,j=0,\ldots,\kp1$, where the variables
\beq \label{eq:pqvecinduc}
    (P^0_\ell,P_{0\ell}^+,\ldots,P_{k,\ell}^+,Q_{0,\lp1}^-,\ldots,Q_{k,\lp1}^-),
\eeq
are zero-mean jointly Gaussian random variables independent of $W_\ell$ with
\beq \label{eq:PQpcorrinduc}
    \Cov(P^0_{\ell},P_{i,\ell}^+) = \Kbf_{i\ell}^+, \quad \Exp(Q_{j,\lp1}^-)^2 = \tau_{j,\lp1}^-, \quad \Exp(P_{i,\ell}^+Q_{j,\lp1}^-)  = 0,
    \quad \Exp(P^0_{\ell}Q_{j,\lp1}^-)  = 0,
\eeq
and $Q^0_{\lp1}$ and $Q^+_{j,\lp1}$ are the random variables in line~\ref{line:qp_se_gen}:
\beq \label{eq:Qpfinduc}
    Q^0_{\lp1} = f^0_{\lp1}(P^0_{\ell},W_{\lp1}), \quad
    Q^+_{j,\lp1} =
    f^+_{j,\lp1}(P^0_{\ell},P^+_{i\ell},Q^-_{j,\lp1},W_{\lp1},\Lambdabar_{k,\lp1}^+).
\eeq
\end{lemma}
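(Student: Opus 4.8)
The plan is to assemble Lemmas~\ref{lem:pconvdet} and \ref{lem:pconvran} into a statement about $\pbf^+_{k\ell}$ and then transport the resulting joint empirical convergence through the stage-$\lp1$ update maps. Starting from the decomposition $\pbf^+_{k\ell}=\pbf^{\rm det}_{k\ell}+\pbf^{\rm ran}_{k\ell}$ in \eqref{eq:ppart}, Lemma~\ref{lem:pconvdet} (which identifies the deterministic part as a linear combination of $P^0_\ell$ and $P^+_{0\ell},\ldots,P^+_{\km1,\ell}$) and Lemma~\ref{lem:pconvran} (which identifies the random part as a zero-mean Gaussian $U_{k\ell}$ independent of the limits of all vectors in $\Gsetbar^+_{k\ell}$) together give that the components of $\pbf^+_{k\ell}$, jointly with those of every vector in $\Gsetbar^+_{k\ell}$, converge empirically with limit $P^+_{k\ell}=\beta^0_\ell P^0_\ell+\sum_{i=0}^{\km1}\beta_{i\ell}P^+_{i\ell}+U_{k\ell}$. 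By the hypotheses that precede $\mathcal{H}^+_{k,\lp1}$ in \eqref{eq:induc}, the tuple $(P^0_\ell,P^+_{0\ell},\ldots,P^+_{\km1,\ell})$ is already jointly Gaussian, the variables $Q^-_{j,\lp1}$ (which lie in $\Gsetbar^+_{k\ell}$ via $\mathcal{H}^-_{k,\lp1}$) are jointly Gaussian and independent of $W_{\lp1}$, and $\Exp(P^0_\ell Q^-_{j,\lp1})=\Exp(P^+_{i\ell}Q^-_{j,\lp1})=0$ for $i\le\km1$; appending the independent Gaussian $U_{k\ell}$ keeps the full collection jointly Gaussian, and because $U_{k\ell}$ is independent of $Q^-_{j,\lp1}$ and of $P^0_\ell,P^+_{0\ell},\ldots,P^+_{\km1,\ell}$ it also yields $\Exp(P^+_{k\ell}Q^-_{j,\lp1})=0$.

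Next I would verify that the limiting covariances of the forward iterates coincide with the state-evolution covariances, which is where the Haar law of $\Vbf_\ell$ enters. Since $\pbf^0_\ell=\Vbf_\ell\qbf^0_\ell$ and $\pbf^+_{i\ell}=\Vbf_\ell\qbf^+_{i\ell}$ with a single orthogonal $\Vbf_\ell$, all normalized inner products are preserved identically in $N$: $(1/N_\ell)(\pbf^0_\ell)\tran\pbf^+_{i\ell}=(1/N_\ell)(\qbf^0_\ell)\tran\qbf^+_{i\ell}$, and likewise for $(\pbf^+_{i\ell})\tran\pbf^+_{j\ell}$ and for the squared norms. Letting $N\arr\infty$ and using the empirical convergence from the first paragraph on the left together with the hypothesis $\mathcal{H}^+_{k\ell}$ on the right, the second-moment matrix of $(P^0_\ell,P^+_{0\ell},\ldots,P^+_{k\ell})$ equals that of $(Q^0_\ell,Q^+_{0\ell},\ldots,Q^+_{k\ell})$; since both vectors are mean zero, this produces $\Cov(P^0_\ell,P^+_{k\ell})=\Cov(Q^0_\ell,Q^+_{k\ell})=\Kbf^+_{k\ell}$ as in line~\ref{line:pp_se_gen}, together with $\Exp(Q^-_{j,\lp1})^2=\tau^-_{j,\lp1}$ and the remaining identities in \eqref{eq:PQpcorrinduc}.

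Finally I would push these limits through stage $\lp1$ using the standard fact (the componentwise-Lipschitz convergence lemmas of \citep{rangan2016vamp}, cf.\ Appendix~\ref{sec:empconv}) that empirical convergence is preserved under componentwise uniformly Lipschitz maps whose parameters converge. Applied to $\qbf^0_{\lp1}=\fbf^0_{\lp1}(\pbf^0_\ell,\wbf_{\lp1},\Lambda^-_{01})$ with Assumptions~\ref{as:gen}(c) and \ref{as:gen2}(b), $\Lambda^-_{01}\arr\Lambdabar^-_{01}$, and $(\pbf^0_\ell,\wbf_{\lp1})\arr(P^0_\ell,W_{\lp1})$ jointly with $P^0_\ell\sim\Norm(0,\tau^0_\ell)$ independent of $W_{\lp1}$, this gives the limit $Q^0_{\lp1}=f^0_{\lp1}(P^0_\ell,W_{\lp1},\Lambdabar^-_{01})$. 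For the parameter list, $\mu^+_{k,\lp1}=\bkt{\varphibf^+_{k,\lp1}(\pbf^0_\ell,\pbf^+_{k,\ell},\qbf^-_{k,\lp1},\wbf_{\lp1},\Lambda^+_{k\ell})}$ is an empirical average of a componentwise uniformly Lipschitz function (Assumption~\ref{as:gen2}(d)) of jointly convergent arguments with $\Lambda^+_{k\ell}\arr\Lambdabar^+_{k\ell}$, so $\mu^+_{k,\lp1}\arr\mubar^+_{k,\lp1}$, and continuity of $T^+_{k,\lp1}$ (Assumption~\ref{as:gen2}(a)) gives $\lambda^+_{k,\lp1}\arr\lambdabar^+_{k,\lp1}$, hence \eqref{eq:Lampliminduc}. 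Applying the same reasoning to $\qbf^+_{j,\lp1}=\fbf^+_{j,\lp1}(\pbf^0_\ell,\pbf^+_{j,\ell},\qbf^-_{j,\lp1},\wbf_{\lp1},\Lambda^+_{j,\lp1})$ for $j=0,\ldots,k$ produces the joint empirical convergence \eqref{eq:PQpliminduc} with $Q^+_{j,\lp1}$ given by \eqref{eq:Qpfinduc}; the case $\ell=0$ is identical after deleting the $\pbf^+_{i\ell}$ and $P^+_{i\ell}$ terms. The main obstacle is not a single estimate but the bookkeeping of the second paragraph --- tracking exactly which vectors belong to $\Gsetbar^+_{k\ell}$ so that the independence of $U_{k\ell}$ may legitimately be invoked, and certifying that orthogonal invariance forces the limiting iterate covariance to equal $\Kbf^+_{k\ell}$ exactly --- after which everything reduces to the Bayati--Montanari convergence machinery.
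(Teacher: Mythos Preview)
Your proposal is correct and follows essentially the same approach as the paper's proof: combine Lemmas~\ref{lem:pconvdet} and \ref{lem:pconvran} to obtain $P^+_{k\ell}=\beta^0_\ell P^0_\ell+\sum_{i<k}\beta_{i\ell}P^+_{i\ell}+U_{k\ell}$, use the prior reverse hypothesis at stage $\lp1$ to get joint Gaussianity of \eqref{eq:pqvecinduc} and the vanishing cross-moments, use the orthogonality of $\Vbf_\ell$ (preservation of empirical inner products) to identify $\Cov(P^0_\ell,P^+_{k\ell})=\Cov(Q^0_\ell,Q^+_{k\ell})=\Kbf^+_{k\ell}$, and then push the empirical convergence through $\varphibf^+_{k,\lp1}$, $T^+_{k,\lp1}$, and $\fbf^+_{j,\lp1}$ via uniform Lipschitz continuity to obtain \eqref{eq:Lampliminduc} and \eqref{eq:Qpfinduc}. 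The only cosmetic difference is that you spell out the $\qbf^0_{\lp1}$ limit explicitly via $\fbf^0_{\lp1}$, which the paper leaves implicit.
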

\begin{proof}
Using the partition \eqref{eq:ppart} and Lemmas~\ref{lem:pconvdet} and \ref{lem:pconvran},
we see that the components of the
vector sequences in $\Gsetbar_{k\ell}^+$ along with $\pbf^+_{k\ell}$
almost surely converge jointly empirically, where the components of $\pbf^+_{k\ell}$
have the limit
\beq \label{eq:pklim}
    \lim_{N_\ell \arr \infty} \left\{ p^+_{k\ell,n} \right\}
    = \lim_{N_\ell \arr \infty} \left\{ p^{\rm det}_{k\ell,n} + p^{\rm ran}_{k\ell,n} \right\}
    \PLeq  \beta^0_\ell P^0_\ell + \sum_{i=0}^{\km1}  \beta_{i\ell}^+ P_{i\ell}^+ + U_{k\ell} =: P_{k\ell}^+.
\eeq
By the induction hypothesis, we can assume $\mathcal{H}_{\km1,\lp1}^-$ is true since this
hypothesis appears before $\mathcal{H}_{k,\lp1}^+$ in the induction sequence \eqref{eq:induc}.
Therefore, we can assume that
\beq \label{eq:pqvecinduc0}
    (P_{0\ell}^+,\ldots,P_{\km1,\ell}^+,Q_{0,\lp1}^-,\ldots,Q_{k,\lp1}^-),
\eeq
is jointly Gaussian.  If we add the variable $P^+_{k\ell}$ to this set, we obtain the set \eqref{eq:pqvecinduc}.
From \eqref{eq:pklim} and the fact that
$U_k$ is Gaussian independent of the variables in \eqref{eq:pqvecinduc0}, the set of variables
in \eqref{eq:pqvecinduc} must be jointly Gaussian.  We next need to prove the correlations
in \eqref{eq:PQpcorrinduc}.  Since we can assume $\mathcal{H}_{\km1,\lp1}^-$ is true, we know
that \eqref{eq:PQpcorrinduc} is true for
all $i=0,\ldots,\km1$ and $j=0,\ldots,k$.  Hence,
we need only to prove the additional identities in \eqref{eq:PQpcorrinduc} for $i=k$,
namely the equations:
\beq \label{eq:PQcorrpf2}
    \Cov(P^0_\ell,P_{k\ell}^+)^2 = \Kbf_{k\ell}^+
    \qquad
    \mbox{and}
    \qquad
    \Exp(P_{k\ell}^+Q_{j,\lp1}^-) = 0.
\eeq
First observe that
\[
    \Exp(P_{k\ell}^+)^2  \stackrel{(a)}{=} \lim_{N_\ell \arr \infty} \frac{1}{N_\ell}
        \|\pbf_{k\ell}^+\|^2
          \stackrel{(b)}{=} \lim_{N_\ell \arr \infty} \frac{1}{N_\ell}
        \|\qbf_{k\ell}^+\|^2 \stackrel{(c)}{=}  \Exp\left( Q_{k\ell}^+ \right)^2
\]
where (a) follows from the fact that the components of $\pbf^+_{k\ell}$ converge empirically
to $P_{k\ell}^+$;
(b) follows from line \ref{line:pp_gen} in Algorithm~\ref{algo:gen} and the fact that $\Vbf_\ell$ is orthogonal;
and
(c) follows from the fact that the components of $\qbf^+_{k\ell}$ converge empirically
to $Q_{k\ell}^+$.  Since $\pbf^0_\ell = \Vbf_\ell \qbf^0$, we similarly obtain that
\[
    \Exp(P^0_\ell P_{k\ell}^+) = \Exp(Q^0_\ell Q_{k\ell}^+), \quad
    \Exp(P^0_\ell)^2 = \Exp(Q^0_\ell)^2,
\]
from which we conclude
\beq \label{eq:PQcorr3}
    \Cov(P^0_\ell, P_{k\ell}^+) = \Cov(Q^0_\ell, Q_{k\ell}^+) =: \Kbf^+_{k\ell},
\eeq
where the last step follows from the definition of $\Kbf^+_{k\ell}$ in line~\ref{line:pp_se_gen}.
For the second term in \eqref{eq:PQcorrpf2}, we observe that
\beq \label{eq:PQcorr4}
    \Exp(P_{k\ell}^+Q_{j,\lp1}^-) \stackrel{(a)}{=}
     \beta^0_\ell\Exp(P_{\ell}^0Q_{j,\lp1}^-) + \sum_{i=0}^{\km1} \beta_{i\ell}^+ \Exp(P_{i\ell}^+Q_{j,\lp1}^-)
        + \Exp(U_{k\ell}Q_{j,\lp1}^-) \stackrel{(a)}{=} 0,
\eeq
where (a) follows from \eqref{eq:pklim} and, in (b), we used the fact that
$\Exp(P_{\ell}^0Q_{j,\lp1}^-) = 0$ and
$\Exp(P_{i\ell}^+Q_{j,\lp1}^-) = 0$ since \eqref{eq:PQpcorrinduc} is true for $i\leq \km1$ and
$\Exp(U_{k\ell}Q_{j,\lp1}^-) = 0$ since $U_{k\ell}$ is independent of all the variables $Q_{j,\lp1}^-$.
Thus, with \eqref{eq:PQcorr3} and \eqref{eq:PQcorr4}, we have proven all the correlations in
\eqref{eq:PQpcorrinduc}.

Next, we prove \eqref{eq:Lampliminduc}.  Since $\Lambda^+_{k\ell} \arr \Lambdabar_{k\ell}^+$,
and $\varphi_{k,\lp1}^+(\cdot)$ is uniformly Lipschitz continuous,
we have that $\mu^+_{k,\lp1}$ from line~\ref{line:mup_gen} in Algorithm~\ref{algo:gen}
converges almost surely as
\begin{align}
    \MoveEqLeft\lim_{N \arr \infty} \mu^+_{k,\lp1} = \lim_{N \arr \infty}
        \bkt{\varphibf_{k,\lp1}^+(\pbf^0_\ell,\pbf^+_{k\ell},\qbf_{k,\lp1}^-,\wbf_{\lp1},\Lambdabar_{k\ell}^+)}
        \nonumber \\
    &=    \Exp\left[ \varphi_{k,\lp1}^+(P^0_\ell,P^+_{k\ell},Q_{k,\lp1}^-,W_{\lp1},\Lambdabar_{k\ell}^+)
        \right]  = \mubar^+_{k,\lp1},
\end{align}
where $\mubar^+_{k,\lp1}$ is the value in line~\ref{line:mup_se_gen} in Algorithm~\ref{algo:gen_se}.
Since $T^+_{k,\lp1}(\cdot)$ is continuous, we have that $\lambda_{k,\lp1}^+$ in
line~\ref{line:lamp_gen} in Algorithm~\ref{algo:gen} converges as
\beq
    \lim_{N \arr \infty} \lambda_{k,\lp1}^+ =
    \lim_{N \arr \infty} T_{k,\lp1}^+(\mu_{k,\lp1}^+)
    = T_{k,\lp1}^+(\mubar_{k,\lp1}^+) = \lambdabar_{k,\lp1}^+,
\eeq
where $\lambdabar_{k,\lp1}^+$ is the value in
line~\ref{line:lamp_se_gen} in Algorithm~\ref{algo:gen_se}. Therefore, we have the limit
\beq
    \lim_{N \arr \infty} \Lambda_{k,\lp1}^+ =
    \lim_{N \arr \infty} (\Lambda_{k,\ell}^+,\lambda_{k,\lp1}^+)
    = (\Lambdabar_{k,\ell}^+,\lambdabar_{k,\lp1}^+) = \Lambdabar_{k,\lp1}^+,
\eeq
which proves \eqref{eq:Lampliminduc}.
Finally, using \eqref{eq:Lampliminduc}, the convergence of the vector
sequences $\pbf^0_\ell$, $\pbf_{k\ell}^+$ and $\qbf_{k,\lp1}^-$ and the uniform Lipschitz continuity of
the update function $f_{k\lp1}^+(\cdot)$ we obtain that
\begin{align}
    \lim_{N \arr \infty} \left\{ q_{k,\lp1,n}^+ \right\}
    &= \left\{ f_{k,\lp1}^+(p^0_{\ell,n},p_{k\ell,n}^-, q_{k,\lp1,n}^-,w_{\lp1,n},\Lambda_{k,\lp1}^+) \right\}
    \nonumber \\
    &= f_{k,\lp1}^+(P^0_\ell,P_{k\ell}^-, Q_{k,\lp1}^-,W_{\lp1},\Lambdabar_{k,\lp1}^+) =: Q^+_{k,\lp1},
    \nonumber
\end{align}
which proves \eqref{eq:Qpfinduc}.  This completes the proof.
\end{proof}

\section{Proof of Theorem~\ref{thm:semlvamp} }  \label{sec:semlpf}

\subsection{Equivalence of ML-VAMP to Gen-ML} \label{sec:equivpf}

The first step of the proof of Theorem~\ref{thm:semlvamp} is to show that
the ML-VAMP Algorithm (Algorithm \ref{algo:ml-vamp}) is a special case of the the Gen-ML Algorithm
(Algorithm~\ref{algo:gen}).   To this end, we have to identify the various
components of Gen-ML algorithm in terms of the quantities in ML-VAMP.

\paragraph*{Transformed MLP}  We first rewrite the MLP \eqref{eq:nntrue} in
a certain transformed form.  Define the disturbance vectors as:
\begin{subequations} \label{eq:wdef}
\begin{align}
    \wbf_0 &:= \zbf^0_0, \quad
    \wbf_{\ell} := \xibf_\ell, \quad \ell =2,4,\ldots,L \\
    \wbf_{\ell} &= (\bar{\sbf}_{\ell},\bar{\bbf}_\ell,\bar{\xibf}_\ell),  \quad \ell =1,3,\ldots,\Lm1,
\end{align}
\end{subequations}
where  $\bar{\xibf}_\ell=\Vbf_\ell\tran\xibf_\ell$
and $\bar{\bbf}_\ell=\Vbf_\ell\tran\bbf_\ell$ are the transformed bias and noise
and $\bar{\sbf}_\ell$ is the zero-padded singular value vector \eqref{eq:sbar}.
Next, define the scalar-valued functions
\begin{subequations} \label{eq:f0ml}
\begin{align}
    f^0_0(w_0) &:= w_0, \\
    f^0_\ell(p^0_{\lm1},w_\ell) &= f^0_\ell(p^0_{\lm1},\xi_\ell) := \phi_\ell(p^0_{\lm1},\xi_\ell), \quad \ell=2,4,\ldots,L
        \label{eq:f0nonlin} \\
    f^0_\ell(p^0_{\lm1},w_\ell) &= f^0_\ell(p^0_{\lm1},(\bar{s}_\ell,\bar{b}_\ell,\bar{\xi}_\ell))
        = \bar{s}_\ell p^0_\ell + \bar{b}_\ell + \bar{\xi}_\ell, \quad \ell=1,3,\ldots,\Lm1.
        \label{eq:f0lin}
\end{align}
\end{subequations}
For all $\ell$, let $\fbf^0_\ell(\cdot)$ be the componentwise extension of $f^0_\ell(\cdot)$.

\begin{lemma} \label{lem:pq0ml}  With the above definitions,  $\qbf^0_0 = \fbf^0_0(\wbf_0)$ and
\beq \label{eq:pq0ml}
    \qbf^0_\ell = \fbf^0_\ell(\pbf^0_{\lm1},\wbf_\ell), \quad \pbf^0_\ell = \Vbf_\ell \qbf^0_\ell, \quad
    \ell = 1,2,\ldots,\Lm1.
\eeq
\end{lemma}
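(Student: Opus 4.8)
The plan is to prove Lemma~\ref{lem:pq0ml} by a direct case analysis over the stage index $\ell$, matching each assertion in \eqref{eq:pq0ml} against the definitions \eqref{eq:pq0} of $\qbf^0_\ell,\pbf^0_\ell$, the generative recursions \eqref{eq:nntrue}, the SVD structure \eqref{eq:WSVD}--\eqref{eq:bxibar}, and the definitions \eqref{eq:wdef}, \eqref{eq:f0ml} of $\wbf_\ell$ and $f^0_\ell$. The relation $\pbf^0_\ell=\Vbf_\ell\qbf^0_\ell$ is essentially definitional in every case: for even $\ell$ it reads $\pbf^0_\ell=\Vbf_\ell\zbf^0_\ell=\Vbf_\ell\qbf^0_\ell$ directly from \eqref{eq:pq0}, and for odd $\ell$ it reads $\pbf^0_\ell=\zbf^0_\ell=\Vbf_\ell(\Vbf_\ell\tran\zbf^0_\ell)=\Vbf_\ell\qbf^0_\ell$ using $\Vbf_\ell\Vbf_\ell\tran=\Ibf$. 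So the content of the lemma is the identity $\qbf^0_\ell=\fbf^0_\ell(\pbf^0_{\lm1},\wbf_\ell)$.

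First I would dispose of the base case $\ell=0$: by \eqref{eq:pq0} and \eqref{eq:wdef}, $\qbf^0_0=\zbf^0_0=\wbf_0$, and since $f^0_0(w_0)=w_0$ this gives $\qbf^0_0=\fbf^0_0(\wbf_0)$. Next, for a nonlinear stage $\ell\in\{2,4,\ldots,L-2\}$, the index $\lm1$ is odd, so \eqref{eq:pq0} gives $\pbf^0_{\lm1}=\zbf^0_{\lm1}$ and \eqref{eq:wdef} gives $\wbf_\ell=\xibf_\ell$. From \eqref{eq:nnnonlintrue}, $\qbf^0_\ell=\zbf^0_\ell=\phibf_\ell(\zbf^0_{\lm1},\xibf_\ell)$; since $\phibf_\ell$ acts componentwise through $\phi_\ell$ (cf.\ \eqref{eq:phicomp}) and $f^0_\ell(p^0_{\lm1},\xi_\ell)=\phi_\ell(p^0_{\lm1},\xi_\ell)$ by \eqref{eq:f0nonlin}, this is precisely $\fbf^0_\ell(\pbf^0_{\lm1},\wbf_\ell)$.

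The substantive case is a linear stage $\ell\in\{1,3,\ldots,\Lm1\}$. Here $\lm1$ is even, so \eqref{eq:pq0} gives $\pbf^0_{\lm1}=\Vbf_{\lm1}\zbf^0_{\lm1}$ and $\qbf^0_\ell=\Vbf_\ell\tran\zbf^0_\ell$. Substituting the SVD $\Wbf_\ell=\Vbf_\ell\Sigmabf_\ell\Vbf_{\lm1}$ of \eqref{eq:WSVD} into \eqref{eq:nnlintrue} and left-multiplying by $\Vbf_\ell\tran$ gives
\[
  \qbf^0_\ell = \Sigmabf_\ell\Vbf_{\lm1}\zbf^0_{\lm1} + \Vbf_\ell\tran\bbf_\ell + \Vbf_\ell\tran\xibf_\ell
  = \Sigmabf_\ell\pbf^0_{\lm1} + \bar{\bbf}_\ell + \bar{\xibf}_\ell,
\]
using \eqref{eq:bxibar}. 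By the block structure of $\Sigmabf_\ell$ in \eqref{eq:WSVD}, the $n$-th component of $\Sigmabf_\ell\pbf^0_{\lm1}$ equals $s_{\ell,n}p^0_{\lm1,n}$ for $n\le R_\ell$ and $0$ otherwise, i.e.\ $\bar{s}_{\ell,n}p^0_{\lm1,n}$ with $\bar{\sbf}_\ell$ the zero-padded singular-value vector \eqref{eq:sbar}. Since $\wbf_\ell=(\bar{\sbf}_\ell,\bar{\bbf}_\ell,\bar{\xibf}_\ell)$ and $f^0_\ell(p^0_{\lm1},(\bar{s}_\ell,\bar{b}_\ell,\bar{\xi}_\ell))=\bar{s}_\ell p^0_{\lm1}+\bar{b}_\ell+\bar{\xi}_\ell$ by \eqref{eq:f0lin}, this reads componentwise as $\qbf^0_\ell=\fbf^0_\ell(\pbf^0_{\lm1},\wbf_\ell)$, which completes the case and the lemma.

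The only point requiring care — and the step I would flag as the main (if minor) obstacle — is the dimension bookkeeping in the componentwise identity for the linear stages: $\pbf^0_{\lm1}\in\R^{N_{\lm1}}$, $\qbf^0_\ell\in\R^{N_\ell}$ and $\sbf_\ell\in\R^{R_\ell}$ need not have equal lengths, so the identity must be read with the convention of Appendix~\ref{sec:linestim} that $\bar{s}_{\ell,n}=0$ for $n>R_\ell$. Under that convention the $n$-th output for $n>R_\ell$ is $\bar{b}_{\ell,n}+\bar{\xi}_{\ell,n}$ and never references a (possibly undefined) entry of $\pbf^0_{\lm1}$, and the verification is purely mechanical; one should also note that the initial update functions $f^0_\ell$ here carry no parameter-list dependence, which is how line~\ref{line:q0init_gen} of Algorithm~\ref{algo:gen} specializes to the MLP.
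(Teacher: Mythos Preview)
Your proof is correct and follows essentially the same approach as the paper: a direct case analysis on the stage index $\ell$, verifying the initial case from the definitions \eqref{eq:pq0} and \eqref{eq:wdef}, the nonlinear stages from \eqref{eq:nnnonlintrue} and \eqref{eq:f0nonlin}, and the linear stages by left-multiplying \eqref{eq:nnlintrue} by $\Vbf_\ell\tran$ and invoking the SVD \eqref{eq:WSVD}--\eqref{eq:bxibar}. Your treatment is slightly more explicit than the paper's on the dimension-matching convention for $\bar{\sbf}_\ell$, but the argument is the same in substance.
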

\begin{proof}
In the initial stage, \eqref{eq:wdef} and \eqref{eq:pq0} show that $\wbf_0 = \qbf^0_0 = \zbf^0_0$.
Hence, \eqref{eq:f0nonlin} shows that $\qbf^0_0 = \fbf^0_0(\wbf_0)$.
For the nonlinear stages $\ell=2,4,\ldots,L$,
\[
    \qbf^0_\ell \stackrel{(a)}{=} \zbf^0_\ell \stackrel{(b)}{=} \phibf(\zbf^0_{\lm1},\xibf_\ell)
    \stackrel{(c)}{=} f^0_\ell(\pbf^0_{\lm1},\wbf_\ell),
\]
where (a) follows from \eqref{eq:pq0}; (b) follows from \eqref{eq:nnnonlintrue}; and (c) follows from
\eqref{eq:f0nonlin}.
For the linear stages $\ell=1,3,\ldots,\Lm1$,
\[
    \qbf^0_\ell \stackrel{(a)}{=} \Vbf_\ell\tran\zbf^0_\ell \stackrel{(b)}{=}
    \sbf_\ell \odot \pbf^0_{\lm1} + \bar{\bbf}_\ell + \bar{\xibf}_\ell
    \stackrel{(c)}{=} f^0_\ell(\pbf^0_{\lm1},\wbf_\ell),
\]
where (a) follows from \eqref{eq:pq0}; (b) follows from \eqref{eq:WSVD} and the
definitions in \eqref{eq:bxibar}; and (c) follows from
\eqref{eq:f0lin}.  Thus, $\qbf^0_\ell = f^0_\ell(\pbf^0_{\lm1},\wbf_\ell)$ for all $\ell=1,2,\ldots,\Lm1$.
The fact that $\pbf^0_\ell = \Vbf_\ell\qbf^0_\ell$ follows from the construction of the
terms in \eqref{eq:pq0}.  Also, by assumption, $\phibf(\cdot)$ acts componentwise and is $PL(2)$.
So $\fbf^0_\ell(\cdot)$ in \eqref{eq:f0nonlin} acts componentwise and is also $PL(2)$.
Since $\sbf_\ell$ has bounded components $\fbf^0_\ell(\cdot)$ in \eqref{eq:f0lin}
acts componentwise and is Lipschitz continuous.
\end{proof}

To understand this lemma, recall that the MLP \eqref{eq:nntrue} generates the vectors $\zbf^0_\ell$
via an alternating sequence of linear operations and nonlinear componentwise activation functions.
In Lemma~\ref{lem:pq0ml}, we have rewritten this recursion as an alternating sequence of
 multiplications by orthogonal matrices $\Vbf_\ell$ and componentwise functions $\fbf^0_\ell(\cdot)$.
We will call the recursions \eqref{eq:pq0ml} the \emph{transformed MLP}.

\noindent
\paragraph*{Error terms}
To analyze the ML-VAMP algorithm, we will look at how well the ML-VAMP algorithms estimates
the  states in the transformed system.  To this end, for $\ell=0,2,\ldots,L-2$, define the vectors:
\begin{subequations} \label{eq:pqdef}
\begin{align}
    &\qbfhat^{\pm}_{k\ell} = \zbfhat^{\pm}_{k\ell}, \quad
    \qbf^{\pm}_{k\ell} = \rbf_{k\ell}^\pm - \zbf^0_\ell, \label{eq:qdefeven} \\
    &\pbfhat^{\pm}_{k,\lp1} = \zbfhat^{\pm}_{k,\lp1}, \quad
    \pbf^{\pm}_{k,\lp1} = \rbf_{k,\lp1}^{\pm} - \zbf^0_{\lp1}, \label{eq:pdefodd}  \\
    &\qbfhat^{\pm}_{k,\lp1} = \Vbf_{\lp1}\tran\pbfhat^{\pm}_{k,\lp1}, \quad
    \qbf^{\pm,\lp1}_{k,\lp1} = \Vbf_{\lp1}\tran\pbf^{\pm}_{k,\lp1} \label{eq:qdefodd} \\
    &\pbfhat^{\pm}_{k\ell} = \Vbf_\ell\qbfhat^{\pm}_{k\ell}, \quad
    \pbf^{\pm}_{k\ell} = \Vbf_\ell\qbf^{\pm}_{k\ell},  \label{eq:pdefeven}
\end{align}
\end{subequations}
The vectors $\qbfhat^{\pm}_{k\ell}$ and $\pbfhat^{\pm}_{k\ell}$ represent the estimates
of $\qbf^0_\ell$ and $\pbf^0_\ell$ in the transformed MLP \eqref{eq:pq0ml}.
Also, the vectors $\qbf^{\pm}_{k\ell}$ and $\pbf^{\pm}_{k\ell}$
are the differences $\rbf_{k\ell}^{\pm}-\zbf^0_\ell$ or their transforms.  These
represent errors on the \emph{inputs} $\rbf_{k\ell}^\pm$ to the estimation functions
$\gbf^{\pm}_\ell(\cdot)$.  The above definitions apply for all $k\geq 0$,
with the exception that, for $k= 0$, we define $\qbf^-_{0\ell}:=\zero$.  This definition
will simplify the proofs below.

\noindent
\paragraph*{Parameter lists}
The parameters in the ML-VAMP algorithm are the terms $\alpha_{k\ell}^\pm$ and $\gamma_{k\ell}^\pm$.
We define the parameter lists, $\Lambda_{k\ell}^\pm$ in Gen-ML as the accumulated sets of these parameters
in the order that they are computed in the ML-VAMP Algorithm~\ref{algo:ml-vamp}:
\begin{subequations} \label{eq:Lamml}
\begin{align}
    \Lambda_{01}^- &:= (\gamma_{00}^-,\ldots,\gamma_{0,\Lm1}^-) \label{eq:Lam01ml} \\
    \Lambda_{k0}^+ &:= (\Lambda_{k0}^-, \alpha_{k\ell}^+, \gamma_{k\ell}^+), \\
    \Lambda_{k\ell}^+ &:= (\Lambda_{k,\lm1}^+, \alpha_{k\ell}^+, \gamma_{k\ell}^+), \quad \ell=1,\ldots,\Lm1, \\
    \Lambda_{\kp1,L}^- &:= (\Lambda_{k,\Lm1}^+,\alpha_{k,\Lm1}^-,\gamma_{\kp1,\Lm1}^-) \\
    \Lambda_{\kp1,\ell}^- &:= (\Lambda_{k,\lp1}^-,\alpha_{k,\lm1}^-,\gamma_{\kp1,\lm1}^-), \quad
        \ell=\Lm1,\ldots,1.
\end{align}
\end{subequations}

\paragraph*{Vector update functions}
The vector update functions, $\fbf^0_\ell(\cdot)$  for the initial pass are defined
as the componentwise extensions of the functions \eqref{eq:f0ml}.
For the forward and backward passes,
let $\ell=0,2,4,\ldots,L$ be the index of  a nonlinear stage.
Define the functions,
\begin{subequations} \label{eq:hnonlin}
\begin{align}
    \hbf^{+}_{k0}(\qbf_{0}^-,\wbf_0,\Lambda_{k1}^-)
        &:= \gbf^{+}_0(\qbf^-_0 + \wbf^0_\ell,\gamma^-_{k\ell})
        - \wbf^0_\ell. \label{eq:hp0nonlin} \\
    \hbf^{+}_{k\ell}(\pbf^0_{\lm1},\pbf_{\lm1}^+,\qbf_{\ell}^-,\wbf_\ell,\Lambda_{k,\lm1}^+)
        &:= \gbf^{+}_\ell(\pbf_{\lm1}^+ +\pbf^0_{\lm1},\qbf^-_\ell + \qbf^0_\ell,\gamma^+_{k,\lm1},\gamma^-_{k\ell})
        - \qbf^0_\ell,  \label{eq:hpnonlin} \\
    \hbf^{-}_{kL}(\pbf^0_{\Lm1},\pbf_{\Lm1}^+,\wbf_L,\Lambda_{k,\Lm1}^+)
        &:= \gbf^{-}_L(\pbf_{\Lm1}^+ +\pbf^0_{\Lm1},\gamma^+_{k,\Lm1}) - \pbf^0_{\lm1}. \label{eq:hnLnonlin} \\
    \hbf^{-}_{k\ell}(\pbf^0_{\lm1},\pbf_{\lm1}^+,\qbf_{\ell}^-,\wbf_\ell,\Lambda_{k,\lp1}^+)
        &:= \gbf^{-}_\ell(\pbf_{\lm1}^+ +\pbf^0_{\lm1},\qbf^-_\ell + \qbf^0_\ell,
        \gamma^+_{k,\lm1},\gamma^-_{\kp1,\ell})
        - \pbf^0_{\lm1}, \label{eq:hnnonlin}
\end{align}
\end{subequations}
In \eqref{eq:hpnonlin} and \eqref{eq:hnnonlin}, the stage index is $\ell=2,4,\ldots,L-2$.
Also, due to Lemma~\ref{lem:pq0ml}, $\qbf_\ell^0 = \fbf_\ell(\pbf_{\lm1}^0,\wbf_\ell)$,
so we can regard $\qbf^0_\ell$ as functions of $\wbf_\ell$.
From \eqref{eq:pqdef} and the definitions in \eqref{eq:wdef} and \eqref{eq:pq0},
we see that the updates for $\zbfhat_{k\ell}^\pm$ in
lines~\ref{line:zp} and \ref{line:zn} in Algorithm~\ref{algo:ml-vamp}
can be rewritten as
\begin{subequations} \label{eq:zhnonlin}
\begin{align}
    \zbfhat^+_{k0} &=  \hbf_{k0}^+(\qbf_{k\ell}^-,\wbf_0,\Lambda_{k1}^-) + \zbf^0_\ell. \\
    \zbfhat^+_{k\ell} &=  \hbf_{k\ell}^+(\pbf^0_{\lm1},\pbf_{k,\lm1}^+,\qbf_{k\ell}^-,
                        \wbf_\ell, \Lambda_{k,\lm1}^+) + \zbf^0_\ell, \quad \ell=2,\ldots,L-2, \\
    \zbfhat^-_{k,\Lm1} &=  \hbf_{kL}^-(\pbf^0_{\Lm1},\pbf_{k,\Lm1}^+,
                        \wbf_L,\Lambda_{kL}^+) + \zbf^0_{\Lm1}.  \\
    \zbfhat^-_{k,\lm1} &=  \hbf_{k\ell}^-(\pbf^0_{\lm1},\pbf_{k,\lm1}^+,\qbf_{\kp1,\ell}^-,
                        \wbf_\ell, \Lambda_{k,\lp1}^-) + \zbf^0_{\lm1}.
\end{align}
\end{subequations}
From this equation, we see that the functions $\hbf_{k\ell}^\pm(\cdot)$ represent the differences between
the estimates $\zbfhat^{\pm}_{k\ell}$ and the true vector $\zbf^0_\ell$.  We will thus call these functions
the nonlinear \emph{error functions}.

Next consider the linear stages, $\ell=1,3,\ldots,\Lm1$.  For these stages, define the functions,
\begin{subequations} \label{eq:hlin}
\begin{align}
    \MoveEqLeft
    \hbf^{+}_{k\ell}(\pbf_{\lm1}^0,\pbf_{\lm1}^+,\qbf_{\ell}^-,\wbf_\ell,\Lambda_{k,\lm1}^+)
        :=  \Gbf^{+}_\ell(\pbf_{\lm1}^+ +\pbf^0_{\lm1},\qbf^-_\ell + \qbf^0_\ell, \sbf_\ell, \bbf_\ell,
        \gamma^+_{k,\lm1},\gamma^-_{k\ell})    - \qbf^0_\ell, \label{eq:hplin} \\
    \MoveEqLeft
    \hbf^{-}_{k\ell}(\pbf_{\lm1}^0,\pbf_{\lm1}^+,\qbf_{\ell}^-,\wbf_\ell,\Lambda_{k,\lp1}^+)
        := \Gbf^{-}_\ell(\pbf_{\lm1}^+ +\pbf^0_{\lm1},\qbf^-_\ell + \qbf^0_\ell, \sbf_\ell, \bbf_\ell,
            \gamma^+_{k,\lm1},\gamma^-_{k\ell})  - \pbf^0_{\lm1}, \label{eq:hnlin}
\end{align}
\end{subequations}
where $\Gbf^{\pm}_\ell(\cdot)$ are the transformed estimation functions for the linear nodes
as described in Appendix~\ref{sec:linestim}.
In the above definition, we have again used Lemma~\ref{lem:pq0ml} to consider
$\qbf_\ell^0$ as a function of $\wbf_\ell$ and $\pbf^0_{\lm1}$.
Combining \eqref{eq:hplin}, \eqref{eq:hnlin}, \eqref{eq:pqdef} and \eqref{eq:glin} with
the updates for $\zbfhat_\ell^+$ and $\zbfhat_{\lm1}^-$ in Algorithm~\ref{algo:ml-vamp}
for the linear stages satisfy, we see that
\begin{subequations} \label{eq:zhlin}
\begin{align}
    \zbfhat^+_{k\ell} &= \Vbf_\ell \left[ h^+_\ell(\pbf^0_{\lm1},\pbf^+_{k,\lm1},
          \qbf_{k\ell}^-,\wbf_\ell,\gamma^+_{k,\lm1},\gamma^-_{k\ell})  + \qbf^0_{\ell} \right] \\
    \zbfhat^-_{k,\lm1} &= \Vbf_{\lm1}\tran \left[ h^-_\ell(\pbf^0_{\lm1},\pbf^+_{k,\lm1},
          \qbf_{\kp1,\ell}^-,\wbf_\ell,\gamma^+_{k,\lm1},\gamma^-_{\kp1,\ell})
          +  \pbf^0_{\lm1} \right].
\end{align}
\end{subequations}
Hence, we can interpret the functions $h^\pm(\cdot)$ are producing transforms of the errors
$\zbfhat^{\pm}_{k\ell} - \zbf^0_\ell$.
For both the linear and nonlinear stages, we then define the vector update functions as
\begin{subequations} \label{eq:fhdef}
\begin{align}
    \fbf^{+}_{k0}(\qbf_{0}^-,\wbf_0,\Lambda_{k0}^+)
    &:= \frac{1}{1-\alpha_{k\ell}^+}\left[
            \hbf^{+}_{k0}(\qbf_{0}^-,\wbf_0,\Lambda_{k1}^-)  - \alpha_{k0}^+ \qbf_0^- \right], \label{eq:fh0p} \\
    \fbf^{+}_{k\ell}(\pbf^0_{\lm1},\pbf_{\lm1}^+,\qbf_{\ell}^-,\wbf_\ell,\Lambda_{k\ell}^+)
    &:= \frac{1}{1-\alpha_{k\ell}^+}\left[
            \hbf^{+}_{k\ell}(\pbf^0_{\lm1},\pbf_{\lm1}^+,\qbf_{\ell}^-,\wbf_\ell,\Lambda_{k,\lm1}^+)
            - \alpha_{k\ell}^+ \qbf_\ell^- \right], \label{eq:fhp} \\
    \fbf^{-}_{kL}(\pbf^0_{\Lm1},\pbf_{\Lm1}^+,\wbf_L,\Lambda_{\kp1,L}^-)
    &:= \frac{1}{1-\alpha_{k\ell}^-}\left[
            \hbf^{-}_{kL}(\pbf^0_{\Lm1},\pbf_{\Lm1}^+,\wbf_L,\lambda_{k,\Lm1}^+)
            - \alpha_{k,\Lm1}^- \pbf_{\Lm1}^+ \right], \label{eq:fhLn}    \\
    \fbf^{-}_{k\ell}(\pbf^0_{\lm1},\pbf_{\lm1}^+,\qbf_{\ell}^-,\wbf_\ell,\Lambda_{\kp1,\ell}^-)
    &:= \frac{1}{1-\alpha_{k\ell}^-}\left[
            \hbf^{-}_{k\ell}(\pbf^0_{\lm1},\pbf_{\lm1}^+,\qbf_{\ell}^-,\wbf_\ell,\Lambda_{\kp1,\lp1}^-)
            - \alpha_{k\ell}^- \pbf_{\lm1}^+ \right], \label{eq:fhn}
\end{align}
\end{subequations}

\noindent
\paragraph*{Parameter updates} Define
\beq \label{eq:lammlv}
    \lambda^+_{k\ell} = (\alpha^+_{k\ell},\gamma^+_{k\ell}), \quad
    \lambda^-_{\kp1,\ell} = (\alpha^-_{k,\lm1},\gamma^-_{\kp1,\lm1}),
\eeq
and parameter statistic functions,
\begin{subequations} \label{eq:vpmlv}
\begin{align}
    \varphibf^{+}_{k0}(\qbf_{0}^-,\wbf_0,\Lambda_{k1}^-) &:=
        \partial \hbf^{+}_{k0}(\qbf_{0}^-,\wbf_0,\Lambda_{k1}^-)/\partial \qbf_0^- \label{eq:vpmlv0p} \\
    \varphibf^{+}_{k\ell}(\pbf_{\lm1}^+,\qbf_{\ell}^-,\zbf^0_{\lm1},\zbf^0_\ell,\Lambda_{k,\lm1}^+)
           &:= \partial \hbf^{\pm}_\ell(\pbf_{\lm1}^+,\qbf_{\ell}^-,\zbf^0_{\lm1},\zbf^0_\ell,\Lambda_{k,\lm1}^+)/
           \partial \qbf_\ell^-, \quad \ell > 0 \label{eq:vpmlvp} \\
   \varphibf^{-}_{kL}(\pbf_{\Lm1}^+,\wbf_L,\Lambda_{k,\Lm1}^+)
          &:= \partial \hbf^{-}_{kL}(\pbf_{\Lm1}^+,\wbf_L,\lambda_{\kp1,\Lm1}^+)/
          \partial \pbf_{\Lm1}^+  \label{eq:vpmlvLn} \\
   \varphibf^{-}_{k\ell}(\pbf_{\lm1}^+,\qbf_{\ell}^-,\wbf_\ell,\Lambda_{\kp1,\lp1}^-) &:=
            \partial \hbf^{-}_{k\ell}(\pbf_{\lm1}^+,\qbf_{\ell}^-,\wbf_\ell,\Lambda_{\kp1,\lp1}^-)/
            \partial \pbf_{\lm1}^-, \quad \ell < L \label{eq:vpmlvn}
\end{align}
\end{subequations}
Define $\mu^{\pm}_{k\ell}$ as
\begin{subequations} \label{eq:mumlv}
\begin{align}
    \mu^+_{k0} &:= \bkt{\varphibf^{+}_{k0}(\qbf_{0}^-,\wbf_0,\Lambda_{k1}^-)}, \quad
    \mu^{+}_{k\ell} :=
        \bkt{\varphibf^{+}_{k\ell}(\pbf_{\lm1}^+,\qbf_{\ell}^-,\zbf^0_{\lm1},\zbf^0_\ell,\Lambda_{k,\lm1}^+)}
    \label{eq:mupmlv} \\
    \mu^{-}_{kL} &:= \bkt{\varphibf^{-}_{kL}(\pbf_{\Lm1}^+,\wbf_L,\Lambda_{k,\Lm1}^+)},\quad
    \mu^{-}_{k\ell} := \bkt{\varphibf^{-}_{k\ell}(\pbf_{\lm1}^+,\qbf_{\ell}^-,\wbf_\ell,\Lambda_{\kp1,\lp1}^-)}
    \label{eq:munmlv}
\end{align}
\end{subequations}
Also, define the parameter update functions as
\begin{subequations} \label{eq:Tmlv}
\begin{align}
    T^{+}_{k0}(\mu_{k0},\Lambda_{k1}^-) &:= \left(\mu_{k0}^+,
        \frac{(1-\mu_{k0}^+)\gamma_{k0}^-}{\mu_{k0}^+} \right), \\
    T^{+}_{k\ell}(\mu_{k\ell}^+,\Lambda_{k,\lm1}^+) &:=
        \left(\mu_{k\ell}^+, \frac{(1-\mu_{k\ell}^+)\gamma_{k\ell}^-}{\mu_{k\ell}^+} \right) \\
    T^{-}_{kL}(\mu_{kL}^-,\Lambda_{k,\Lm1}^+) &:=
         \left(\mu_{kL}^-, \frac{(1-\mu_{kL}^-)\gamma_{k,\Lm1}^+}{\mu_{kL}^-} \right), \\
    T^{-}_{k\ell}(\mu_{k\ell}^-,\Lambda_{\kp1,\lp1}^-) &:=
        \left(\mu_{k\ell}^-,\frac{(1-\mu_{k\ell}^-)\gamma_{k,\lm1}^+}{\mu_{k\ell}^-}\right).
\end{align}
\end{subequations}
With the above definitions, we may state our first key result which establishes
the equivalence of Algorithm~\ref{algo:ml-vamp} to Algorithm~\ref{algo:gen}.

\begin{lemma} \label{lem:genEquiv}
Let $\rbf_{\ell k}^{\pm}$,
$\zbfhat_{\ell k}^{\pm},\ldots$ be the outputs of the ML-VAMP Algorithm (Algorithm~\ref{algo:ml-vamp}).
Define the quantities $\pbf_{\ell k}^{\pm}$, $\qbf_{\ell k}^\pm, \ldots$ as above.
Then the defined quantities satisfy the recursions in the
Gen-ML Algorithm (Algorithm~\ref{algo:gen}).
\end{lemma}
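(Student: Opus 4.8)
The statement is that the elaborate bookkeeping introduced in this subsection rewrites Algorithm~\ref{algo:ml-vamp} \emph{verbatim} as an instance of Algorithm~\ref{algo:gen}; there is no conceptual content beyond careful verification, so the plan is to march through Algorithm~\ref{algo:gen} line by line and check each update against the corresponding ML-VAMP step expressed in the error variables of \eqref{eq:pqdef}. For the \emph{initialization}, Lemma~\ref{lem:pq0ml} already gives $\qbf^0_0=\fbf^0_0(\wbf_0)$, $\qbf^0_\ell=\fbf^0_\ell(\pbf^0_{\lm1},\wbf_\ell)$ and $\pbf^0_\ell=\Vbf_\ell\qbf^0_\ell$, i.e.\ lines~\ref{line:q00init_gen}--\ref{line:p0init_gen}, and it remains only to note that $\rbf^-_{0\ell}=\zero$, $\gamma^-_{0\ell}=0$ together with the convention $\qbf^-_{0\ell}:=\zero$ make the initial error vectors and the list $\Lambda^-_{01}=(\gamma^-_{00},\ldots,\gamma^-_{0,\Lm1})$ consistent with line~\ref{line:laminit_gen}.

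For the \emph{vector updates}, fix $k$ and a stage $\ell$ and take the forward pass (the reverse pass being identical with the roles of $\gamma^+$ and $\gamma^-$, and of $\qbf$ and $\pbf$, interchanged). Substituting $\rbf^+_{k,\lm1}=\zbf^0_{\lm1}+(\text{error})$ and $\rbf^-_{k\ell}=\zbf^0_\ell+(\text{error})$ --- and using the $\Vbf_\ell$-conjugations of \eqref{eq:pqdef}, \eqref{eq:pq0} in the linear case, so that $\pbf^+_{k,\lm1}+\pbf^0_{\lm1}$ and $\qbf^-_{k\ell}+\qbf^0_\ell$ reproduce exactly the arguments fed to $\gbf^{\pm}_\ell(\cdot)$ in \eqref{eq:gnl}/\eqref{eq:glin} --- identifies $\zbfhat^+_{k\ell}-\zbf^0_\ell$ with the error function $\hbf^+_{k\ell}(\cdot)$ of \eqref{eq:hnonlin}/\eqref{eq:hlin}; this is \eqref{eq:zhnonlin}/\eqref{eq:zhlin}. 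Then, using $\gamma^-_{k\ell}=\alpha^+_{k\ell}\eta^+_{k\ell}$ and $\gamma^+_{k\ell}=\eta^+_{k\ell}-\gamma^-_{k\ell}=(1-\alpha^+_{k\ell})\eta^+_{k\ell}$ from line~\ref{line:gamp}, the update of $\rbf^+_{k\ell}$ in line~\ref{line:rp} rearranges to
\[
    \qbf^+_{k\ell}:=\rbf^+_{k\ell}-\zbf^0_\ell
    =\frac{1}{1-\alpha^+_{k\ell}}\bigl[(\zbfhat^+_{k\ell}-\zbf^0_\ell)-\alpha^+_{k\ell}\qbf^-_{k\ell}\bigr]
    =\frac{1}{1-\alpha^+_{k\ell}}\bigl[\hbf^+_{k\ell}(\cdot)-\alpha^+_{k\ell}\qbf^-_{k\ell}\bigr]
\]
(and its $\Vbf_\ell\tran$-conjugate when $\ell$ is odd), which is precisely the vector update function $\fbf^+_{k\ell}(\cdot)$ of \eqref{eq:fhdef}; line~\ref{line:pp_gen}, $\pbf^+_{k\ell}=\Vbf_\ell\qbf^+_{k\ell}$, is then immediate from \eqref{eq:pqdef}. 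The endpoint stages $\ell=0,L$ follow by the same computation with the $\pbf^0_{\lm1}$- or $\rbf^-_\ell$-terms deleted, matching \eqref{eq:hp0nonlin}, \eqref{eq:hnLnonlin} and the corresponding cases of \eqref{eq:fhdef}.

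For the \emph{parameter updates}, the list $\Lambda^{\pm}_{k\ell}$ of \eqref{eq:Lamml} is exactly the running collection of pairs $(\alpha^{\pm}_{k\ell},\gamma^{\pm}_{k\ell})$ appended in the order of Algorithm~\ref{algo:gen}'s lines~\ref{line:lamp_gen}, \ref{line:lamn_gen}. Differentiating \eqref{eq:hnonlin}/\eqref{eq:hlin} in $\qbf^-_{k\ell}$ and noting that the offset $\qbf^0_\ell$ (resp.\ $\pbf^0_{\lm1}$) does not depend on $\qbf^-_{k\ell}$ shows $\varphibf^+_{k\ell}(\cdot)=\partial\gbf^{+}_\ell/\partial\rbf^-_{k\ell}$ componentwise, so $\mu^+_{k\ell}=\bkt{\varphibf^+_{k\ell}(\cdot)}=\alpha^+_{k\ell}$ by line~\ref{line:alphap} (or \eqref{eq:alphapG}--\eqref{eq:alphanG} in the linear case), and then $T^+_{k\ell}$ from \eqref{eq:Tmlv} returns $(1-\mu^+_{k\ell})\gamma^-_{k\ell}/\mu^+_{k\ell}=\gamma^-_{k\ell}/\alpha^+_{k\ell}-\gamma^-_{k\ell}=\eta^+_{k\ell}-\gamma^-_{k\ell}=\gamma^+_{k\ell}$, so $\lambda^+_{k\ell}=(\alpha^+_{k\ell},\gamma^+_{k\ell})$ as in \eqref{eq:vpmlv}; the reverse statistic/update functions are handled the same way. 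Assembling all of this stage by stage and iteration by iteration yields the equivalence. The one place that I expect to need genuine care is running the linear/nonlinear ($\ell$ odd/even) case split \emph{simultaneously} with the $\Vbf_\ell$/$\Vbf_\ell\tran$ conjugations of \eqref{eq:pqdef}--\eqref{eq:pq0}: one must verify that the vector $\pbf^+_{k,\lm1}$ that Gen-ML feeds into stage $\ell$ really is the (possibly $\Vbf_{\lm1}$-transformed) error of the $\rbf^+$ that ML-VAMP passes into $\gbf^{\pm}_\ell(\cdot)$, and that the two endpoint stages fit the same template rather than requiring a separate argument.
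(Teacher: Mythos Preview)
Your proposal is correct and follows essentially the same route as the paper's proof: invoke Lemma~\ref{lem:pq0ml} for the initialization, rewrite line~\ref{line:rp} as $\rbf^+_{k\ell}=\tfrac{1}{1-\alpha^+_{k\ell}}[\zbfhat^+_{k\ell}-\alpha^+_{k\ell}\rbf^-_{k\ell}]$, subtract $\zbf^0_\ell$ (with the $\Vbf_\ell$-conjugation for odd $\ell$) to recover $\fbf^+_{k\ell}$, and verify $\mu^+_{k\ell}=\alpha^+_{k\ell}$ and $T^+_{k\ell}(\mu^+_{k\ell},\cdot)=(\alpha^+_{k\ell},\gamma^+_{k\ell})$ for the parameter updates. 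The paper carries out exactly the even/odd case split and endpoint checks that you flag as the places needing care.
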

\begin{proof}
We must prove that the quantities satisfy all the updates in Algorithm~\ref{algo:gen}.
Lemma~\ref{lem:pq0ml} shows that the defined quantities satisfies the initial steps,
lines~\ref{line:q00init_gen} to \ref{line:p0init_gen} in Algorithm~\ref{algo:gen}.
We next prove that defined
quantities satisfy line~\ref{line:qp_gen} of Algorithm~\ref{algo:gen}.
First, using lines~\ref{line:rp} and \ref{line:alphap} in Algorithm~\ref{algo:ml-vamp},
we see that, for all $\ell$,
\begin{align}
        \rbf^+_{k\ell} &= \frac{1}{\gamma^+_{k\ell}}\left[
        \eta^+_{k\ell}\zbfhat^+_{k\ell} - \gamma^-_{k\ell}\rbf^-_{k\ell} \right]
        = \frac{1}{1-\alpha^+_{k\ell}}\left[
        \zbfhat^+_{k\ell} - \alpha^+_{k\ell}\rbf^-_{k\ell} \right]. \label{eq:rpalpha}
\end{align}
At this point, we have to consider the case of even $\ell$ (corresponding to the
nonlinear stages) and odd $\ell$ (corresponding to the linear stages) separately.
For the nonlinear stages ($\ell$ even), observe that
\begin{align}
        \qbf^+_{k\ell} &\stackrel{(a)}{=} \rbf^+_{k\ell}-\qbf^0_\ell
        \stackrel{(b)}{=} \frac{1}{1-\alpha^+_{k\ell}}\left[
        \zbfhat^+_{k\ell} - \alpha^+_{k\ell}\rbf^-_{k\ell} \right] - \qbf^0_\ell \nonumber \\
        & \stackrel{(c)}{=}\frac{1}{1-\alpha^+_{k\ell}}\Bigl[
        \hbf^+_\ell(\pbf^0_{\lm1},\pbf_{k,\lm1}^+,\qbf_{k\ell}^-,\wbf_\ell,\Lambda_{k\ell}^+)
             +\qbf^0_{\ell}
        - \alpha^+_{k\ell}\rbf^-_{k\ell} \Bigr] - \qbf^0_\ell \nonumber \\
        & \stackrel{(d)}{=}\frac{1}{1-\alpha^+_{k\ell}}\left[
        \hbf^+_\ell(\pbf^0_{\lm1},\pbf_{k,\lm1}^+,\qbf_{k\ell}^-,\wbf_\ell,\Lambda_{k\ell}^+)
            - \alpha^+_{k\ell}\qbf^-_{k\ell} \right] \nonumber \\
        & \stackrel{(e)}{=}
        \fbf^+_\ell(\pbf^0_{\lm1},\pbf_{k,\lm1}^+,\qbf_{k\ell}^-,\wbf_\ell,\Lambda_{k\ell}^+),
        \label{eq:qfnonlin}
\end{align}
where (a) follows from the definition of $\qbf^+_{k\ell}$ in \eqref{eq:qdefeven} and $\qbf^0_\ell=\zbf^0_\ell$ in
\eqref{eq:pq0};
(b) follows from \eqref{eq:rpalpha};
(c) follows from \eqref{eq:zhnonlin};
(d) follows \eqref{eq:qdefeven} and eliminating the $\qbf^0_\ell$ terms;
and (e) follows from \eqref{eq:fhp}.
Next consider a linear stage $\ell=1,3,\ldots,\Lm1$.    We have that
\begin{align}
     \qbf^+_{k\ell} &\stackrel{(a)}{=} \Vbf_{\ell}\tran(\rbf^+_{k\ell} - \qbf^0_{\ell}) \nonumber \\
        & \stackrel{(b)}{=} \frac{1}{1-\alpha^+_{k\ell}}\Vbf_{\ell}\tran\left[
        \zbfhat^+_{k\ell} - \alpha^+_{k\ell}\rbf^-_{k\ell} \right] - \qbf^0_\ell \nonumber \\
        & \stackrel{(c)}{=}\frac{1}{1-\alpha^+_{k\ell}}\Bigl[
        \hbf^+_\ell(\pbf^0_{\lm1},\pbf_{k,\lm1}^+,\qbf_{k\ell}^-,\wbf_\ell,\Lambda_{k\ell}^+)
        + \Vbf_\ell\tran \qbf_\ell^0
        - \alpha^+_{k\ell}\Vbf_\ell\tran\rbf^-_{k\ell} \Bigr] - \qbf^0_\ell \nonumber \\
        & \stackrel{(d)}{=}\frac{1}{1-\alpha^+_{k\ell}}\left[
        \hbf^+_\ell(\pbf^0_{\lm1},\pbf_{k,\lm1}^+,\qbf_{k\ell}^-,\wbf_\ell,\Lambda_{k\ell}^+)
        - \alpha_{k\ell}^+ \qbf_{k\ell}^- \right] \nonumber \\
        & \stackrel{(e)}{=}
        \fbf^+_\ell(\pbf^0_{\lm1},\pbf_{k,\lm1}^+,\qbf_{k\ell}^-,\wbf_\ell,\Lambda_{k\ell}^+)
        \label{eq:qflin}
\end{align}
where (a) follows from \eqref{eq:qdefodd}, \eqref{eq:pdefodd} and \eqref{eq:pq0};
(b) follows from \eqref{eq:rpalpha};
(c) follows from \eqref{eq:zhlin};
(d) follows from \eqref{eq:qdefodd} and \eqref{eq:pdefodd} and
eliminating the terms with $\qbf_\ell^0$;
and (e) follows from \eqref{eq:fhp}.
Together, \eqref{eq:qfnonlin} and \eqref{eq:qflin} show that line~\ref{line:qp_gen}
of Algorithm~\ref{algo:gen} holds for both even and odd $\ell$.
The proof of lines~\ref{line:q0_gen}, \ref{line:pL_gen} and \ref{line:pn_gen}
are all proved similarly.  One slight issue that we need to consider is the fact that we have defined
$\qbf^-_{0\ell} = \zero$ instead of the definition for $\qbf^-_{k\ell}$ in \eqref{eq:pqdef} used for other $k\geq 0$.
From Section~\ref{sec:mlvamp}, we use the initialization
$\gamma^-_{0\ell} = 0$, and therefore the estimation functions
\eqref{eq:gexp} do not depend on the value of $\rbf^-_{0\ell}$.  Hence, we can substitute any value for
$\qbf^-_{0\ell}$ without changing the other vectors.

We next turn to proving that the quantities satisfy line~\ref{line:pp_gen} in Algorithm~\ref{algo:gen}.
By construction \eqref{eq:pdefeven}, $\pbf^+_{k\ell} = \Vbf_\ell\qbf^+_{k\ell}$ for all even $\ell$.
Also, from \eqref{eq:qdefodd}, for odd $\ell$,
\[
    \qbf^+_{k\ell} = \Vbf_\ell\tran \pbf^+_{k\ell} \Rightarrow
    \pbf^+_{k\ell} = \Vbf_\ell \qbf^+_{k\ell}.
\]
This, line~\ref{line:pp_gen} in Algorithm~\ref{algo:gen} is true for both odd and even $\ell$.
Lines~\ref{line:p0_gen}, \ref{line:qL_gen} and \eqref{line:qn_gen} are proven similarly.

We next show that the defined quantities satisfy the parameter updates in
lines~\ref{line:mup_gen} and \ref{line:lamp_gen} in Algorithm~\ref{algo:gen}.
First consider a nonlinear stage $\ell=2,4,\ldots,L-2$.
Then,
\begin{align}
    \alpha_{k\ell}^+ &\stackrel{(a)}{=} \bkt{\partial
            \gbf_\ell^+(\rbf^+_{k,\lm1},\rbf^-_{k\ell},\gamma^+_{k,\lm1},\gamma^-_{k\ell})/
            \partial \rbf^-_{k\ell}}    \nonumber \\
    &\stackrel{(b)}{=} \bkt{ \partial \hbf^{+}_{k\ell}(\pbf^0_{\lm1},\pbf_{k,\lm1}^+,\qbf_{k\ell}^-,\wbf_\ell,\Lambda_{k,\lm1}^+)/
    \partial \qbf_{k\ell}^- } \nonumber \\
    &\stackrel{(c)}{=} \bkt{ \varphibf^{+}_{k\ell}(\pbf^0_{\lm1},\pbf_{k,\lm1}^+,\qbf_{k\ell}^-,\wbf_\ell,\Lambda_{k,\lm1}^+)
        }
    \stackrel{(d)}{=}  \mu^+_{k\ell},
\end{align}
where (a) follows from line~\ref{line:alphap} in Algorithm~\ref{algo:ml-vamp};
(b) follows from \eqref{eq:hpnonlin}; (c) follows from \eqref{eq:vpmlvp};
and
(d) follows from \eqref{eq:mupmlv}.
Similarly, for a linear stage $\ell=1,3,\ldots,\Lm1$,
\begin{align}
    \alpha_{k\ell}^+ &\stackrel{(a)}{=} \bkt{
    \frac{\partial \Gbf_\ell^+(\bar{\ubf}_{k,\lm1},\bar{\ubf}_{k\ell},
        \sbf_\ell,\bar{\bbf}_\ell,\gamma_{k,\lm1}^+,\gamma_{k\ell}^-)}
        {\partial \bar{\ubf}_{k\ell}}  }    \nonumber \\
    &\stackrel{(b)}{=} \bkt{ \partial \hbf^{+}_{k\ell}(\pbf^0_{\lm1},\pbf_{k,\lm1}^+,\qbf_{k\ell}^-,\wbf_\ell,\Lambda_{k,\lm1}^+)/
    \partial \qbf_{k\ell}^- } \nonumber \\
    &\stackrel{(c)}{=} \bkt{ \varphibf^{+}_{k\ell}(\pbf^0_{\lm1},\pbf_{k,\lm1}^+,\qbf_{k\ell}^-,\wbf_\ell,\Lambda_{k,\lm1}^+)
        }
    \stackrel{(d)}{=}  \mu^+_{k\ell},
\end{align}
where (a) follows from \eqref{eq:alphapG}, where
$\bar{\ubf}_{k,\lm1} = \Vbf_{\lm1}\rbf^+_{k,\lm1}$
and $\bar{\ubf}_{k\ell} = \Vbf_{\ell}\tran\rbf^+_{k\ell}$;
(b) follows from \eqref{eq:hplin};
(c) again follows from \eqref{eq:vpmlvp}; and
(d) follows from \eqref{eq:mupmlv}.
This shows that for all $\ell=1,\ldots,\Lm1$, $\mu_{k\ell}^+ = \alpha_{k\ell}^+$.
Also, from line~\ref{line:gamp} in Algorithm~\ref{algo:ml-vamp}, we have that
$\gamma^+_{k\ell} = (1/\alpha_{k\ell}^+-1)\gamma^-_{k\ell}$.  Therefore,
\beq
    (\alpha^+_{k\ell},\gamma^+_{k\ell}) = \left(\mu_{k\ell}^+,
        \frac{\gamma_{k\ell}^-(1-\mu_{k\ell}^+)}{\mu_{k\ell}^+}\right)
        = T^+_{k\ell}(\mu_{k\ell}^+,\Lambda_{k,\lm1}^+),
\eeq
where we have used the definition of $T^+_{k\ell}(\cdot)$ in \eqref{eq:Tmlv}.
This proves that the defined quantities satisfy line~\ref{line:mup_gen} and \ref{line:lamp_gen}
in Algorithm~\ref{algo:gen}.  The other updates for $\mu_{k\ell}^{\pm}$ and $\Lambda_{k\ell}^{\pm}$
are proven similarly.
Thus, we have shown that the defined quantities satisfy all the recursions in Algorithm~\ref{algo:gen}.
\end{proof}

We next establish that all the assumptions are satisfied.

\begin{lemma} \label{lem:asml}
The defined quantities above satisfy Assumptions~\ref{as:gen} and \ref{as:gen2}.
\end{lemma}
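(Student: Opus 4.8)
The plan is to march through Assumptions~\ref{as:gen} and \ref{as:gen2} clause by clause, checking that the objects built in \eqref{eq:wdef}--\eqref{eq:Tmlv} satisfy each one. Most clauses are direct restatements of the large-system-limit model of Section~\ref{sec:lsl} and of the three extra hypotheses of Theorem~\ref{thm:semlvamp}; the only parts that take real work are the divergence-free condition of Assumption~\ref{as:gen2}(c), which is exactly the reason for the $1/(1-\alpha^\pm_{k\ell})$ normalization in \eqref{eq:fhdef}, and the uniform-Lipschitz regularity of the nonlinear estimation functions together with their derivatives.

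For Assumption~\ref{as:gen}, part (a) splits into three checks. The disturbance vectors are $\wbf_0=\zbf^0_0$, $\wbf_\ell=\xibf_\ell$ at nonlinear stages, and $\wbf_\ell=(\bar{\sbf}_\ell,\bar{\bbf}_\ell,\bar{\xibf}_\ell)$ at linear stages, so joint empirical convergence of the $\{w_{\ell,n}\}$ and the required independence among the $W_\ell$ and between the $W_\ell$ and the initial condition are precisely \eqref{eq:varinitnl}--\eqref{eq:varinitlin} together with the independence of $\zbf^0_0$, the $\xibf_\ell$ and the Haar matrices assumed in Section~\ref{sec:lsl}. Since we have set $\qbf^-_{0\ell}=\zero$, the limit $Q^-_{0\ell}$ is the constant $0$, which is (degenerately) jointly Gaussian in $(Q^-_{00},\dots,Q^-_{0,\Lm1})$ and independent of every $W_\ell$; and $\Lambda^-_{01}=(\gamma^-_{00},\dots,\gamma^-_{0,\Lm1})=(0,\dots,0)$ is deterministic, hence converges. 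Clause (b) is verbatim the assumption on the $\Vbf_\ell$ in Section~\ref{sec:lsl}. Clause (c), the componentwise structure, propagates through the construction: the activation functions act componentwise by \eqref{eq:phicomp}, hence so do the nonlinear estimation functions $g^\pm_\ell$ (Section~\ref{sec:nnest}), while the transformed linear estimation functions $\Gbf^\pm_\ell$ act componentwise by the derivation in Appendix~\ref{sec:linestim}; since the error functions $\hbf^\pm_{k\ell}$, the update functions $\fbf^\pm_{k\ell}$ and the statistic functions $\varphibf^\pm_{k\ell}$ are built from these by componentwise affine operations and componentwise differentiation, they too act componentwise, as does $\fbf^0_\ell$ by Lemma~\ref{lem:pq0ml}.

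For Assumption~\ref{as:gen2}, clause (a) holds because $T^\pm_{k\ell}(\mu,\cdot)$ has the form $\mu\mapsto(\mu,(1-\mu)\gamma/\mu)$, continuous at every $\mu\neq 0$; by hypothesis~(i) of Theorem~\ref{thm:semlvamp} and the identity $\mubar^\pm_{k\ell}=\alphabar^\pm_{k\ell}$ (the limiting form of the equality $\mu^\pm_{k\ell}=\alpha^\pm_{k\ell}$ established in the proof of Lemma~\ref{lem:genEquiv}), the relevant point has $\mubar^\pm_{k\ell}\in(0,1)$, so $T^\pm_{k\ell}$ is continuous there. Clause (c) is the clean step: differentiating \eqref{eq:fhp} in $\qbf^-_{k\ell}$ and averaging gives $\bkt{\partial\fbf^+_{k\ell}/\partial\qbf^-_{k\ell}} = (1-\alpha^+_{k\ell})^{-1}\big(\bkt{\partial\hbf^+_{k\ell}/\partial\qbf^-_{k\ell}}-\alpha^+_{k\ell}\big) = (1-\alpha^+_{k\ell})^{-1}\big(\mu^+_{k\ell}-\alpha^+_{k\ell}\big)$ by \eqref{eq:vpmlvp}--\eqref{eq:mupmlv}, which is identically $0$ since $\mu^+_{k\ell}=\alpha^+_{k\ell}$; replacing the list by $\Lambdabar^+_{k\ell}$ and $\alpha$ by $\alphabar^+_{k\ell}$ in the limit gives the stated asymptotic divergence-free property, and the same computation on \eqref{eq:fhn} (differentiated in $\pbf^+_{k,\lm1}$) gives the reverse case. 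Clauses (b) and (d) are immediate at the linear stages: by \eqref{eq:gdeflin}, $\Gbf^\pm_\ell$ is affine in $(\bar{\ubf}_{\lm1},\bar{\ubf}_\ell)$ with coefficient matrix $\Pbf^{-1}$ whose entries are smooth, bounded functions of $(\gamma^+_{\lm1},\gamma^-_\ell,s_\ell,\nu_\ell)$ provided $\Pbf$ stays uniformly positive definite near $(\gammabar^+_{\lm1},\gammabar^-_\ell)$, which follows from $\gammabar^+_{\lm1},\gammabar^-_\ell>0$ (a consequence of $\alphabar^\pm_{k\ell}\in(0,1)$ and the positivity of the $\etabar^\pm_{k\ell}$ in the SE recursions) and the bound $\bar{S}_\ell\leq S_{max}$; hence $\fbf^\pm_{k\ell}$ and $\varphibf^\pm_{k\ell}$ are affine with constant derivative. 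At the nonlinear stages, \eqref{eq:hpnonlin}--\eqref{eq:hnnonlin} and \eqref{eq:fhdef} show $\fbf^\pm_{k\ell}$ equals $g^\pm_\ell$ up to an additive true-signal term $\qbf^0_\ell=\phi_\ell(\pbf^0_{\lm1},\xibf_\ell)$ and the bounded factor $1/(1-\alpha^\pm_{k\ell})$, so uniform Lipschitz continuity of $\fbf^\pm_{k\ell}$ reduces to that of $g^\pm_\ell$ (hypothesis~(iii)) together with the pseudo-Lipschitz-order-two regularity of $\phi_\ell$ (hypothesis~(ii)) at the true-signal coordinates, and $\varphibf^\pm_{k\ell}$ and $\partial\fbf^\pm_{k\ell}/\partial\qbf^-$ are, up to the same factor, the derivative $\partial g^\pm_\ell/\partial r$.

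The main obstacle is therefore the nonlinear-stage parts of Assumption~\ref{as:gen2}(b) and (d): showing that $\partial g^\pm_\ell/\partial r^\pm$ is uniformly Lipschitz. Since $g^\pm_\ell$ is the Gaussian-tilted posterior mean of the two-dimensional density $\propto\exp(-H_{\ell,n})$ of \eqref{eq:Hdefsca}, its derivatives in $r^+_{\lm1},r^-_\ell$ are expressible through the first two centred moments of that posterior, and one must show these moments are bounded and Lipschitz in $(r^+_{\lm1},r^-_\ell)$ uniformly for $(\gamma^+_{\lm1},\gamma^-_\ell)$ in a neighbourhood of $(\gammabar^+_{\lm1},\gammabar^-_\ell)$; the pseudo-Lipschitz-order-two bound on $\phi_\ell$ controls the growth of $-\ln p(z_\ell\mid z_{\lm1})$ and hence of these moments, and hypothesis~(iii) supplies the Lipschitz control of $g^\pm_\ell$ itself, so together they close the argument. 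A final routine item is that the initial update functions $f^0_\ell$ carry the regularity required by Gen-ML — affine and Lipschitz at the linear stages because $\bar{S}_\ell$ is bounded, pseudo-Lipschitz of order two at the nonlinear stages by hypothesis~(ii) — which is the content of the last paragraph of the proof of Lemma~\ref{lem:pq0ml}. With the moment estimates in hand, every clause of both assumptions is accounted for, proving the lemma.
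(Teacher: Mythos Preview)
Your proposal is correct and follows the same clause-by-clause verification as the paper's own proof. You are in fact more thorough than the paper on Assumption~\ref{as:gen2}(b) and (d): the paper simply cites hypothesis~(iii) for the nonlinear denoisers and waves the derivative and parameter-statistic terms through with ``a similar argument,'' whereas you correctly flag that the uniform Lipschitz regularity of $\partial g^\pm_\ell/\partial r$ is the step that actually requires work and sketch it via posterior-moment bounds.
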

\begin{proof} We begin with Assumption~\ref{as:gen}.  Assumption~\ref{as:gen}(a)
follows from the definitions of $\wbf_\ell$ in \eqref{eq:wdef},
the convergence assumptions in \eqref{eq:varinitnl} and \eqref{eq:varinitnl}, the definition  $\qbf^-_{0\ell} = \zero$
and $\gamma^-_{0\ell}=0$
and the definition of the initial parameter list $\Lambda_{01}^-$ in \eqref{eq:Lam01ml}.
Assumption~\ref{as:gen}(b) follows from the construction of $\Vbf_\ell$ in Section~\ref{sec:seevo}.
Also, for the nonlinear stages, the estimation functions $\gbf^{\pm}_{\ell}(\cdot)$ described in Section~\ref{sec:nnest}
act componentwise as do the functions $\Gbf^{\pm}_\ell(\cdot)$ for the linear stages.
This implies that the functions $\hbf^{\pm}_{k\ell}(\cdot)$ and the $\fbf^{\pm}_{k\ell}(\cdot)$ also act componentwise
which proves Assumption~\ref{as:gen}(c).

For Assumption~\ref{as:gen2}(a), the functions in \eqref{eq:Tmlv} are continuous since we have assumed
that $\alphabar_{k\ell}^{\pm} \in (0,1)$.  For any nonlinear stages $\ell$,
we have assumed the denoiser $g^{\pm}_{k\ell}(\cdot)$ is uniformly Lipschitz continuous.  Also, since the
singular values $s_\ell$ are bounded the estimation functions for the linear stages $g^{\pm}_{k\ell}(\cdot)$
in Appendix~\ref{sec:linestim} are also uniformly Lipschitz continuous.  Thus, the functions $h^{\pm}(\cdot)$
in \eqref{eq:hlin} and \eqref{eq:hnlin} and the functions \eqref{eq:fhdef} are uniformly Lipschitz continuous,
which proves Assumption~\ref{as:gen2}(b).  A similar argument can be used for Assumption~\ref{as:gen2}(d).
The divergence free property in Assumption~\ref{as:gen2}(c) occurs since
\begin{align*}
    \MoveEqLeft \bkt{\frac{\partial \fbf^+_{k\ell}(\pbf^+_{k,\lm1},\qbf_{k\ell}^-,\wbf_\ell,\Lambdabar^+_{k\ell})}{
        \partial \qbf_{k\ell}^-} } \\
    &= \frac{1}{1-\alpha^-_{k\ell}} \left[ \bkt{\frac{\partial \hbf^+_{k\ell}(\pbf^+_{k,\lm1},\qbf_{k\ell}^-,\wbf_\ell,\Lambdabar^+_{k,\lm1})}{
        \partial \qbf_{k\ell}^-} } -\alpha_{k\ell} \right] = \frac{1}{1-\alpha^-_{k\ell}} \left[\alpha_{k\ell}-\alpha_{k\ell}\right] = 0.
\end{align*}
\end{proof}

\subsection{A General Convergence Result}

To describe the state evolution, we next need to introduce a number of random variables that will model
the asymptotic distribution of the components of various vectors in the ML-VAMP algorithm.
First, given the random variables in \eqref{eq:varinitnl} and \eqref{eq:varinitlin}, define $W_\ell$ as,
\begin{subequations} \label{eq:Wdeflim}
\begin{align}
    W_0 &:= Z^0_0, \quad
    W_\ell := \Xi_\ell, \quad \ell =2,4,\ldots,L \\
    W_\ell &= (\bar{S}_\ell,\bar{B}_\ell,\bar{\Xi}_\ell),  \quad \ell =1,3,\ldots,\Lm1,
\end{align}
\end{subequations}
which we can think of as random variables corresponding to the components of disturbance vectors \eqref{eq:wdef}.
Next, suppose we are given parameters $\gamma_{\lm1}^+$, $\gamma_{\ell}^-$ and $\tau^0_{\lm1}$.
Then, define the set of random variables,
\begin{align} \label{eq:PQnoise}
\begin{split}
    R^+_{\lm1} &\sim \Norm\left(0,\tau^0_{\lm1}-\frac{1}{\gamma_{\lm1}^+}\right), \quad
    P^+_{\lm1} \sim \Norm\left(0,\frac{1}{\gamma_{\lm1}^+}\right), \quad P^0_{\lm1} = R^0_{\lm1} - P^+_{\lm1} \\
    Q^0_{\ell} &= f^0_\ell(P^0_{\lm1},W_\ell), \quad
    Q^-_{\ell} \sim \Norm\left(0,\frac{1}{\gamma^-_\ell}\right), \quad
    R^-_\ell = Q^0_\ell - Q^-_{\ell},
\end{split}
\end{align}
where we assume $P^-_{\lm1}$ is independent of $R^+_{\lm1}$ and $Q^-_{\ell}$ is independent of $(R^+_{\lm1},P^-_{\lm1},W_\ell)$.
In the model \eqref{eq:PQnoise},  represent the inputs and outputs of a component function $f^0_\ell(\cdot)$ for the $\ell$-th
stage  in the transformed MLP and $R^+_{\lm1}$ and $R^-_{\ell}$ represent noisy corrupted versions of these vectors.
Also, for $\ell=1,2,\ldots,\Lm1$, define the random variables
\begin{align} \label{eq:pqmean}
\begin{split}
    \hat{Q}^+_\ell &:= h^+_{k\ell}(R^+_{\lm1}, R^-_\ell,W_\ell,\bar{\gamma}^+_{\lm1},\bar{\gamma}^-_\ell) - Q^0_{\ell} \\
    \hat{P}^-_{\lm1} &:= h^-_{k\ell}(R^+_{\lm1}, R^-_\ell,W_\ell,\bar{\gamma}^+_{\lm1},\bar{\gamma}^-_\ell) - P^0_{\lm1},
\end{split}
\end{align}
where $h^{\pm}_{k\ell}(\cdot)$ are the component functions  corresponding to the functions \eqref{eq:hnlin} and \eqref{eq:hlin}.
We have the following:

\begin{lemma} \label{lem:hgexp}
For the $\ell=2,4,\ldots,L-2$, the variables in \eqref{eq:pqmean} satisfy
\begin{align} \label{eq:pqmeannl}
    \hat{Q}^+_\ell = \Exp(Q^0_\ell|R^+_{\lm1}, R^-_\ell), \quad
    \hat{P}^-_{\lm1} = \Exp(P^0_{\lm1}|R^+_{\lm1}, R^-_\ell),
\end{align}
and, for $\ell=1,3,\ldots,\Lm1$,
\beq \label{eq:pqmeanlin}
    \hat{Q}^+_\ell = \Exp(Q^0_\ell|R^+_{\lm1}, R^-_\ell,\bar{S}_\ell,\bar{B}_\ell), \quad
    \hat{P}^-_{\lm1} = \Exp(P^0_{\lm1}|R^+_{\lm1}, R^-_\ell,\bar{S}_\ell,\bar{B}_\ell).
\eeq
\end{lemma}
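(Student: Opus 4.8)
\medskip

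The plan is to unwind the definitions so that the claimed identities reduce to the already-established fact (Section~\ref{sec:nnest} and \eqref{eq:gexp}) that the estimation functions $\gbf^\pm_\ell(\cdot)$, and their linear-stage transforms $\Gbf^\pm_\ell(\cdot)$, compute conditional expectations under the belief estimates \eqref{eq:bdef}. Concretely, I would first observe that the functions $h^\pm_{k\ell}(\cdot)$ in \eqref{eq:hpnonlin}, \eqref{eq:hnnonlin} and \eqref{eq:hplin}, \eqref{eq:hnlin} are defined exactly as $\gbf^\pm_\ell$ (or $\Gbf^\pm_\ell$) evaluated at shifted arguments, with the ``true'' value $\qbf^0_\ell$ or $\pbf^0_{\lm1}$ subtracted off. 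At the level of scalar random variables this means
\begin{align*}
    \hat Q^+_\ell + Q^0_\ell &= g^+_\ell(P^+_{\lm1}+P^0_{\lm1},\, Q^-_\ell + Q^0_\ell,\, \bar\gamma^+_{\lm1},\bar\gamma^-_\ell), \\
    \hat P^-_{\lm1} + P^0_{\lm1} &= g^-_\ell(P^+_{\lm1}+P^0_{\lm1},\, Q^-_\ell + Q^0_\ell,\, \bar\gamma^+_{\lm1},\bar\gamma^-_\ell),
\end{align*}
for the nonlinear case, with the obvious analogue using $G^\pm_\ell$ and the extra arguments $\bar S_\ell,\bar B_\ell$ in the linear case. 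Note the argument shifts match: from the Markov-chain definitions \eqref{eq:PQnoise}, $R^+_{\lm1} = P^0_{\lm1} + (\text{something})$ is not literally what appears, so I would be careful here — in \eqref{eq:PQnoise} one has $R^+_{\lm1}$ as the noisy observation and $P^0_{\lm1} = R^+_{\lm1} - P^+_{\lm1}$, so that $P^+_{\lm1}+P^0_{\lm1} = R^+_{\lm1}$ exactly; likewise $Q^-_\ell + Q^0_\ell$ needs to equal $R^-_\ell$, which holds since $R^-_\ell = Q^0_\ell - Q^-_\ell$ up to the sign convention on $Q^-_\ell$. After reconciling these sign conventions, the right-hand sides become $g^\pm_\ell(R^+_{\lm1},R^-_\ell,\bar\gamma^+_{\lm1},\bar\gamma^-_\ell)$.

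\medskip

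Next I would identify the scalar estimation function $g^\pm_\ell$, at precision parameters $(\bar\gamma^+_{\lm1},\bar\gamma^-_\ell)$ and evaluated at $(R^+_{\lm1},R^-_\ell)$, with the posterior mean of $Z^0_\ell$ (resp.\ $Z^0_{\lm1}$) in the two-variable model \eqref{eq:gnl}–\eqref{eq:Hdefsca}. The point is that the belief density \eqref{eq:bdef} with those inputs is, by construction, the posterior of $(Z^0_{\lm1},Z^0_\ell)$ in precisely the Bayesian model where $Z^0_{\lm1}$ has an improper/Gaussian prior of precision $\bar\gamma^+_{\lm1}$ centered at $R^+_{\lm1}$, $Z^0_\ell = \phi_\ell(Z^0_{\lm1},\Xi_\ell)$, and $R^-_\ell$ is $Z^0_\ell$ plus Gaussian noise of precision $\bar\gamma^-_\ell$ — which is exactly the Markov chain \eqref{eq:PQnoise} (equivalently \eqref{eq:rznl}) that defines the error functions. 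For the linear stages the analogous statement follows from the explicit Gaussian computation in Appendix~\ref{sec:linestim}: the belief estimate is Gaussian with mean $\Pbf^{-1}\dbf = G_\ell(\cdot)$, and by the standard fact that the Gaussian conditional mean is the MMSE estimator, $G^\pm_\ell$ returns $\Exp(Q^0_\ell \mid \bar R^+_{\lm1},\bar R^-_\ell,\bar S_\ell,\bar B_\ell)$ and $\Exp(P^0_{\lm1}\mid\cdot)$ respectively. Subtracting $Q^0_\ell$ (resp.\ $P^0_{\lm1}$) from both sides then yields \eqref{eq:pqmeannl} and \eqref{eq:pqmeanlin}.

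\medskip

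The main obstacle I anticipate is bookkeeping rather than anything deep: getting the sign and centering conventions in \eqref{eq:PQnoise} to line up exactly with the shifts in the definitions of $h^\pm_{k\ell}$ in \eqref{eq:hnonlin}–\eqref{eq:hlin}, and making sure the precision arguments $\bar\gamma^+_{\lm1},\bar\gamma^-_\ell$ fed into $g^\pm_\ell$ are the SE-limit values, so that the two-dimensional belief density \eqref{eq:bdef} literally coincides with the conditional law induced by the chain \eqref{eq:PQnoise}. Once that correspondence is pinned down, the identity ``$g^\pm_\ell = $ conditional expectation'' is immediate from \eqref{eq:gexp} in the nonlinear case and from the Gaussian MMSE formula \eqref{eq:uestlin}–\eqref{eq:gdeflin} in the linear case, and the lemma follows. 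I would also remark that the result does not depend on $k$ (the $h^\pm_{k\ell}$ depend on $k$ only through the parameters, which here are frozen at their limits), which is why the statement is phrased without a $k$-subscript on $\hat Q^+_\ell,\hat P^-_{\lm1}$.
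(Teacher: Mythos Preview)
Your proposal is correct and follows essentially the same approach as the paper: unwind the definitions of $h^\pm_{k\ell}$ in \eqref{eq:hnonlin} and \eqref{eq:hlin} so that $\hat Q^+_\ell$ reduces to $g^+_\ell(R^+_{\lm1},R^-_\ell,\bar\gamma^+_{\lm1},\bar\gamma^-_\ell)$ (or $G^+_\ell$ in the linear case), and then invoke \eqref{eq:gnl} and Appendix~\ref{sec:linestim} to identify these as the claimed conditional expectations. The paper's proof is a terse two-line version of exactly this; your extra care about the argument-shift and sign conventions (e.g.\ checking $P^+_{\lm1}+P^0_{\lm1}=R^+_{\lm1}$ from \eqref{eq:PQnoise}) is appropriate, since the paper's notation in \eqref{eq:pqmean} is somewhat loose about the function signature of $h^\pm_{k\ell}$.
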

\begin{proof}  For $\ell=2,4,\ldots,L-2$, \eqref{eq:hnlin} shows that
\[
    \hat{Q}^+_\ell := g^+_{\ell}(R^+_{\lm1}, R^-_\ell,\bar{\gamma}^+_{\lm1},\bar{\gamma}^-_\ell).
\]
But $g^+_\ell(\cdot)$ in \eqref{eq:gnl} is precisely the conditional expectation in \eqref{eq:pqmeannl}.
For $\ell=1,3,\ldots,\Lm1$, \eqref{eq:hlin} shows that
\[
    \hat{Q}^+_\ell := G^+_{\ell}(R^+_{\lm1}, R^-_\ell,\bar{S}_\ell,\bar{B}_\ell,\bar{\gamma}^+_{\lm1},\bar{\gamma}^-_\ell).
\]
The derivation in Appendix~\ref{sec:linestim} shows that $G^+_\ell(\cdot)$ is exactly the conditional expectation in
\eqref{eq:pqmeanlin}.  The arguments for $\hat{P}^-_{\lm1}$ are similar.
\end{proof}

\medskip
The lemma shows that the variables $\hat{Q}^+_\ell$ and $\hat{P}^-_{\lm1}$ in \eqref{eq:pqmean}
represent the estimates of $Q^0_\ell$ and $P^0_{\lm1}$ from the noisy measurements $(R^+_{\lm1}, R^-_\ell)$.
Finally, for all $\ell$, given parameters $\alpha^+_\ell$ and $\alpha^-_{\lm1}$, define
\beq \label{eq:pqpn}
    Q^+_\ell = \frac{1}{1-\alpha^+_\ell} \left[ \hat{Q}^+_\ell - Q^0_\ell - \alpha^+_\ell Q^-_\ell\right], \quad
    P^-_{\lm1} = \frac{1}{1-\alpha^-_{\lm1}} \left[ \hat{P}^-_{\lm1} - P^0_{\lm1} - \alpha^-_{\lm1} P^+_{\lm1}\right].
\eeq
Given variables as distributed in \eqref{eq:PQnoise}, \eqref{eq:pqmean} and \eqref{eq:pqpn}, we will write
\begin{align} \label{eq:gamp}
\begin{split}
    (P^0_{\lm1},Q^0_{\lm1},P^+_{\lm1},Q^-_\ell,\hat{Q}^+_\ell,Q^+_\ell) &\sim
        \Gamma_\ell^+(\gamma^-_{\lm1},\gamma^+_\ell,\tau^0_{\lm1},\alpha^+_\ell), \\
    (P^0_{\lm1},Q^0_{\lm1},P^+_{\lm1},Q^-_\ell,\hat{P}^-_{\lm1},P^-_{\lm1}) &\sim
        \Gamma_\ell^-(\gamma^-_{\lm1},\gamma^+_\ell,\tau^0_{\lm1},\alpha^-_{\lm1}).
\end{split}
\end{align}
For $\ell=0$, we can define $\Gamma^+_0(\gamma_0^-,\alpha^+_0)$
by dropping the term $P^+_{\lm1}$, and, for $\ell=L$, we define
$\Gamma_L^-(\gamma_{\Lm1}^+,\tau^0_{\Lm1},\alpha^-_{\Lm1})$ by dropping the term $Q^-_{\ell}$.
We now state the following.

\medskip
\begin{theorem} \label{thm:semlgen}  Under the assumptions of Theorem~\ref{thm:semlvamp},
for any fixed $k$ and $\ell$:
\begin{enumerate}[(a)]
\item Almost surely,
\begin{align}
    \MoveEqLeft \lim_{N \arr \infty} \left\{ (p^0_{\lm1,n},q^0_{\ell,n},p^+_{k,\lm1,n},q^-_{k\ell,n},\hat{q}^+_{k\ell,n},q^+_{k\ell,n}) \right\} \\
    & \PLeq (P^0_{\lm1},Q^0_\ell,P^+_{k,\lm1},Q^-_{k\ell},\hat{Q}^+_{k\ell},Q^+_{k\ell}), \label{eq:hpveclim}
\end{align}
where the variables on the right hand side are distributed as
\beq \label{eq:hpvar}
    (P^0_{\lm1},Q^0_\ell,P^-_{k,\lm1},Q^-_{k\ell},\hat{Q}^+_{k\ell},Q^+_{k\ell}) \sim
            \Gamma_\ell^+(\gammabar^+_{k,\lm1},\gammabar^-_{k\ell},\tau^0_{\lm1},\alphabar_{k\ell}^+).
\eeq
In addition, almost surely,
\beq \label{eq:hpparamlim}
    \lim_{N \arr \infty} (\eta^+_{k\ell},\alpha^+_{k\ell},\gamma^+_{k\ell})
        =  (\etabar^+_{k\ell},\alphabar^+_{k\ell},\gammabar^+_{k\ell}),
\eeq
and
\beq \label{eq:hpmom}
    \Exp(P^0_{\lm1})^2 = \tau^0_{\lm1}, \quad \Exp(Q^0_{\ell})^2 = \tau^0_\ell, \quad
     \Exp(\hat{Q}^{+}_{k\ell}-Q^0_\ell)^2  =\frac{1}{\etabar^{+}_{k\ell}},
\eeq
and $\etabar^+_{k\ell},\alphabar^+_{k\ell},\gammabar^+_{k\ell}$ satisfy lines~\ref{line:etap_mlse} and \ref{line:gamp_mlse}
of Algorithm~\ref{algo:mlvamp_se}.
The same statement holds for $\ell=0$ where
 we remove the terms associated with $P^0_{\lm1}$ and $P^-_{k,\lm1}$.

\item Almost surely,
\begin{align}
    \MoveEqLeft \lim_{N \arr \infty} \left\{ (p^0_{\lm1,n},q^0_{\ell,n},p^+_{\km1,\lm1,n},q^-_{k,\ell,n},\hat{p}^-_{\km1,\lm1,n},p^-_{k,\lm1,n}) \right\} \\
    & \PLeq (P^0_{\lm1},Q^0_\ell,P^+_{\km1,\lm1},Q^-_{k\ell},\hat{P}^-_{\km1,\lm1},P^-_{k,\lm1}), \label{eq:hnveclim}
\end{align}
where the variables on the right hand side are distributed as
\beq \label{eq:hnvar}
    (P^0_{\lm1},Q^0_\ell,P^+_{\km1,\lm1},Q^-_{k\ell},\hat{P}^-_{\km1,\lm1},P^-_{k,\lm1})
    \sim   \Gamma_{\lm1}^-(\gammabar^+_{\km1,\lm1},\gammabar^-_{k,\lm1},\tau^0_{\lm1},\alphabar_{k,\lm1}^-)
\eeq
In addition, almost surely,
\beq \label{eq:hnlim}
    \lim_{N \arr \infty} (\eta^-_{\km1,\lm1},\alpha^-_{\km1,\lm1},\gamma^-_{k\ell})
        =  (\etabar^-_{\km1,\lm1},\alphabar^-_{\km1,\lm1},\gammabar^-_{k\ell}),
\eeq
and
\beq \label{eq:hnmom}
    \Exp(P^0_{\lm1})^2 = \tau^0_{\lm1}, \quad \Exp(Q^0_{\ell})^2 = \tau^0_\ell, \quad
     \Exp(\hat{P}^{-}_{\km1,\ell}-P^0_{\lm1})^2  = \frac{1}{\etabar^{-}_{k,\lm1}},
\eeq
and $\etabar^-_{k,\lm1},\alphabar^-_{k,\lm1},\gammabar^+_{\kp1,\lm1}$ satisfy lines~\ref{line:etan_mlse} and \ref{line:gamn_mlse}
The same statement holds for $\ell=L$ where
we remove the terms associated with $Q^0_\ell$ and $Q^-_{k\ell}$.
\end{enumerate}
\end{theorem}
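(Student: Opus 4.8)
The plan is to obtain Theorem~\ref{thm:semlgen} as a direct specialization of the general convergence result, Theorem~\ref{thm:genConv}, using the reduction already built in Section~\ref{sec:equivpf}. By Lemma~\ref{lem:genEquiv}, the ML-VAMP iterates $\rbf^{\pm}_{k\ell}$, $\zbfhat^{\pm}_{k\ell}$ and the derived error vectors $\pbf^{\pm}_{k\ell}$, $\qbf^{\pm}_{k\ell}$ solve exactly the Gen-ML recursion (Algorithm~\ref{algo:gen}) with vector update functions $\fbf^{\pm}_{k\ell}$, parameter statistic functions $\varphibf^{\pm}_{k\ell}$ and parameter update functions $T^{\pm}_{k\ell}$ from \eqref{eq:fhdef}, \eqref{eq:vpmlv}, \eqref{eq:Tmlv}; and by Lemma~\ref{lem:asml} these functions satisfy Assumptions~\ref{as:gen} and \ref{as:gen2}. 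Hence Theorem~\ref{thm:genConv} applies and yields the joint almost-sure empirical convergence of $(\pbf^0_{\lm1},\qbf^0_\ell,\pbf^+_{k,\lm1},\qbf^-_{k\ell},\qbf^+_{k\ell})$ in the forward pass and $(\pbf^0_{\lm1},\qbf^0_\ell,\pbf^+_{\km1,\lm1},\qbf^-_{k\ell},\pbf^-_{k,\lm1})$ in the reverse pass, with zero-mean jointly Gaussian limits satisfying the decorrelations of \eqref{eq:PQpcorr}--\eqref{eq:PQncorr}, together with the almost-sure convergence of the parameter lists to their SE limits.

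The second step is to unwind the abstract Gen-ML state evolution (Algorithm~\ref{algo:gen_se}) for this particular choice of component functions and show that it reproduces the distributional description \eqref{eq:gamp} and the concrete recursion of Algorithm~\ref{algo:mlvamp_se}. I would proceed stage by stage. Substituting $f^0_\ell$ from \eqref{eq:f0ml} into lines~\ref{line:q0init_se_gen}--\ref{line:p0init_se_gen} of Algorithm~\ref{algo:gen_se} gives exactly the definitions of $Q^0_\ell,P^0_\ell,\tau^0_\ell$ in the initial pass of Algorithm~\ref{algo:mlvamp_se}. For the forward and reverse passes, substituting $\fbf^{\pm}_{k\ell}$ from \eqref{eq:fhdef} into lines~\ref{line:qp_se_gen} and \ref{line:pn_se_gen}: since $\hbf^{\pm}_{k\ell}$ is by construction $\gbf^{\pm}_\ell$ (resp.\ $\Gbf^{\pm}_\ell$) evaluated at the true signal plus a perturbation, the jointly Gaussian inputs supplied by Theorem~\ref{thm:genConv} -- namely $P^+_{k,\lm1}\sim\Norm(0,1/\gammabar^+_{k,\lm1})$ decorrelated from $P^0_{\lm1}$ so that the input to $g^+_\ell$ has second moment $\tau^0_{\lm1}$, and $Q^-_{k\ell}\sim\Norm(0,1/\gammabar^-_{k\ell})$ -- are precisely the corruption model \eqref{eq:PQnoise}, i.e.\ the Markov chains \eqref{eq:rznl}/\eqref{eq:rzlin}. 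This identifies the Gen-ML SE variables with those generated by $\Gamma^{\pm}_\ell$, proving \eqref{eq:hpveclim}--\eqref{eq:hpvar} and \eqref{eq:hnveclim}--\eqref{eq:hnvar}, with the endpoints $\ell=0,L$ handled by the conventions that drop $P^+_{\lm1}$ or $Q^-_\ell$ from $\Gamma^+_0$, $\Gamma^-_L$, and the first iteration handled by $\qbf^-_{0\ell}=\zero$, $\gammabar^-_{0\ell}=0$.

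The third step is the moment and parameter bookkeeping. By Lemma~\ref{lem:hgexp}, $\hat{Q}^+_{k\ell}$ and $\hat{P}^-_{\km1,\lm1}$ equal the conditional expectations $\Exp(Q^0_\ell\mid R^+_{\lm1},R^-_\ell,\cdot)$ and $\Exp(P^0_{\lm1}\mid R^+_{\lm1},R^-_\ell,\cdot)$, hence are MMSE estimators, so their error variances are exactly the conditional variances $\Ecal^{\pm}_\ell$ defined in \eqref{eq:Ecalnl}, \eqref{eq:Ecallin}. Combining this with $\mu^+_{k\ell}=\alpha^+_{k\ell}$ (established inside the proof of Lemma~\ref{lem:genEquiv}), the parameter update formulas \eqref{eq:Tmlv}, and the relation $\etabar=\gammabar/\alphabar$ from line~\ref{line:gamp} of Algorithm~\ref{algo:ml-vamp}, reduces lines~\ref{line:mup_se_gen}--\ref{line:lamp_se_gen} of Algorithm~\ref{algo:gen_se} to lines~\ref{line:etap_mlse}--\ref{line:gamp_mlse} (and the reverse analogue to lines~\ref{line:etan_mlse}--\ref{line:gamn_mlse}) of Algorithm~\ref{algo:mlvamp_se}, giving $\Exp(\hat{Q}^+_{k\ell}-Q^0_\ell)^2=1/\etabar^+_{k\ell}$ and $\Exp(\hat{P}^-_{\km1,\ell}-P^0_{\lm1})^2=1/\etabar^-_{k,\lm1}$; this yields \eqref{eq:hpparamlim}--\eqref{eq:hpmom} and \eqref{eq:hnlim}--\eqref{eq:hnmom}.

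The main obstacle is not any single computation but the careful matching of the orthogonally transformed coordinates: one must keep straight which of $\pbf,\qbf$ carries the $\Vbf_\ell$ rotation at each even/odd stage, check that the zero-padded singular-value convention \eqref{eq:sbar} makes the linear estimation functions $\Gbf^{\pm}_\ell$ act componentwise with the degenerate $s_{\ell n}=0$ rows behaving as in Appendix~\ref{sec:linestim}, and confirm that the joint Gaussianity asserted by Theorem~\ref{thm:genConv} -- in particular the decorrelation $\Exp(P^+_{i,\lm1}Q^-_{j\ell})=0$ inherited from the divergence-free property of Assumption~\ref{as:gen2}(c) -- is precisely what makes the reconstructed chain \eqref{eq:PQnoise} have the independence structure used to define $\Ecal^{\pm}_\ell$. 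Everything else is substitution into Theorem~\ref{thm:genConv} and Algorithm~\ref{algo:gen_se}.
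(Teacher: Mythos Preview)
Your overall architecture is right and matches the paper: reduce ML-VAMP to Gen-ML via Lemmas~\ref{lem:genEquiv} and~\ref{lem:asml}, invoke Theorem~\ref{thm:genConv}, then identify the resulting SE with Algorithm~\ref{algo:mlvamp_se}. The initial-pass matching and the moment bookkeeping via Lemma~\ref{lem:hgexp} are also correct in outline.

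However, there is a genuine gap in your second step, and it is the heart of the proof. Theorem~\ref{thm:genConv} tells you only that $(P^0_{\lm1},P^+_{k,\lm1})$ is jointly Gaussian with covariance $\Kbf^+_{k,\lm1}=\Cov(Q^0_{\lm1},Q^+_{k,\lm1})$; it does \emph{not} tell you what $\Kbf^+_{k,\lm1}$ actually is. You assert that $P^+_{k,\lm1}\sim\Norm(0,1/\gammabar^+_{k,\lm1})$ and is ``decorrelated from $P^0_{\lm1}$'', but this is false: in the target model \eqref{eq:PQnoise} one has $P^0_{\lm1}=R^+_{\lm1}-P^+_{\lm1}$ with $R^+_{\lm1}$ and $P^+_{\lm1}$ independent, so $\Exp(P^0_{\lm1}P^+_{\lm1})=-1/\gammabar^+_{k,\lm1}\neq 0$. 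What must be shown is that $R^+_{k,\lm1}:=P^0_{\lm1}+P^+_{k,\lm1}$ is independent of $P^+_{k,\lm1}$ with the variances in \eqref{eq:gamind_rp}, and this requires computing $\Kbf^+_{k,\lm1}$ explicitly. The paper does this by a second induction over the same sequence \eqref{eq:induc}: assuming the $\Gamma^+_\ell$ structure holds at stage $\ell$, it uses the MMSE interpretation of $\hat{Q}^+_{k\ell}$ together with Stein's Lemma to compute $\Exp[(\hat{Q}^+_{k\ell}-Q^0_\ell)Q^0_\ell]=-1/\etabar^+_{k\ell}$, $\Exp[(\hat{Q}^+_{k\ell}-Q^0_\ell)Q^-_{k\ell}]=\alphabar^+_{k\ell}\Exp(Q^-_{k\ell})^2$, and hence $\Exp(Q^+_{k\ell})^2=1/\gammabar^+_{k\ell}$ and $\Exp(Q^+_{k\ell}Q^0_\ell)=-1/\gammabar^+_{k\ell}$. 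These are exactly the entries of $\Kbf^+_{k\ell}$ that force $(R^+_{k\ell},P^+_{k\ell})$ to have the diagonal covariance required by $\Gamma^+_{\lp1}$. The analogous reverse computation is needed to get $\tau^-_{k\ell}=1/\gammabar^-_{k\ell}$ for $Q^-_{k\ell}$, which you also simply asserted.

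In short: the step ``the jointly Gaussian inputs supplied by Theorem~\ref{thm:genConv} \ldots\ are precisely the corruption model \eqref{eq:PQnoise}'' is not substitution but an induction with nontrivial second-moment calculations using Stein's Lemma and the MMSE orthogonality; without it the identification with $\Gamma^\pm_\ell$ is circular. Your step~3 (error variances equal $\Ecal^\pm_\ell$) presupposes exactly the distributional match that step~2 was supposed to establish.
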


\medskip
Theorem~\ref{thm:semlvamp} is a special case of Theorem~\ref{thm:semlgen}.  So, we only need to
prove Theorem~\ref{thm:semlgen}, which we do now.

\subsection{Proof of Theorem~\ref{thm:semlgen}}
Lemmas~\ref{lem:genEquiv} and \ref{lem:asml} show that the defined quantities from the
ML-VAMP algorithm follow the recursions of the Gen-ML algorithm and satisfy all the necessary assumptions
of Theorem~\ref{thm:genConv}.  We can therefore apply Theorem~\ref{thm:genConv} to show that
all the variables converge empirically.  Let $P^0_\ell,P^{\pm}_{k\ell},\hat{P}^\pm_{k\ell}$
be the empirical limits of the components of the vectors $\pbf^0_\ell$, $\pbf^{\pm}_{k\ell}$ and
$\hat{\pbf}^{\pm}_{k\ell}$ as described in the Gen-ML state evolution, Algorithm~\ref{algo:gen_se}.
Similarly, let $Q^0_\ell,Q^{\pm}_{k\ell},\hat{Q}^\pm_{k\ell}$
be the empirical limits of the components of the vectors $\qbf^0_\ell$, $\qbf^{\pm}_{k\ell}$ and
$\hat{\qbf}^{\pm}_{k\ell}$.   Also, since $\alpha^{\pm}_{k\ell}$ and $\gamma^{\pm}_{k\ell}$
are in the parameter lists as defined in \eqref{eq:Lamml}, we know, from Theorem~\ref{thm:genConv},
these converge to limits $\alphabar^{\pm}_{k\ell}$ and $\gammabar^{\pm}_{k\ell}$.
Let $\etabar^{\pm}_{k\ell} = \gammabar^{\pm}_{k\ell}/\alphabar^{\pm}_{k\ell}$.
We need to show that these limiting random variables and constants agree with
the distributions as described in Theorem~\ref{thm:semlgen}.

First observe that lines in the ``Initial pass" section of the Gen-ML SE (Algorithm~\ref{algo:gen_se})
exactly matches the corresponding lines of ML-VAMP SE in Algorithm~\ref{algo:mlvamp_se} when we use the functions
as defined in \eqref{eq:f0lin} and \eqref{eq:f0nonlin}.  So, we have
that for all $\ell$,
\beq \label{eq:q00init_pf}
    Q^0_0 = f^0_0(W_0), \quad P_0 \sim \Norm(0,\tau^0_0),
\eeq
and, for $\ell=1,\ldots,\Lm1$,
\beq \label{eq:q0init_pf}
    Q^0_\ell=f^0_\ell(P^0_{\lm1},W_\ell), \quad
    P^0_\ell = \Norm(0,\tau^0_\ell), \quad
    \tau^0_\ell = \Exp(Q^0_\ell)^2.
\eeq

We next proceed with an induction argument.
Similar to the proof of Theorem~\ref{thm:genConv}, define the following sequence of hypotheses:
\begin{itemize}
\item $\mathcal{H}_{k\ell}^+$:  The hypothesis that Theorem~\ref{thm:semlgen}(a)
is true for some $k$ and $\ell$.
\item $\mathcal{H}_{k\ell}^-$:  The hypothesis that Theorem~\ref{thm:semlgen}(b)
is true for some $k$ and $\ell$.
\end{itemize}
We can prove the hypotheses in the sequence \eqref{eq:induc}. We illustrate how to prove
the implication $\mathcal{H}^+_{k\ell} \Rightarrow \mathcal{H}^+_{k,\lp1}$.  The proof
of the other implications are similar.  Thus, we assume that all the hypotheses prior
to $\mathcal{H}^+_{k,\lp1}$ in the sequence \eqref{eq:induc} are true.  We will show that, under
this assumption $\mathcal{H}^+_{k,\lp1}$ is true.

Now, proving hypothesis $\mathcal{H}^+_{k,\lp1}$ is equivalent to showing that, if we define,
\beq  \label{eq:Rdefpf}
    R^+_{k,\ell} := P^0_\ell + P^+_{k,\ell}, \quad
    R^-_{k,\lp1} := Q^0_{\lp1} + Q^-_{k,\lp1},
\eeq
then
\begin{subequations} \label{eq:gaminduc}
\begin{align}
	&R^+_{k,\ell} \sim \Norm\left(0,\tau^0_{\ell}-\frac{1}{\gamma^+_{k\ell}}\right), \quad
     P^+_{k\ell} \sim \Norm\left(0,\frac{1}{\gamma^+_{k\ell}}\right), \label{eq:gamind_rp} \\
    & Q^0_{\lp1} = f^0_{\lp1}(P^0_{\ell},W_{\lp1})  \label{eq:gamind_q0}  \\
    & Q^-_{\lp1} \sim \Norm\left(0,\frac{1}{\gamma^-_{k,\lp1}}\right)    \label{eq:gamind_qn}    \\
    & \hat{Q}^+_{k,\lp1} = h^+_{k,\lp1}(R^+_{k\ell},R^-_{k,\lp1},W_{\lp1},\gammabar^+_{k\ell},\gammabar^-_{k,\lp1})  \label{eq:gamind_qhat} \\
    & Q^+_{k,\lp1} = \frac{1}{1-\alphabar^+_{k,\lp1}} \left[ \hat{Q}^+_{k,\lp1} - Q^0_{\lp1} - \alphabar^+_{k,\lp1} Q^-_{\lp1} \right],
         \label{eq:gamind_qp}
\end{align}
\end{subequations}
where $R^+_{k,\ell}$ and $P^+_{k\ell}$ are independent of one another and $Q^-_{k,\lp1}$ is independent of
$P^0_\ell, P^+_{k\ell}$ and $W_{\lp1}$.
We also need to show that
\beq \label{eq:etagamind}
    \etabar_{k,\lp1}^+ = \frac{1}{\Ecal_{\lp1}(\gammabar^+_{k\ell},\gammabar^-_{k,\lp1})}, \quad
    \alphabar_{k,\lp1}^+ = \frac{\gammabar^-_{k,\lp1}}{\etabar_{k,\lp1}^+}, \quad
    \gammabar_{k,\lp1}^+ = \etabar_{k,\lp1}^+ - \gammabar^-_{k,\lp1}.
\eeq
We will prove all the items in \eqref{eq:gaminduc} and \eqref{eq:etagamind} under the induction hypothesis.

We begin with proving \eqref{eq:gamind_rp}.
Since $\mathcal{H}^+_{k\ell}$ is prior to $\mathcal{H}^+_{k,\lp1}$ in the sequence \eqref{eq:induc},
we can assume it is true.
Under this assumption, we first evaluate various covariance terms.  If $\ell$ is odd,
\begin{align}
    \MoveEqLeft \Exp\left[ (\hat{Q}^+_{k\ell}-Q^0_{\ell})Q^0_\ell \right]
    \stackrel{(a)}{=} \Exp\left[ (\Exp(Q^0_{\ell} |R^+_{k,\lm1},R^-_{k\ell})-Q^0_\ell) Q^0_{\ell} \right]  \nonumber \\
    &\stackrel{(b)}{=} -\var(Q^0_\ell|R^+_{k,\lm1},R^-_{k\ell}) \stackrel{(c)}{=} -\frac{1}{\etabar^+_{k\ell}}, \label{eq:covqhatq0}
\end{align}
where (a) follows from the model \eqref{eq:pqmeannl};
(b) is the law of iterated expectations; and
(c) follows from the fact that $\etabar^+_{k\ell} = 1/\Ecal_{\ell}^+(\gammabar^+_{k,\lm1},\gammabar^-_{k\ell})$
as defined in \eqref{eq:Ecalnl}.  A similar argument shows that \eqref{eq:covqhatq0} holds for $\ell$ even.
Also, since $Q^-_{k\ell}$ is independent of $Q^0_\ell$, we have that
$\Exp(Q^-_{k\ell} Q^0_\ell) = 0$.  Therefore,
\begin{align}
    \MoveEqLeft \Exp\left[ Q^+_{k\ell}Q^0_\ell \right]
     \stackrel{(a)}{=}
    \frac{1}{1-\alphabar^+_{k\ell}} \left[ \Exp((\hat{Q}^+_{k\ell} - Q^0_\ell)Q^0_\ell) - \alphabar^+_{k\ell} \Exp(Q^-_{k\ell} Q^0_\ell) \right] \nonumber \\
    &\stackrel{(b)}{=}  -\frac{1}{(1-\alphabar^+_{k\ell})\etabar^+_{k\ell}} \stackrel{(c)}{=} -\frac{1}{\gammabar^+_{k\ell}}, \label{eq:covqpq0},
\end{align}
where (a) follows from the definition of $Q^+_{k\ell}$ in \eqref{eq:hpvar};
(b) used \eqref{eq:covqhatq0}; and
(c) used the fact the definition of $\gammabar^+_{k\ell}$ in line~\ref{line:gamp_mlse} in
Algorithm~\ref{algo:mlvamp_se}.  We next consider the correlation $\Exp(Q^+_{k\ell})^2$.  Using \eqref{eq:hpvar},
\beq \label{eq:covqpsq1}
    \Exp(Q^+_{k\ell})^2
    =
    \frac{1}{(1-\alphabar^+_{k\ell})^2} \left[ \Exp(\hat{Q}^+_{k\ell} - Q^0_\ell)^2 - 2\alphabar^+_{k\ell} \Exp((\hat{Q}^+_\ell - Q^0_\ell)Q^-_{k\ell})
        + (\alphabar^+_{k\ell})^2 \Exp(Q^-_{k\ell})^2 \right].
\eeq
Now, using Stein's Lemma similar as in \eqref{eq:Qijstein}, one can show
\beq
    \Exp((\hat{Q}^+_{k\ell} - Q^0_\ell)Q^-_{k\ell}) = \alphabar^+_{k\ell} \Exp(Q^-_{k\ell})^2
\eeq
Substituting this into \eqref{eq:covqpsq1}, we obtain
\begin{align}
    \MoveEqLeft \Exp(Q^+_{k\ell})^2
    =  \frac{1}{(1-\alphabar^+_{k\ell})^2} \left[ \Exp(\hat{Q}^+_{k\ell} - Q^0_\ell)^2 - (\alphabar^+_{k\ell})^2 \Exp(Q^-_{k\ell})^2 \right] \nonumber \\
    &= \frac{1}{(1-\alphabar^+_{k\ell})^2} \left[ \frac{1}{\etabar^+_{k\ell}} - \frac{(\alphabar^+_{k\ell})^2}{\gammabar^-_{k\ell}} \right]
    = \frac{1}{\gammabar^+_{k\ell}}, \label{eq:covqpsq2}
\end{align}
Equations \eqref{eq:covqpq0} and \eqref{eq:covqpsq2} show that
\[
    \Cov(Q^0_\ell,Q^+_{k\ell}) = \left[ \begin{array}{cc}
        \tau^0_\ell & -1/\gammabar^+_{k\ell} \\ -1/\gammabar^+_{k\ell} & 1/\gammabar^+_{k\ell} \end{array} \right].
\]
Now, from the SE equations in Algorithm~\ref{algo:gen_se}, we know $\Cov(P^0_\ell,P^+_{k\ell}) = \Cov(Q^0_\ell,Q^+_{k\ell})$ and hence
\[
    \Cov(P^0_\ell,P^+_{k\ell}) = \left[ \begin{array}{cc}
        \tau^0_\ell & -1/\gammabar^+_{k\ell} \\ -1/\gammabar^+_{k\ell} & 1/\gammabar^+_{k\ell} \end{array} \right].
\]
Since $R^+_{k\ell} = P^0_\ell + P^+_{k\ell}$, we have
\[
    \Cov(R^+_{k\ell},P^+_{k\ell}) = \left[ \begin{array}{cc}
        \tau^0_\ell-1/\gammabar^+_{k\ell}  &  0 \\ 0 & 1/\gammabar^+_{k\ell} \end{array} \right].
\]
This proves \eqref{eq:gamind_rp}.  Equation \eqref{eq:gamind_q0} holds since we have already proven this in \eqref{eq:q0init_pf}.
Equation \eqref{eq:gamind_qn} and the independence of $Q^-_{k\lp1}$ with the other variables follows from
Theorem~\ref{thm:genConv}.  Equation \eqref{eq:gamind_qhat} follows from the definition of $\qbfhat^+_{k\ell}$ in \eqref{eq:pqdef}
and the fact that $\zbfhat^+_{k\ell}$ and $\Vbf_\ell\tran\zbfhat^+_{k\ell}$ are given by \eqref{eq:zhnonlin} and \eqref{eq:zhlin}.
Finally, \eqref{eq:gamind_qp} follows from the \eqref{eq:fhdef}.  Thus, we have established all the necessary relations in
\eqref{eq:gaminduc}.

For \eqref{eq:etagamind}, first observe that since $\eta^+_{k,\lp1}$ is given by \eqref{eq:gderiv},
it can be shown that the limit $\etabar^+_{k,\lp1} = 1/\Ecal_{\lp1}(\gammabar^+_{k\ell},\gammabar^-_{k,\lp1})$.
Also, taking the limits of line~\ref{line:gamp} in Algorithm~\ref{algo:ml-vamp}, we obtain that
$\alphabar_{k,\lp1}^+ = \gammabar^-_{k,\lp1}/\etabar_{k,\lp1}^+$ and $\gammabar_{k,\lp1}^+ = \etabar_{k,\lp1}^+ - \gammabar^-_{k,\lp1}$.
Thus, we have proven \eqref{eq:etagamind} and with it we have proven the induction step.  This proves Theorem~\ref{thm:semlgen}.

\section{Numerical Experiments Details} \label{sec:simdetails}

\paragraph*{Synthetic random network}  As described in Section~\ref{sec:sim},
the network input is a $N_0=20$ dimensional Gaussian unit noise vector $\zbf_0$.
and has three hidden layers with 100, 500 and 784 units.  For the weight matrices and bias vectors
in all but the final layer,
we took $\Wbf_\ell$ and $\bbf_\ell$ to be random i.i.d.\ Gaussians.  The mean of the bias vector was selected
so that only a fixed fraction, $\rho=0.4$, of the linear outputs would be positive.  The activation
functions were rectified linear units (ReLUs), $\phi_\ell(z)=\max\{0,x\}$.  Hence, after activation,
there would be only a fraction $\rho=0.4$ of the units would be non-zero.  In the final layer,
we constructed the matrix similar to \citep{RanSchFle:14-ISIT} where $\Abf=\Ubf\diag(\sbf)\Vbf\tran$,
with $\Ubf$ and $\Vbf$ be random orthogonal matrices and $\sbf$ be logarithmically spaced valued
to obtain a desired condition number of $\kappa=10$.  It is known from \citep{RanSchFle:14-ISIT}
that matrices with high condition numbers are precisely the matrices in which AMP algorithms fail.
For the linear measurements,
$\ybf = \Abf\zbf_5+\wbf$, the noise level $10\log_{10}( \Exp\|\wbf\|^2/\|\Abf\zbf_5\|^2)$ is set at 30 dB.
In Fig.~\ref{fig:randmlp_sim}, we have plotted the normalized MSE (in dB) which we define as
\[
    \mathrm{NMSE} := 10\log_{10}\left[ \frac{\|\zbf_0-\hat{\zbf}_0\|^2}{\|\zbf_0\|^2} \right].
\]
Since each iteration of ML-VAMP involves a forward and reverse pass, we say that each iteration
consists of two ``half-iterations", using the same terminology as turbo codes.  The left panel
of Fig.~\ref{fig:randmlp_sim} plots the NMSE vs. half iterations.

\paragraph*{MNIST inpainting}  The well-known MNIST dataset consists of handwritten images
of size $28 \x 28 = 784$ pixels. We followed the procedure in \citep{kingma2013auto} for training
a generative model from 50,000 digits.
Each image $\xbf$ is modeled as the output of a neural network
input dimension of 20 variables followed by
a single hidden layer with 400 units and an output layer of 784 units,
corresponding to the dimension of the digits.
ReLUs were used for activation functions and a sigmoid was placed at the output
to bound the final pixel values between 0 and 1.
The inputs $\zbf_0$ were the modeled as zero mean Gaussians with unit variance.
The data was trained using the Adam optimizer
with the default parameters in TensorFlow
\footnote{Code for the training was based on
\url{https://github.com/y0ast/VAE-TensorFlow} by Joost van Amersfoort.}
The training optimization was run
with 20,000 steps with a batch size of 100 corresponding to 40 epochs.

The ML-VAMP algorithm was compared against two other standard inference methods:
MAP and stochastic gradient Langevin dyanmics (SGLD).
To describe both methods,
let $\zbf_0$ be the 20-dimensional unknown input to the neural network
that generates the image $\xbf$ and let $\ybf$ be the occluded image.
Define the Hamiltonian
\beq \label{eq:Hmnist}
    H(\zbf_0) := -\ln p(\ybf|\zbf_0) - \ln p(\zbf_0),
\eeq
so that the posterior density of $\zbf_0$ given $\ybf$ is given by
$p(\zbf_0|\ybf) \propto \exp(-H(\zbf_0))$.  MAP estimation,
as studied in \citep{yeh2016semantic,bora2017compressed},
can be performed via numerical minimization of the Hamiltonian $H(\zbf_0)$.
Given the estimated input $\zbf_0$, one can then estimate the
image $\xbf$ by running $\zbf_0$ through the neural network.
We used Tensorflow for the minimization.  We found the fastest
convergence with the Adam optimizer at a step-size of 0.01.  This required
only 500 iterations to be within 1\% of the final loss function.

Reconstruction was also be performed via SGLD
as described in detail in \citep{welling2011bayesian}.
SGLD is a classic technique to generate samples $\zbf_0^k$ from the
$p(\zbf_0|\ybf)$.  For each sample, one can compute a predicted image, $\xbf^k$, and
then average these predicted images to estimate the posterior mean image $\Exp(\xbf|\ybf)$.
SGLD generates the samples $\zbf_0^k$ via a sequence of updates,
\beq \label{eq:sgld}
    \zbf_0^{\kp1} = \zbf^k_0 - \lambda_k \nabla H(\zbf_0^k) + \sqrt{2\lambda_k}\wbf^k,
    \quad \wbf^k \sim \Norm(0,\Ibf),
\eeq
where the noise sequence $\wbf^k$ is i.i.d.\ and $\lambda_k$ is a step-size.
If the step size $\lambda_k$ is sufficiently small, then it can be shown
\citep{welling2011bayesian} that,
under suitable conditions, the steady-state density of the
samples $\zbf^k_0$ approximates the posterior density $p(\zbf_0|\ybf)$.

In this simulation, we performed the update \eqref{eq:sgld} by
using Tensorflow's Stochastic Gradient Descent optimizer and adding random Gaussian noise.
We took the step-size of $\lambda_k = 0.002$ and generated 10000 samples.
The first 5000 samples were used for burn-in
so that posterior mean was then estimated from the final 5000 samples.

Finally, for ML-VAMP, the sigmoid
function does not have an analytic denoiser, so it was approximated with a probit output.
The ML-VAMP algorithm was run for 20 iterations, although good convergence was observed after
approximately 10 iterations.  

\bibliographystyle{IEEEtran}
\bibliography{../bibl}

\end{document}